\def\isarxiv{1}
\def\paperTitle{Training Tensor Attention Efficiently: From Cubic to Almost Linear Time}
\def\paperRTitle{\paperTitle} 
\def\paperAuthor{
Yang Cao\thanks{Wyoming Seminary.}
\and
Yingyu Liang\thanks{The University of Hong Kong.}
\and
Zhenmei Shi\thanks{University of Wisconsin-Madison.}
\and
Zhao Song\thanks{\texttt{magic.linuxkde@gmail.com}. Simons Institute for the Theory of Computing, University of California, Berkeley.}
}
\let\C\relax
\theoremstyle{plain}
\newtheorem{theorem}{Theorem}[section]
\newtheorem{lemma}[theorem]{Lemma}
\newtheorem{definition}[theorem]{Definition}
\newtheorem{fact}[theorem]{Fact}
\newtheorem{remark}[theorem]{Remark}
\newtheorem{claim}[theorem]{Claim}
\newtheorem{hypothesis}[theorem]{Hypothesis}
\newcommand{\wt}{\widetilde}
\newcommand{\R}{\mathbb{R}}
\renewcommand{\d}{\mathrm{d}}
\renewcommand{\tilde}{\wt}
\newcommand{\Tmat}{{\cal T}_{\mathrm{mat}}}
\newcommand{\A}{\mathsf{A}}
\newcommand{\X}{\mathsf{X}}
\newcommand{\Y}{\mathsf{Y}}
\newcommand{\F}{\mathsf{S}} 
\newcommand{\U}{\mathsf{K}}
\newcommand{\C}{\mathsf{V}}
\renewcommand{\H}{\mathsf{L}} 
\newcommand{\Q}{\mathsf{W}}
\renewcommand{\P}{\mathsf{F}}
\newcommand{\Loss}{\mathsf{Loss}} 
\DeclareMathOperator*{\Z}{\mathbb{Z}}
\DeclareMathOperator{\poly}{poly}
\DeclareMathOperator{\diag}{diag}
\DeclareMathOperator{\vect}{vec}
\DeclareMathOperator{\tr}{tr}
\icmltitlerunning{\paperRTitle}
\begin{document}

\ifdefined\isarxiv

\date{}
\title{\paperTitle}
\author{\paperAuthor}

\else

\twocolumn[
  \icmltitle{\paperTitle}


  \icmlsetsymbol{equal}{*}

  \begin{icmlauthorlist}
    \icmlauthor{Firstname1 Lastname1}{equal,yyy}
    \icmlauthor{Firstname2 Lastname2}{equal,yyy,comp}
    \icmlauthor{Firstname3 Lastname3}{comp}
    \icmlauthor{Firstname4 Lastname4}{sch}
    \icmlauthor{Firstname5 Lastname5}{yyy}
    \icmlauthor{Firstname6 Lastname6}{sch,yyy,comp}
    \icmlauthor{Firstname7 Lastname7}{comp}
    \icmlauthor{Firstname8 Lastname8}{sch}
    \icmlauthor{Firstname8 Lastname8}{yyy,comp}
  \end{icmlauthorlist}

  \icmlaffiliation{yyy}{Department of XXX, University of YYY, Location, Country}
  \icmlaffiliation{comp}{Company Name, Location, Country}
  \icmlaffiliation{sch}{School of ZZZ, Institute of WWW, Location, Country}

  \icmlcorrespondingauthor{Firstname1 Lastname1}{first1.last1@xxx.edu}
  \icmlcorrespondingauthor{Firstname2 Lastname2}{first2.last2@www.uk}

  \icmlkeywords{Machine Learning, ICML}

  \vskip 0.3in
]

\printAffiliationsAndNotice{} 

\fi

\ifdefined\isarxiv
\begin{titlepage}
  \maketitle
  \begin{abstract}
    Tensor Attention, a multi-view attention that is able to capture high-order correlations among multiple modalities, can overcome the representational limitations of classical matrix attention. However, the $O(n^3)$ time complexity of tensor attention poses a significant obstacle to its utilization in transformers, where $n$ is the input sequence length. In this work, we prove that the backward gradient of tensor attention training can be computed in almost linear time $n^{1+o(1)}$, the same complexity as its forward computation under the bounded entries assumption. We provide a closed-form solution for the gradient and propose a fast computation method utilizing polynomial approximation methods and tensor algebraic techniques. Furthermore, we prove the necessity and tightness of our assumption through hardness analysis, showing that slightly weakening it renders the gradient problem unsolvable in truly subcubic time.
Our theoretical results establish the feasibility of efficient higher-order transformer training and may facilitate practical applications of tensor attention architectures. 

  \end{abstract}
  \thispagestyle{empty}
\end{titlepage}

{\hypersetup{linkcolor=black}
\tableofcontents
}
\newpage

\else

\begin{abstract}

\end{abstract}

\fi


\section{Introduction}\label{sec:intro}

The generative large language models (LLMs), such as Mistral 3.1 \citep{mistral3_1}, Llama 4 \citep{llama4}, Gemma 3 \citep{gemma3}, GPT-4o \citep{gpt4o},  Claude 3.7 \citep{claude3_7}, 
Grok 3 \citep{grok3}, Qwen 3~\cite{qwen3}, DeepSeek R1 \cite{deepseek_r1}, and many more have been widely involved in people's living and work in these two years, such as bio-informatics \citep{tte+23}, coding \citep{hzl+24}, education \citep{ksk+23}, finance \citep{lwdc23}, law \citep{sun23}, and even writing NeurIPS conference reviews \citep{liz+24}. 
The success of LLMs is based on the transformer architecture introduced by \cite{vsp+17}, which also has been introduced into other modalities \citep{vit}, such as vision-language models, e.g., CLIP \citep{clip}, Flamingo \citep{flamingo},  LLaMA-Adapter \citep{llama-adapter,llama-adapter2}, LLava \citep{llava,llava15}, BLIP \citep{blip,blip2}, MiniGPT-4 \citep{minigpt4}, Qwen \citep{qwen,qwenvl}, Gemini \citep{gemini}, MM1 \citep{mm1}. 

The above open-sourced large models use two-view matrix attention, i.e., each attention score/entry is related to two tokens (one query token and one key token) to capture the data correlation. 
More specifically, let $Z$ be hidden representations and $Q=ZW_Q,K=ZW_K,V=Z W_V$ be the corresponding query, key, and value matrices after projections using weights $W_Q, W_K, W_V$, respectively. 
Then, the classical/matrix attention head can be written as $\mathsf{Att}(Z) = \mathsf{Softmax}(QK^\top)V$.

On the other hand, many studies find that multi-view is crucial for high-order correlation in various kinds of data, e.g., math \citep{sht24}, graph \citep{dlg+22,lyh+23}, and multi-modality \citep{laj15}. 
For example, recently, OpenAI released GPT-4o \citep{gpt4o}, and Google released Project Astra \citep{astra}, two flagship multi-modality models that aim to reason across three views, i.e., audio, vision, and text in real-time. 

However, \cite{sht24} theoretically and empirically shows that classical attention can capture pairwise correlation but not triple-wise correlation due to the representational limitations of matrix attention.~\cite{sht24} introduces a triple detection task, demonstrating that classical attention layers have complexity scaling linearly with the input size, while high-order attention can efficiently perform this computation within a single head. 

In other words, one classical matrix attention head ``cannot'' capture the information relevant to multi-views simultaneously unless using multiple layers with careful architecture design. 
This poses a fundamental technical obstacle for multi-modality models to efficiently fuse multiple representations/views to capture the high-order correlation among them, e.g., the high-order correlations among multi-modalities such as audio, text, and images.

\begin{table}[!ht]
\caption{Comparison to previous works.}
\centering
\begin{tabular}{ccc } \hline
{\bf Reference} & {\bf For/Backward} & {\bf Matrix/Tensor} \\  \hline
\cite{zhdk23} & Forward & Matrix \\ 
\cite{as23} & Forward & Matrix \\  
\cite{hjk+23} & Forward & Matrix \\ 
\cite{as24_arxiv} & Backward & Matrix \\ \hline
\cite{as24_iclr} & Forward & Tensor \\ 
Ours (Theorem~\ref{thm:mainalg:informal}) & Backward & Tensor \\ \hline
\end{tabular}
\label{tab:previous}
\end{table}

To fundamentally solve the above obstacle, \cite{sht24} and \cite{as24_iclr} introduce {\it Tensor Attention}, which is a higher-order generalization of matrix attention that can capture high-order/multi-view information intrinsically. More specifically, it is defined as $\mathsf{Softmax}(Q (K_1 \oslash K_2)^\top)(V_1 \oslash V_2)$ (Definition~\ref{def:tensor_att}, and illustrated in Figure~\ref{fig:tensor}), where $\oslash$ is column-wise Kronecker product, and $Q$, $K_1/V_1$, $K_2/V_2$ can be from different views/modalities. 
However, to implement Tensor Attention practically, we must overcome the complexity bottleneck. 
Let the input token length be $n$, then the forward and backward time complexity of tensor attention will be $O(n^3)$ as $Q (K_1 \oslash K_2)^\top \in \R^{n \times n^2}$ \citep{mzz+19}, while the time complexity of matrix attention is $O(n^2)$ only as $Q K^\top \in \R^{n \times n}$ \citep{kwh23}.
For example, the input length of Llama2 \citep{tms+23} is 4096. So, intuitively, if we put tensor attention in Llama2, the input length will reduce to 256 to keep the same complexity in running speed and memory consumption. 
 
\begin{figure*}[!ht]
    \centering
    \includegraphics[width=0.7\linewidth]{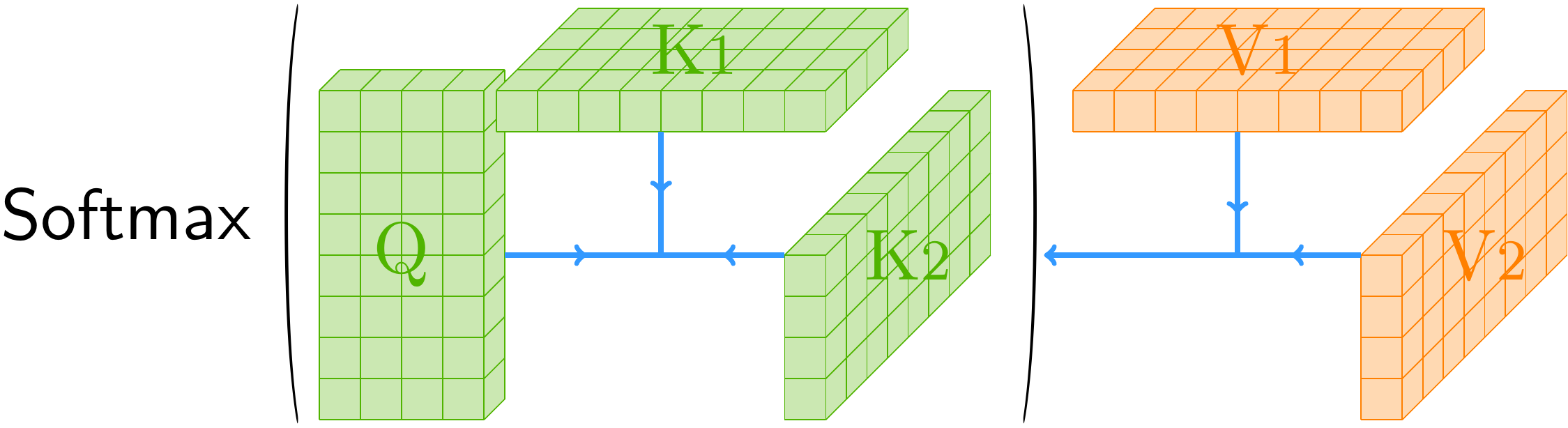}
    \caption{The visualization of tensor attention with $\mathsf{Softmax}$ activation function (Definition~\ref{def:tensor_att}). We give an example of input token length $n=8$, feature dimension $ d=4$. 
    }
    \label{fig:tensor}
\end{figure*}

There are several recent works to overcome the time complexity bottleneck above, e.g., $O(n^2)$ for matrix attention and $O(n^3)$ for tensor attention. 
\cite{zhdk23} accelerate matrix attention forward via kernel density estimation and get truly sub-quadratic time running time. 
\cite{as23} uses the polynomial approximation method to map the matrix attention into low-rank matrices during forward computation, leading to an almost linear time complexity $n^{1+o(1)}$ when entries are bounded. 
Similarly, under sparsity assumptions, \cite{hjk+23} achieves nearly linear time computation for matrix attention forward by identifying the larger entries in the attention matrix.
On the one hand, with fine-grained analysis, \cite{as24_arxiv} proposes a new backward algorithm to compute the gradient of matrix attention in almost linear time complexity $n^{1+o(1)}$ as well, under the same bounded entry assumption. 
On the other hand, \cite{as24_iclr} surprisingly finds that the \emph{forward} computation of tensor attention can also be achieved in almost linear time $n^{1+o(1)}$ rather than almost quadratic time $n^{2+o(1)}$, under similar assumptions as \cite{as23}. See a summary in Table~\ref{tab:previous}.
Thus, it is natural to ask, 

\begin{center}
   {\it Can we achieve almost linear time for gradient computation in Tensor Attention Training? }
\end{center}

We provide a positive answer in this work. Under the same bounded entries assumption as \cite{as24_iclr}, we propose Algorithm~\ref{alg:main} to fast compute the backward gradient of Tensor Attention Training in almost linear time $n^{1+o(1)}$ as its forward computation. Thus, our results may make the tensor attention practical, as we can get around the $O(n^3)$ complexity barrier both in its forward and backward computation. 
{\bf Our contributions are summarized as follows:} 
\begin{itemize}
    \item Under fine-grained analysis, we give the closed-form solution of the gradient computation of tensor attention (Lemma~\ref{lem:compute_gradient_informal}) and its time complexity without acceleration (Theorem~\ref{thm:gradient:informal}). 
    \item Based on the closed-form solution, by utilizing polynomial approximation methods and tensor computation techniques, we propose Algorithm~\ref{alg:main} to fast compute the backward gradient of tensor attention training in almost linear time as its forward computation (Theorem~\ref{thm:mainalg:informal}).
    \item Furthermore, we prove that our assumption is necessary and ``tight'' by hardness analysis, i.e., if we slightly weaken the assumption, there is no algorithm that can solve the tensor attention gradient computation in truly sub-cubic complexity (Theorem~\ref{thm:mainlb:informal}). 
\end{itemize}

{\bf Roadmap.}
In Section~\ref{sec:preli}, we introduce the notations, several useful definitions, and our loss function.
In Section~\ref{sec:exact_tensor_attention}, we give the closed form of the gradient of our loss function, and also its computational time complexity.
In Section~\ref{sec:fast_gradient}, we prove that we can compute the gradient in almost linear time.
In Section~\ref{sec:tensor_attention_lower_bound}, we provide the hardness analysis.
In Section~\ref{sec:conclusion}, we give the conclusion of our paper. 
\section{Preliminary}\label{sec:preli}
In this section, we first provide the notations we use. In Section~\ref{sec:preli:def_tensor}, we provide general definitions related to tensor operation. In Section~\ref{sec:preli:def_tensor_key}, we provide key definitions that we utilize in this paper.

  \begin{figure}[!ht]
  \centering
  \includegraphics[width=0.45\textwidth]{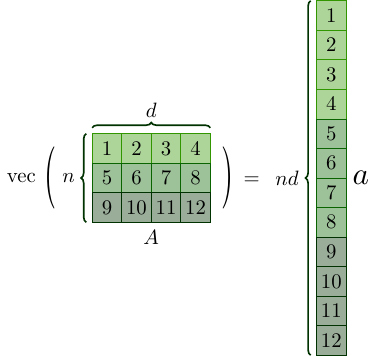}
  \caption{The visualization of vectorization operator $\vect(\cdot)$, which stacks rows of a matrix $A\in \R^{n \times d}$ into a column vector $a \in \R^{nd}$. In this figure, we give an example of $n=3,d=4$. The components of $A$ and $a$ are also given for easier understanding.}\label{fig:vec_visual}
\end{figure}

\paragraph{Basic notations.}
We use $[n]$ to denote $\{1,2,\dots,n\}$.
We use $e_i$ to denote a column vector where only $i$-th location is $1$ and zeros everywhere else.
We denote an all $1$ vector using ${\bf 1}_{n} \in \R^n$ .
We use $\langle a,b \rangle$ to denote the inner product of $a,b \in \R^d$ i.e. $\langle a,b \rangle := \sum_{i=1}^d a_i b_i$.
We use $\|x\|_p$ to denote the $\ell_p$ norm of a vector $x \in \R^n$, i.e. $\|x\|_2 := (\sum_{i=1}^n x_i^2)^{1/2}$, and $\|x\|_{\infty} := \max_{i \in [n]} |x_i|$. We use $\circ$ to denote the Hadamard product, i.e., the $(i,j)$-entry of $A \circ B$ is $A_{i,j} B_{i,j}$.
We use $\tr[A]:=\sum_{i=1}^n A_{i,i}$ to denote the trace of a matrix $A\in \R^{n \times n}$.
We use $\exp(A)$ to denote a matrix where $\exp(A)_{i,j} := \exp(A_{i,j})$ for a matrix $A \in \R^{n \times d}$.
We use $\|A\|_{\infty}$ to denote the $\ell_{\infty}$ norm of a matrix $A \in \R^{n \times d}$, i.e. $\|A\|_{\infty} := \max_{i \in [n], j \in [d]} |A_{i,j}|$.
We use $\|A\|_F$ to denote the Frobenius norm of a matrix $A \in \R^{n \times d}$, i.e. $\|A\|_F := \sqrt{\sum_{i \in [n]} \sum_{j \in [d]} |A_{i,j}|^2}$.
We use $\poly(n)$ to denote polynomial time complexity w.r.t. $n$. 

\paragraph{Tensor related notations.}
Let $A \in \R^{n \times d}$. We use $a :=\vect(A)$ to denote the length $nd$ vector obtained by stacking rows of $A$ into a column vector, i.e. $\vect(A) := [a_1^\top, a_2^\top, \dots, a_n^\top]^\top$ where $a_i^\top$ is the $i$-th row of $A$, or simply $\vect(A)_{j + (i-1)d} := A_{i,j}$ for any $i\in [n], j\in [d]$, visualized in Fig.~\ref{fig:vec_visual}.
Let $I_d \in \R^{d \times d}$ denote the identity matrix. Let $\mathsf{I}_d \in \R^{d \times d \times d}$ denote the identity tensor, i.e., the diagonal entries are $1$ and zeros everywhere else. 
Let $X \in \R^{d \times d^2}$. Let $x \in \R^{d^3}$ denote the vectorization of $X \in \R^{d \times d^2}$. 
Let $\X \in \R^{d \times d \times d}$ be the tensorization of $X \in \R^{d \times d^2}$,  
where $\mathsf{X}_{a,b,c} = X_{a, (b-1)d + c}$ for any $a,b,c \in [d]$.
We define the corresponding function $\mathrm{mat}: \R^{d \times d \times d} \to \R^{d\times d^2}$ as $X = \mathrm{mat}(\mathsf{X})$ where $X_{a, (b-1)d + c} = \mathsf{X}_{a,b,c}$ for any $a,b,c \in [d]$.

\subsection{Definition of Tensor Operations}\label{sec:preli:def_tensor}

We define some operations like the Kronecker product, which is a matrix operation applied to two matrices of any size, producing a block matrix. 
It is different from regular matrix multiplication and will be useful for our introduction and analysis of tensor attention.
\begin{definition}[$\otimes$ Kronecker product]\label{def:tensor_otimes}
Given $K_1 \in \R^{n_1 \times d_1} $ and $K_2 \in \R^{n_2 \times d_2}$, for any $i_1\in [n_1], i_2 \in [n_2], j_1 \in [d_1], j_2 \in [d_2]$, we define the matrix $K := K_1 \otimes K_2 \in \R^{n_1 n_2 \times d_1 d_2}$ as 
\begin{align*}
    K_{i_1 + (i_2-1)n_1 , j_1 + (j_2-1)d_1} = (K_1)_{i_1,j_1} \cdot (K_2)_{i_2,j_2}.
\end{align*}
\end{definition}

In this work, we will primarily use the following column-wise and row-wise versions of the Kronecker product, which are special kinds of Kronecker products.
\begin{definition}[$\oslash$ column-wise Kronecker product, also known as Kathri-Rao product]\label{def:tensor_oslash}
Given matrices $K_1 \in \R^{n_1 \times d}, K_2 \in \R^{n_2 \times d}$, we define the matrix $K := K_1 \oslash K_2 \in \R^{n_1 n_2 \times d}$ as follows
\begin{align*}
    K_{i_1 + (i_2-1)n_1 , j } := (K_1)_{i_1,j} \cdot (K_2)_{i_2,j},
\end{align*}
$\forall i_1 \in [n_1], i_2 \in [n_2], j \in [d].$
\end{definition}

\begin{definition}[$\ominus$ row-wise Kronecker product, also referred to as the face-splitting product]\label{def:tensor_ominus}
Given matrices $K_1 \in \R^{n \times d_1}, K_2 \in \R^{n \times d_2}$, we define the matrix $K := K_1 \ominus K_2 \in \R^{n \times d_1 d_2}$   
as follows
\begin{align*}
    K_{i , j_1 + (j_2-1) d_1 } := (K_1)_{i,j_1} \cdot (K_2)_{i,j_2}, 
\end{align*}
$\forall i \in [n], j_1 \in [d_1], j_2 \in [d_2]$.
\end{definition}

\subsection{Key Definitions of Tensor Attention}\label{sec:preli:def_tensor_key}

Now, we are ready to introduce the tensor attention. First, we introduce the parameters and input.

\begin{definition}[Input and weight matrix]\label{def:input}
    We define the input sequence as $Z \in \R^{n\times d}$ and the key, query, and value weight matrix as $W_{K_1}, W_{K_2}, W_Q, W_{V_1}, W_{V_2} \in \R^{d\times d}$. Then, we define the key, query, and value matrix as $K_1:=Z W_{K_1} \in  \R^{n \times d}$, $K_2:=Z W_{K_2} \in  \R^{n \times d}$, $Q:=Z W_Q \in  \R^{n \times d}$, $V_1:=Z W_{V_1} \in  \R^{n \times d}, V_2:=Z W_{V_2} \in  \R^{n \times d}$. 
\end{definition}

Then, based on the Kronecker product, we define tensor attention in the following way. 

\begin{definition}[Tensor attention, Definition 7 in~\cite{sht24}, Definition 1.1 in~\cite{as24_iclr}]\label{def:tensor_att}
Given input matrices $Q, K_1, K_2, $ $V_1, $ $V_2 \in \R^{n \times d}$, compute the matrix 
\begin{align*}
    \underbrace{D^{-1}}_{n \times n} \underbrace{A}_{n \times n^2} \underbrace{V}_{n^2 \times d} \in \R^{n \times d},
\end{align*}
where (1) $A := \exp(Q K^\top /d) \in \R^{n \times n^2}$ and $K := K_1 \oslash K_2 \in \R^{n^2 \times d}$, (2)  $D := \diag(A {\bf 1}_{n^2}) \in \R^{n \times n}$, and (3) $V := V_1 \oslash V_2 \in \R^{n^2 \times d}$.
\end{definition}

\begin{remark}
In Definition~\ref{def:tensor_att}, on the one hand, we separate the $\mathsf{Softmax}$ operation into an element-wise $\exp$ operation and a diagonal normalization matrix $D$ for a more transparent formulation. 
On the other hand, we change $K,V \in \R^{n\times d}$ in classical attention to $K_1 \oslash K_2, V_1 \oslash V_2 \in \R^{n^2 \times d}$ in tensor attention, where $\oslash$ is column-wise Kronecker product defined in Definition~\ref{def:tensor_oslash}. 
\end{remark}

Our Definition~\ref{def:tensor_att} covers the self-attention setting when the query/key/values $Q, K_1, K_2, V_1, V_2$  follow Definition~\ref{def:input} where they share the same input. It is then a tensor self-attention, which can capture high-order information of the input $Z$. When the query/key/values have different inputs, it is then a tensor cross-attention that can capture high-order relationships among multiple inputs. 

Also, note that we have $A \in \R^{n\times n^2}$ in Definition~\ref{def:tensor_att}. Although $QK^\top$ is a low-rank matrix with rank at most $d$, $\exp(QK^\top)$ may be a full-rank matrix in general.  
Thus, it is clear to see the exact forward computation of tensor attention takes $O(n^3)$ time. Here, we introduce a forward tensor attention approximation task, which will help us formulate the tensor attention gradient approximation task later. Furthermore, \cite{as24_iclr} show that they can solve this approximation task in almost linear time $n^{1+o(1)}$ (Lemma~\ref{lem:forward_approx}). 

\begin{definition}[Approximate Tensor Attention Computation ($\mathsf{ATAttC}(n,d,B,\epsilon)$), Definition 1.2 in \cite{as24_iclr}] \label{def:att_comp}
Given input matrices $Q,K_1, K_2, V_1, V_2 \in \R^{n \times d}$ and parameters $\epsilon, B > 0$, where $\max\{\|Q\|_\infty, $ $\|K_1\|_\infty, $ $\|K_2\|_\infty, \|V_1\|_\infty, $ $\|V_2\|_\infty\} \le B$. Let $A,D,V$ be defined in Definition~\ref{def:tensor_att}.
Then, our target is to output a matrix $T \in \R^{n \times d}$ satisfying 
\begin{align*}
    \|\underbrace{T}_{n \times d} - \underbrace{D^{-1}}_{n \times n} \underbrace{A}_{n \times n^2} \underbrace{V}_{n^2 \times d} \|_\infty  \leq \epsilon.
\end{align*}
\end{definition}

\begin{figure*}
    \centering
    \includegraphics[width=1.0\linewidth]{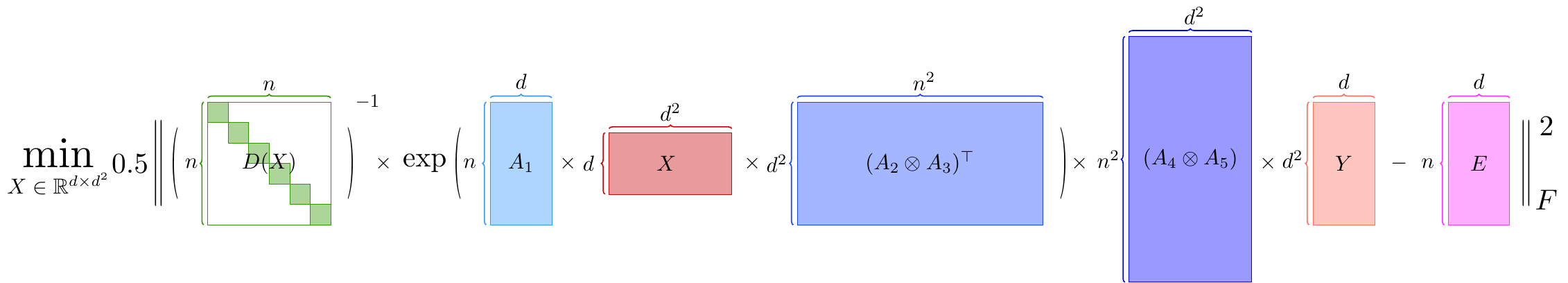}
    \includegraphics[width=0.7\linewidth]{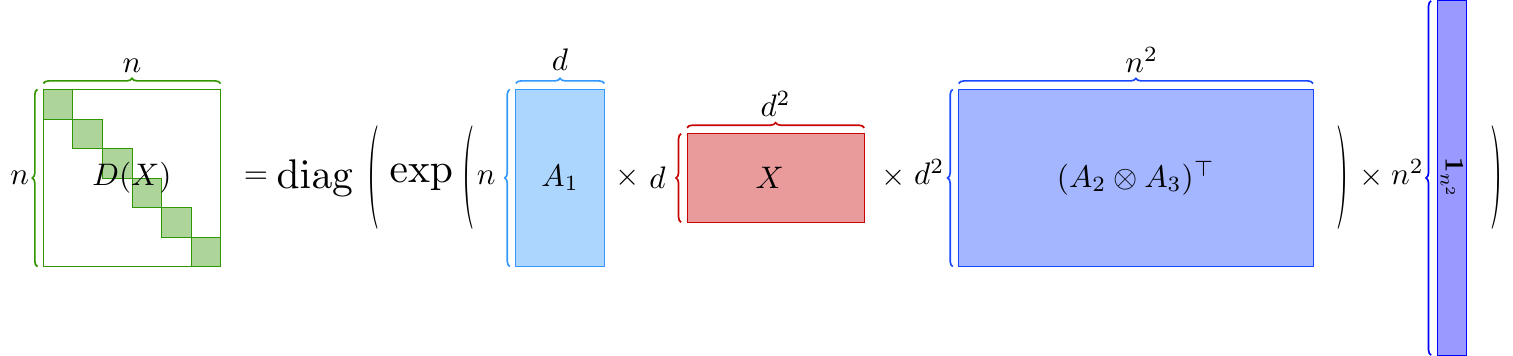}
    \caption{The visualization of loss function defined in Definition~\ref{def:attention_optimization_loss}. 
    Let $A_1, A_2, A_3, A_4, A_5$ and $E$ be $n \times d$ input matrices. 
    Let $Y$ be a given matrix with size $d^2 \times d$. 
    The Kronecker product operator $\otimes$ is defined in Definition~\ref{def:tensor_otimes}.
    We minimize matrix $X\in \R^{d \times d^2}$ in our loss function. 
    We first compute $\exp(A_1 X (A_2 \otimes A_3)^\top)$. Then, we compute $D(X) := \diag(\exp(A_1 X (A_2 \otimes A_3)^\top) {\bf 1}_{n^2})$. Afterwards, we compute $D(X)^{-1} \exp(A_1 X (A_2 \otimes A_3)^\top) (A_4 \otimes A_5)Y - E$. Finally, we optimize $X$ to compute the minimum of its Frobenius norm with a scaling factor $0.5$.
    }
    \label{fig:loss_L_X}
\end{figure*}

For our focus, tensor attention training, we would like to find weights to fit the tensor attention to a desired output $E$. We first simplify the attention expression of Definition~\ref{def:tensor_att}, whose inputs are from Definition~\ref{def:input} with weight matrices $W_Q, W_{K_1}, W_{K_2}, W_{V_1}, W_{V_2} \in \R^{d\times d}$. Let $X := W_Q \cdot (W_{K_1} \oslash W_{K_2})^\top \in \R^{d\times d^2}$ and $Y  := W_{V_1} \oslash W_{V_2}\in \R^{d^2 \times d}$. It can be verified that the tensor attention equals
\begin{align*}
    D^{-1} \exp( Z X (Z\otimes Z)^\top /d ) (Z \otimes Z) Y,
\end{align*}
where $Z \in \R^{n \times d}$ is defined as the input sequence in Definition~\ref{def:input}. 

The naive gradient computation for the tensor attention training takes $\Omega(n^3)$ time. The gradient for $X$ is the bottleneck while that for $Y$ is not, since $A_1 X (A_2 \otimes A_3)^\top \in \R^{n \times n^2}$ lies in the non-linear function $\mathsf{Softmax}$. Also, note that with gradients of $X$ and $Y$, it is easy to get the gradients of the weight matrices $W_Q, W_{K_1}, W_{K_2}, W_{V_1}, W_{V_2}$. 
Therefore, we model the tensor attention training as the following tensor attention optimization problem (where $A_1, A_2, A_3, A_4, A_5$ are introduced to replace $Z$ to capture more general settings such as cross-attention). See Figure~\ref{fig:loss_L_X} for an illustration.

\begin{definition}[Tensor attention optimization]\label{def:attention_optimization_loss}
Suppose that $A_1, A_2, A_3, A_4, A_5, E \in \R^{n\times d}$ and $Y_1, Y_2 \in \R^{d\times d}$ are given. We formulate the attention optimization problem 
\begin{align*}
\min_{X \in \R^{d \times d^2} } \Loss(X)
\end{align*}
as
\begin{align*}
     0.5 \| D(X)^{-1} \exp( A_1 X (A_2 \otimes A_3)^\top /d ) (A_4 \otimes A_5) Y - E \|_F^2,
\end{align*}
where (1) $A_2 \otimes A_3 \in \R^{n^2 \times d^2}$ is the tensor product between $A_2$ and $A_3$, (2) $D(X) = \diag( \exp( A_1 X (A_2 \otimes A_3)^\top /d ) {\bf 1}_{n^2} ) \in \R^{n \times n}$, and (3) $Y = Y_1 \oslash Y_2 \in \R^{d^2 \times d}$.
\end{definition}

Our main focus is the following Approximate Tensor Attention Loss Gradient Computation task.

\begin{definition}[Approximate Tensor Attention Loss Gradient Computation ($\mathsf{ATAttLGC}(n,d,B,\epsilon)$)]\label{def:ATAttLGC}
Let $\epsilon, B > 0$. Let $A_1, A_2, A_3, A_4, A_5, E \in \R^{n\times d}$ and let $X_1, X_2, X_3, Y_1, Y_2 \in \R^{d\times d}$ (see Definition~\ref{def:attention_optimization_loss}). 
Let $X = X_1 \cdot (X_2 \oslash X_3)^\top \in \R^{d \times d^2}$.
Assume that $\max\{\| A_1 X_1 \|_{\infty}$, $ \| A_2 X_2 \|_{\infty}$, $ \| A_3 X_3 \|_{\infty}$, $ \| A_4 Y_1 \|_{\infty}$, $ \| A_5 Y_2 \|_{\infty} \}\leq B$. Let us assume that any numbers in the previous matrices are in the $\log(n)$ bits model\footnote{
Each entry in the matrix is represented by at most $\log(n)$ bits. This assumption is well-accepted and widely used in the computational complexity community, e.g., \cite{fzg+24,lag+23,ms23}.
}.
We define $\Loss(X)$ the same as Definition~\ref{def:attention_optimization_loss}.
Let the gradient of loss function $\Loss(X)$ be $\frac{\d \Loss(X)}{\d X} \in \R^{d\times d^2}$. 
Then,  our target is to output a matrix $\wt{g}\in \R^{d\times d^2} $ satisfying
\begin{align*}
    \| \wt{g} - \frac{ \d \Loss(X) }{ \d X } \|_{\infty} \leq \epsilon.
\end{align*}
\end{definition}

\begin{figure*}[!ht]
    \centering
    \includegraphics[width=1\linewidth]{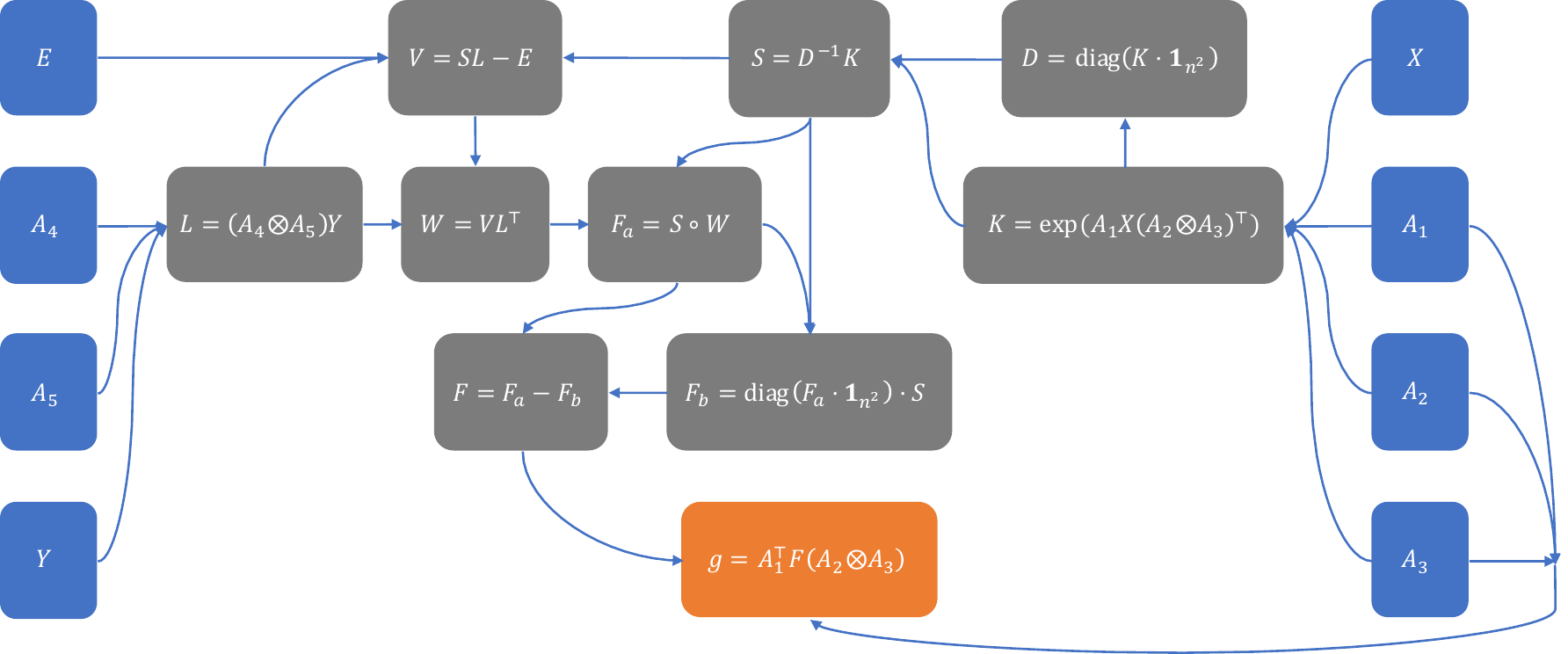}
    \caption{
    The computational graph for tensor attention backward. 
    The blue boxes are input matrices, the gray boxes are intermediate matrices, and the orange box is the final gradient matrix. 
    Here, $A_1,A_2,A_3,A_4,A_5$ denote the previous inputs, $E$ denotes the target matrix, and $X,Y$ denote the attention weights.
    More detailed definitions of each variable can be found in Section~\ref{sec:gradient}, \ref{app:app_time} and \ref{app:app_fast_time}.
    }
    \label{fig:tat_backward}
\end{figure*}

\section{Exact Tensor Attention Gradient Computation and Complexity}\label{sec:exact_tensor_attention}

In this section, we provide the closed form of the tensor attention gradient of the loss function (Definition~\ref{def:attention_optimization_loss}) and also its computational time.
First, we calculate the closed form of the gradient in the following lemma, whose proof is in Appendix~\ref{sec:compute_gradient}.

\begin{lemma}[Closed form of gradient, informal version of Lemma~\ref{lem:compute_gradient}]\label{lem:compute_gradient_informal}
Define the function $\P(x) \in \R^{n \times n^2}$ as in Definition~\ref{def:p} (see Fig.~\ref{fig:tat_backward} for an illustration). Suppose that $A_1, A_2, A_3 \in \R^{n \times d}$ are three given matrices. Suppose that $\Loss(x)$ is defined as Definition~\ref{def:attention_optimization_loss}, where $x = \vect(X)$. 
Then, we have
\begin{align*}
\frac{\d \Loss(x)}{\d x} = \vect(A_1^\top \P(x) (A_2\otimes A_3) ) \in \R^{d^3}.
\end{align*}
\end{lemma}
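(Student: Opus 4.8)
\textbf{Proof proposal for Lemma~\ref{lem:compute_gradient_informal}.}

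The plan is to compute the gradient by the chain rule, treating the loss as a composition of the (softmax-normalized) tensor attention map with the squared Frobenius norm, and then bundling all the softmax-derivative terms into the single matrix-valued function $\P(x)$. First I would set up notation for the intermediate quantities: write $A := \exp(A_1 X (A_2\otimes A_3)^\top/d) \in \R^{n\times n^2}$, $D := \diag(A\mathbf{1}_{n^2}) \in \R^{n\times n}$, and $u(x) := D(x)^{-1} A(x) (A_4\otimes A_5) Y \in \R^{n\times d}$, so that $\Loss(x) = 0.5\|u(x) - E\|_F^2$. Since the loss factors over rows of $u - E$, I would fix a row index $i_0 \in [n]$ and a coordinate, and differentiate $\Loss$ with respect to a single entry $X_{a,b}$ (equivalently, an entry $x_k$ of $x = \vect(X)$), reducing everything to scalar calculus. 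The outer derivative gives $\langle u(x) - E, \d u(x)/\d x_k\rangle_F$.

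Next I would differentiate $u$. The key computation is the derivative of the normalized row $D^{-1}_{i_0,i_0} A_{i_0,\cdot}$ with respect to $x_k$. Writing $A_{i_0,j} = \exp(\langle$ (row $i_0$ of $A_1 X$), (row $j$ of $A_2\otimes A_3$)$\rangle/d)$, the derivative of the exponent in $x_k$ is linear, so $\d A_{i_0,j}/\d x_k = A_{i_0,j}\cdot c_{i_0,j,k}$ for an explicit coefficient $c_{i_0,j,k}$ depending on $A_1$ and $A_2\otimes A_3$; and $\d D_{i_0,i_0}/\d x_k = \sum_j \d A_{i_0,j}/\d x_k$. Plugging into the quotient rule produces two terms — the ``diagonal'' term from differentiating $A$ and the ``normalization'' term from differentiating $D^{-1}$ — which combine into exactly the structure that the definition of $\P(x)$ (Definition~\ref{def:p}) is designed to encapsulate. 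I would then recognize the resulting expression, summed over $i_0$ and the internal indices, as a single matrix product of the form $A_1^\top\,\P(x)\,(A_2\otimes A_3)$, read off coordinate-by-coordinate; vectorizing recovers the claimed formula $\d\Loss/\d x = \vect(A_1^\top \P(x)(A_2\otimes A_3)) \in \R^{d^3}$.

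The main obstacle I anticipate is purely bookkeeping rather than conceptual: carefully tracking how the Kronecker/column-wise-Kronecker index conventions (Definitions~\ref{def:tensor_otimes}--\ref{def:tensor_oslash}) interact with the vectorization map $\vect(\cdot)$, so that the per-coordinate scalar derivatives reassemble into the stated compact matrix product with the correct placement of $A_1^\top$ on the left and $(A_2\otimes A_3)$ on the right. Getting the $1/d$ scaling, the transposes, and the identification of $\d X(A_2\otimes A_3)^\top$-type terms consistent requires care, but each step is a routine application of the chain rule and the quotient rule. I would organize the write-up by first defining $\P(x)$ and its constituent pieces (the value term $V$, the residual $u - E$, and the softmax-Jacobian factor), then proving the identity one block at a time, and finally assembling. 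Since $\P(x)$ is defined in Definition~\ref{def:p}, most of the work is showing that the chain-rule output matches that definition; the formal version (Lemma~\ref{lem:compute_gradient}) in Appendix~\ref{sec:compute_gradient} carries out the detailed index manipulations.
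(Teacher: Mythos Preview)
Your proposal is correct and follows essentially the same approach as the paper: differentiate each scalar loss term $\Loss(x)_{j_0,i_0}$ via the chain rule, obtain the softmax-Jacobian structure $\diag(\F(x)_{j_0}) - \F(x)_{j_0}\F(x)_{j_0}^\top$ from the quotient rule on $D^{-1}A$, sum over the output coordinate to produce $\Q(x)_{j_0}$ and hence $\P(x)_{j_0}$, then sum over rows and apply the tensor-trick $\vect(A_1^\top \P(x)(A_2\otimes A_3)) = \A^\top \vect(\P(x))$ to obtain the final form. The paper organizes this by first recording all the scalar partial derivatives in a separate lemma (Lemma~\ref{lem:gradient_x}) and then assembling, and it also drops the $1/d$ factor throughout the gradient derivation as a harmless rescaling, but the substance is identical to what you describe.
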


Note that $\P(x)$ is a size $n \times n^2$ matrix which is the bottleneck obstacle in time complexity.

\begin{definition}
Let $\Tmat(a, b, c)$ denote the time of multiplying $a \times b$ matrix and $b \times c$ matrix.
\end{definition}

Then, with straightforward analysis, we get the following theorem about the time complexity of naive computation. The complete proof is in Appendix~\ref{sec:compute_together}.
\begin{theorem}[Tensor attention gradient computation, informal version of Theorem~\ref{thm:gradient:formal}]\label{thm:gradient:informal}
Suppose that $A_1, A_2, A_3, $ $A_4, A_5,  E \in \R^{n \times d}$ are input fixed matrices. We denote matrix variables as $X \in \R^{d \times d^2}$ and $Y \in \R^{d^2 \times d}$ (gradient computation is w.r.t. $X$ ). Let $g = \frac{\d \Loss(X)}{\d X} \in \R^{d \times d^2}$ (for definition of $\Loss(X)$, see Definition~\ref{def:attention_optimization_loss}). Then, we show that computing the gradient $g \in \mathbb{R}^{d \times d^2}$ requires $\Tmat(n, d^2, n^2)$ time.
\end{theorem}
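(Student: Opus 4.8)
\textbf{Proof proposal for Theorem~\ref{thm:gradient:informal}.}

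The plan is to start from the closed-form expression for the gradient established in Lemma~\ref{lem:compute_gradient_informal}, namely $\frac{\d \Loss(x)}{\d x} = \vect(A_1^\top \P(x) (A_2 \otimes A_3))$, and simply bound the cost of evaluating this expression term by term, keeping track of every intermediate matrix product. First I would unfold the definition of $\P(x)$ (Definition~\ref{def:p}), which in turn is built from the softmax probability matrix $D(X)^{-1} \exp(A_1 X (A_2 \otimes A_3)^\top / d) \in \R^{n \times n^2}$, the value term $(A_4 \otimes A_5) Y \in \R^{n^2 \times d}$, the target $E \in \R^{n \times d}$, and various Hadamard products and row-sum operations combining these. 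The key observation is that $A_2 \otimes A_3 \in \R^{n^2 \times d^2}$ is the genuinely large object: every step that touches it or the $n \times n^2$ attention matrix costs $\Tmat(n, d^2, n^2)$ or less.

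The main steps, in order, would be: (i) form $A_2 \otimes A_3$ explicitly in $O(n^2 d^2)$ time and $A_4 \otimes A_5$ similarly; (ii) compute $A_1 X \in \R^{n \times d^2}$ in $\Tmat(n,d,d^2)$ time, then $A_1 X (A_2 \otimes A_3)^\top \in \R^{n \times n^2}$ in $\Tmat(n, d^2, n^2)$ time, and apply $\exp(\cdot/d)$ entrywise in $O(n^3)$ time (which is dominated by the preceding matrix multiply since $d \le n$); (iii) compute the row sums to get $D(X)$ and its inverse in $O(n^2)$ extra time, and form the normalized probability matrix in $O(n^3)$ time; (iv) multiply the probability matrix by $(A_4 \otimes A_5) Y \in \R^{n^2 \times d}$ — where $(A_4 \otimes A_5) Y$ itself costs $\Tmat(n^2, d^2, d)$ — to get the $n \times d$ attention output, subtract $E$, and assemble the remaining Hadamard-product and diagonal-rescaling pieces that define $\P(x)$, all of which involve at most an $n \times n^2$ matrix and hence cost $O(n^3)$ or less; (v) finally compute $A_1^\top \P(x) \in \R^{d \times n^2}$ in $\Tmat(d, n, n^2)$ time and then multiply by $(A_2 \otimes A_3) \in \R^{n^2 \times d^2}$ in $\Tmat(d, n^2, d^2)$ time. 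Collecting these, every term is bounded by $\Tmat(n, d^2, n^2)$ (using $d \le n$ to absorb the smaller products and the entrywise $\exp$ cost), which gives the claimed bound.

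The step I expect to require the most care is not any single matrix multiplication but rather verifying that \emph{every} piece appearing in the closed form of $\P(x)$ — in particular the terms arising from differentiating the normalization $D(X)^{-1}$, which typically produce a Hadamard product of the probability matrix with a rank-structured correction and a diagonal rescaling by a length-$n$ vector — can indeed be computed without ever forming an object larger than $n \times n^2$, and that none of them secretly needs an $n^2 \times n^2$ intermediate. So the bulk of the work is bookkeeping: writing out $\P(x)$ as an explicit sum of named matrices with stated dimensions, and checking each summand's evaluation cost against $\Tmat(n, d^2, n^2)$. Since $\Tmat(n, d^2, n^2) \ge n \cdot d^2 \cdot n^2 = n^3 d^2 \ge n^3$, all the entrywise $O(n^3)$ operations and all the $O(n^2 d^2)$ Kronecker-product formations are comfortably dominated, so no delicate balancing of exponents is needed — only the assurance that the dominant term is genuinely this one.
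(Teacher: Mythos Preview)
Your proposal is correct and follows essentially the same route as the paper: both start from the closed form $\vect(A_1^\top \P(x)(A_2\otimes A_3))$ of Lemma~\ref{lem:compute_gradient_informal} and bound the cost of each intermediate product, with the paper simply packaging your steps (ii)--(iv) into named helper functions $\F,\H,\C,\Q,\P$ (Lemmas~\ref{lem:compute_f_h}--\ref{lem:compute_p}) and handling your worry about an $n^2\times n^2$ intermediate exactly as you anticipate, row by row via the diagonal-plus-rank-one structure of $\P(x)_{j_0}$. One trivial slip: the row-sum step in (iii) is $O(n^3)$, not $O(n^2)$, but this is already dominated by $\Tmat(n,d^2,n^2)$ as you note.
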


Note that $\Tmat(n, d^2, n^2) \ge \Omega(n^3)$. Thus, the naive tensor attention gradient computation is a complexity obstacle in practice, as discussed in Section~\ref{sec:intro}. 
Based on the closed formulation in Lemma~\ref{lem:compute_gradient_informal}, we derive our acceleration method, which will be introduced in the following section. 
\section{Fast Tensor Attention Gradient Computation}\label{sec:fast_gradient}

In this section, we show how to compute the tensor attention matrix gradient in almost linear time. In Section~\ref{sec:sub:fast}, we demonstrate our main results. In Section~\ref{sec:sub:tensor}, we introduce some key tensor techniques used in our proof.

\begin{algorithm}[!ht]\caption{Almost Linear Time Tensor Attention Gradient Computation}\label{alg:main}
\begin{algorithmic}[1]
\Procedure{FastTensorAttention}{$A_1, A_2, A_3, A_4, A_5, E \in \R^{n \times d}, X_1, X_2, X_3, Y_1, Y_2 \in \R^{d\times d}, n \in \mathbb{N}_+, d \in \mathbb{N}_+,  \epsilon \in (0,0.1)$} \Comment{Definition~\ref{def:ATAttLGC}, Theorem~\ref{thm:mainalg:informal}}
    \State \Comment{$n$ can be viewed as the length of the sentence}
    \State \Comment{$d$ can be viewed as the feature of dimension, and we assume $d = O(\log n)$}
    \State \Comment{$\epsilon$ is the accuracy output, and we typically pick $1/\poly(n)$}
    
    \State Get $U_1, V_1 , W_1 \in \R^{n \times n^{o(1)}}$ ~~~ to approximate $\F(x)$ via Lemma~\ref{lem:low_rank_f} \Comment{$O(n^{1+o(1)})$ time}

    \State $U_2 \gets U_1 (V_1 \oslash W_1)^\top \H(y) - E$ ~~~ to approximate $\C(x)$ via Lemma~\ref{lem:fast_tensor_oslash_computation_informal} \Comment{$O(n^{1+o(1)})$ time}
    
    \State $V_2, ~W_2 \gets A_4 Y_1, ~A_5 Y_2$  ~~~ to approximate $\Q(x)$ via Lemma~\ref{lem:low_rank_q}
    \Comment{$O(nd^2)$ time}

    \State $U_3, ~V_3, ~W_3 \gets U_1 \ominus U_2, ~V_1 \ominus V_2, ~W_1 \ominus W_2$ ~~~ to approximate $\P_a(x)$ via Lemma~\ref{lem:low_rank_p1} \Comment{$O(n^{1+o(1)})$ time}

    \State Precompute $V_1^\top V_2$ and $W_1^\top W_2$ ~~~ to approximate $\P_b(x)$ via Lemma~\ref{lem:low_rank_p2} \Comment{$O(n^{1+o(1)})$ time}
    \For{$j_0 \in [n]$} \Comment{Overall 
$\wt{\mathsf{R}}(x)$ takes $O(n^{1+o(1)})$ time}
        \State $\wt{\mathsf{R}}(x)_{j_0} \gets (U_1)_{j_0,*}((V_1^\top V_2)\circ (W_1^\top W_2))((U_2)_{j_0,*})^\top$
    \EndFor 
    \State $U_4 \gets \diag(\wt{\mathsf{R}}(x))U_1$ \Comment{$O(n^{1+o(1)})$ time}
    \State $V_4, ~W_4 \gets V_1, ~W_1$ \Comment{$O(n^{1+o(1)})$ time}

    \State {\color{blue} /* Approximate $\P(x)$, Theorem~\ref{thm:mainalg:formal} */} 
    \State $U_5, ~V_5, ~W_5 \gets [U_3, - U_4], ~[V_3, V_4], ~[W_3, W_4]$   \Comment{$O(n^{1+o(1)})$ time}
    \State {\color{blue} /* Approximate $g$, Theorem~\ref{thm:mainalg:formal} */} 
    \State Precompute $A_1^\top U_5$, $A_2^\top V_5$, $A_3^\top W_5$ separately \Comment{$O(dn^{1+o(1)})$ time}
    \State $\wt{g} \gets (A_1^\top U_5)\odot (A_2^\top V_5)\odot (A_3^\top W_5)$ 
    \Comment{$\odot$ in Definition~\ref{def:tensor_odot}. $O(d^3 n^{o(1)})$ time}
    \State \Return $\wt{g}$ \Comment{As $d = O(\log n)$, the total complexity is $O(n^{1+o(1)})$ time}
\EndProcedure
\end{algorithmic}
\end{algorithm}

\subsection{Main Results for Fast Gradient Computation}\label{sec:sub:fast}

Polynomial approximation methods involve representing complex functions through simpler polynomial forms to facilitate easier analysis and computation. 
They are crucial in numerical analysis, aiding in the efficient solution of differential equations and optimization problems, and are widely used in simulations and machine learning~\citep{aa22,acss20}.

Based on the polynomial approximation methods, \cite{as24_iclr} get the following result about tensor attention acceleration, which will be used to prove our main result. 
\begin{lemma}[Theorem 1.4 in~\cite{as24_iclr}]\label{lem:forward_approx}
  There is an algorithm that solves $\mathsf{ATAttC}(n, d = O(\log n), B = o( \sqrt[3]{\log n}), \epsilon = 1/ \poly(n))$ (see Definition~\ref{def:att_comp}) in time $n^{1+o(1)}$.  
\end{lemma}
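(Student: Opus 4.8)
\textbf{Proof proposal for Lemma~\ref{lem:forward_approx} (the last displayed statement).}

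The plan is to reduce the approximate tensor attention computation $\mathsf{ATAttC}(n,d,B,\epsilon)$ to a sequence of operations that each run in $n^{1+o(1)}$ time, the key technical ingredient being a low-rank polynomial approximation of the entrywise exponential applied to the (tensorized) score matrix $A = \exp(QK^\top/d) \in \R^{n\times n^2}$ with $K = K_1 \oslash K_2$. First I would observe that under the bounded-entries assumption $\max\{\|Q\|_\infty,\|K_1\|_\infty,\|K_2\|_\infty,\|V_1\|_\infty,\|V_2\|_\infty\}\le B = o(\sqrt[3]{\log n})$, every entry of $QK^\top/d$ is bounded in magnitude by roughly $B^2 \cdot d / d = B^2 = o((\log n)^{2/3})$ — here one must be slightly careful because $K = K_1\oslash K_2$ has entries that are products of entries of $K_1$ and $K_2$, so the relevant bound on the logits is $O(B^3/d)$ per summand and $O(B^3)$ after summing $d$ terms, which is still $o(\log n)$; this is precisely why the assumption is stated as $B = o(\sqrt[3]{\log n})$ rather than $o(\sqrt{\log n})$ as in the matrix case.

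The core step is to invoke the polynomial-approximation machinery (in the style of \cite{as23,aa22,acss20}): there is a degree-$g$ polynomial $P$ with $g = o(\log n)$ such that $P(x)$ approximates $\exp(x)$ to within $1/\poly(n)$ on the interval $|x| \le o(\log n)$. Applying $P$ entrywise to the score matrix, and using that the score matrix factors through rank-$d$ structure, one shows that $\exp(QK^\top/d)$ is within $1/\poly(n)$ (entrywise, after the appropriate normalization) of a matrix of the form $U (V\oslash W)^\top$ where $U,V,W \in \R^{n\times n^{o(1)}}$; the inner dimension $n^{o(1)}$ comes from counting monomials of degree at most $g$ in $d = O(\log n)$ variables, which is $\binom{d+g}{g} = n^{o(1)}$ when both $d$ and $g$ are $o(\log n)$ — and for the tensor case one needs the column-wise Kronecker structure $(V\oslash W)$ to represent monomials that mix coordinates coming from $K_1$ and $K_2$. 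Once this low-rank-plus-Kronecker form is available, the diagonal normalizer $D = \diag(A\mathbf{1}_{n^2})$ can be computed in $n^{1+o(1)}$ time by first forming $(V\oslash W)^\top \mathbf{1}_{n^2}$ (which equals $(V^\top\mathbf{1}_n)\circ(W^\top\mathbf{1}_n)$, an $n^{o(1)}$-dimensional vector) and then multiplying by $U$; and the output $D^{-1} A V$ with $V = V_1\oslash V_2$ is computed analogously by pushing the two column-wise Kronecker factorizations through, so that no object of size $n^2$ is ever explicitly materialized.

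The main obstacle I expect is twofold: first, controlling the error propagation — the polynomial approximation is only accurate after one accounts for the normalization $D^{-1}$, so one has to argue (as in \cite{as23}) that the row sums $D_{ii}$ are at least $n^{2}\cdot \exp(-O(B^3)) = n^{2-o(1)}$ so that a $1/\poly(n)$ additive error in $A$ translates to a $1/\poly(n)$ additive error in $D^{-1}AV$, choosing the polynomial degree $g$ large enough (but still $o(\log n)$) to absorb the extra $\poly(n)$ factors; and second, the tensor bookkeeping — verifying that all the intermediate Kronecker products $(V\oslash W)$, their transposes, and their products with the $V_1\oslash V_2$ value factor can be rearranged into Hadamard products of $n\times n^{o(1)}$ matrices without ever paying $n^2$. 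The boundedness regime $B = o(\sqrt[3]{\log n})$ is exactly the threshold that keeps the logits $o(\log n)$ and hence keeps $g = o(\log n)$; pushing past it would force $g = \Omega(\log n)$ and blow the inner dimension up to $n^{\Omega(1)}$, which is the content of the matching hardness result and explains why this lemma's hypothesis cannot be relaxed. Since the statement is quoted from \cite{as24_iclr}, I would ultimately cite that paper for the full details, but the sketch above is the route.
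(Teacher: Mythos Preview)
The paper does not prove this lemma; it is quoted as Theorem~1.4 of \cite{as24_iclr} and used as a black box. Your sketch is a correct outline of the polynomial-approximation argument from that reference---the $O(B^3)=o(\log n)$ logit bound, the degree-$g$ polynomial with $g=o(\log n)$, the resulting $U(V\oslash W)^\top$ low-rank-Kronecker factorization with inner dimension $n^{o(1)}$, and the Kronecker bookkeeping to avoid materializing any $n^2$-sized object---and you yourself note at the end that you would ultimately cite \cite{as24_iclr}; so there is nothing in the present paper to compare against, and your proposal matches the intended route.
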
 

Using similar polynomial approximation methods, and combined with a series of tensor analysis techniques (Section~\ref{sec:sub:tensor}), we get our main acceleration results. 
\begin{theorem}[Main result for fast gradient computation, informal version of Theorem~\ref{thm:mainalg:formal}]\label{thm:mainalg:informal}
Assuming the entries of $A_1, A_2, $ $A_3, $ $A_4, A_5, E \in \R^{n \times d}$ and $ X_1, X_2, X_3, Y_1, Y_2 \in \R^{d \times d}$ are represented using $O(\log n)$ bits. 
Then, there exist an algorithm (Algorithm~\ref{alg:main}) that runs in $n^{1+o(1)}$ time to solve $\mathsf{ATAttLGC}(n, d = O(\log n), B = o(\sqrt[3]{\log n} ), \epsilon = 1/ \poly(n))$ (see Definition~\ref{def:ATAttLGC}), i.e., our algorithm computes a gradient matrix $\wt{g} \in \R^{d \times d^2}$ satisfying 
\begin{align*}
\| \frac{\d \Loss(X)}{\d X} - \wt{g} \|_{\infty} \leq 1/ \poly(n).
\end{align*}
\end{theorem}
\begin{proof}[Proof sketch of Theorem~\ref{thm:mainalg:informal}] The complete proof can be found in Appendix~\ref{sec:low_rank_final}. 

We use the polynomial approximation method to obtain low-rank approximation results for $D^{-1} \exp(A_1 X ( A_2 \otimes A_3 )^\top /d)$ in Lemma~\ref{lem:low_rank_f}. However, this cannot be directly used for the closed form of the tensor attention gradient solution in Theorem~\ref{thm:gradient:informal}. Utilizing a series of tensor techniques (Section~\ref{sec:sub:tensor} and Appendix~\ref{sec:background}), we smartly convey these low rank properties throughout the gradient formulation and computation, where two key steps are fixed in Lemma~\ref{lem:low_rank_p1} and Lemma~\ref{lem:low_rank_p2}. 
\end{proof}

\begin{remark}\label{rem:assumption}
The assumption in Theorem~\ref{thm:mainalg:informal} is practical. In practice, especially in recent long context tasks, the $n$ is large, e.g., $n = 2 \times 10^6$ for Google's Gemini 1.5 Pro
~\citep{geminipro}, while the model training uses a half-precision floating-point format, e.g., the bit number is $16$. Furthermore, our assumption is ``tight'', where if we slightly weaken the assumption, there is no algorithm that can solve the tensor attention gradient computation in truly sub-cubic complexity (Theorem~\ref{thm:mainlb:informal}). 
\end{remark}

Our Theorem~\ref{thm:mainalg:informal} accurately approximates ($\epsilon = 1/ \poly(n)$) the tensor attention gradient computation in almost linear time $n^{1+o(1)}$ under practical assumptions (see the above Remark~\ref{rem:assumption}).  
Thus, our methods solve the last puzzle of tensor attention acceleration. Combined with previous work on tensor attention inference, this may make tensor attention practical, as we overcome the theoretical cubic time complexity barrier both in inference and training.  

We provide Algorithm~\ref{alg:main} for our almost linear time tensor attention training method. In the detailed algorithm, first, we construct $U_1,V_1,W_1$ in Lemma~\ref{lem:low_rank_f}. Then, we construct $U_2,V_2,W_2$ in Lemma~\ref{lem:low_rank_q} and $U_3,V_3,W_3$ in Lemma~\ref{lem:low_rank_p1}. We show how to construct $U_4,V_4,W_4$ in Lemma~\ref{lem:low_rank_p2}. Finally, we construct $U_5, V_5, W_5$ and compute the gradient $g$ in almost linear time in Theorem~\ref{thm:mainalg:formal}.

\subsection{Tensor Operation Analysis Techniques}\label{sec:sub:tensor}

Here, we introduce some key techniques for proving Theorem~\ref{thm:mainalg:informal}. These techniques make it possible to convey the low-rank property even during the tensor operations, solving the novel technical challenges in tensor attention gradient computation. 

We first introduce a distributed rule, where the proof is in Appendix~\ref{sec:preliminary:facts_tensor}.

\begin{fact}\label{fac:ominus_oslash_circ_informal}
Let $U_1 \in \R^{n_1 \times d}$ and $U_2 \in \R^{n_1 \times k}$. 
Let $V_1 \in \R^{n_2 \times d}$ and $V_2 \in \R^{n_2 \times k}$. Let $W_1 \in \R^{n_3 \times d}$ and $W_2 \in \R^{n_3 \times k}$. 
We have
\begin{align*}
& ~ \underbrace{ (U_1 \ominus U_2) }_{n_1 \times dk} \cdot ( \underbrace{ (V_1 \ominus V_2 ) }_{n_2 \times dk} \oslash \underbrace{ (W_1 \ominus W_2) }_{n_3 \times dk} )^\top \\
= & ~ ( \underbrace{U_1 }_{n_1 \times d} ( \underbrace{ V_1 }_{n_2 \times d} \oslash \underbrace{ W_1 }_{n_3 \times d} )^\top) \circ  ( \underbrace{ U_2 }_{n_1 \times k} ( \underbrace{ V_2 }_{n_2 \times k} \oslash \underbrace{ W_2 }_{n_3 \times k} )^\top ) 
\end{align*}
\end{fact}
Fact~\ref{fac:ominus_oslash_circ_informal} tells us that the multiple tensor operation can be distributed to a different format. If we have some low-rank matrix/tensor, we can distribute them into each component so that each component can be accelerated via the low-rank property. Intuitively, this allows us to borrow some low-rank benefits from other terms to fix the bottleneck terms. 

Then, we provide an important tool whose proof is in Appendix~\ref{sec:preliminary:facts_tensor}.
\begin{lemma}[Informal version of Lemma~\ref{lem:fast_tensor_oslash_computation}]\label{lem:fast_tensor_oslash_computation_informal}
Given $A_1 \in \R^{n_1 \times d_1}$, $A_2 \in \R^{n_2 \times d_1}$, let $A := (A_1 \oslash A_2) \in \R^{n_1 n_2 \times d_1}$. Given $B_1 \in \R^{n_1 \times d_2}$, $B_2 \in \R^{n_2 \times d_2}$, let $B := (B_1 \oslash B_2) \in \R^{n_1 n_2 \times d_2}$. 
We define $C \in \R^{d_1 \times d_2}$ as $C := A^\top B$ and $C_1 := A_1^\top B_1 \in \R^{d_1 \times d_2}, C_2 := A_2^\top B_2 \in \R^{d_1 \times d_2}$.
Then, we have $C_1 \circ C_2 = C$ and given $A_1, A_2, B_1, B_2$, we can get $C$ in $\Tmat(d_1,\max\{n_1,n_2\},d_2)$ time.

\end{lemma}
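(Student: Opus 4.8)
\textbf{Proof proposal for Lemma~\ref{lem:fast_tensor_oslash_computation_informal}.}

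The plan is to verify the identity $C = C_1 \circ C_2$ by a direct index computation, and then read off the running time from the structure of this identity. First I would write out the $(i,j)$-entry of $C = A^\top B$ as a sum over the $n_1 n_2$ rows of $A$ and $B$, using the reindexing of these rows by pairs $(i_1, i_2) \in [n_1] \times [n_2]$ coming from Definition~\ref{def:tensor_oslash}: the row of $A$ indexed by $i_1 + (i_2-1)n_1$ has $j$-th coordinate $(A_1)_{i_1, j} (A_2)_{i_2, j}$, and similarly for $B$. Substituting,
\begin{align*}
  C_{i,j} = \sum_{i_1 \in [n_1]} \sum_{i_2 \in [n_2]} (A_1)_{i_1, i} (A_2)_{i_2, i} (B_1)_{i_1, j} (B_2)_{i_2, j}.
\end{align*}
The key observation is that the summand factors as a product of a term depending only on $i_1$ and a term depending only on $i_2$, so the double sum splits as a product of two single sums:
\begin{align*}
  C_{i,j} = \Big( \sum_{i_1 \in [n_1]} (A_1)_{i_1, i} (B_1)_{i_1, j} \Big) \cdot \Big( \sum_{i_2 \in [n_2]} (A_2)_{i_2, i} (B_2)_{i_2, j} \Big) = (C_1)_{i,j} \cdot (C_2)_{i,j},
\end{align*}
which is exactly the $(i,j)$-entry of $C_1 \circ C_2$. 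This establishes $C = C_1 \circ C_2$.

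For the running time, the identity $C = C_1 \circ C_2$ reduces the computation of $C$ (naively a $\Tmat(d_1, n_1 n_2, d_2)$ operation on the $n_1 n_2 \times d_1$ and $n_1 n_2 \times d_2$ matrices) to: (i) forming $C_1 = A_1^\top B_1$, which costs $\Tmat(d_1, n_1, d_2)$; (ii) forming $C_2 = A_2^\top B_2$, which costs $\Tmat(d_1, n_2, d_2)$; and (iii) taking the Hadamard product $C_1 \circ C_2$, which costs $O(d_1 d_2)$. Both matrix products are dominated by $\Tmat(d_1, \max\{n_1, n_2\}, d_2)$ (padding the smaller of $n_1, n_2$ with zero rows does not change the product and only increases the cost to this bound), and the $O(d_1 d_2)$ Hadamard step is absorbed since $\Tmat(d_1, \max\{n_1,n_2\}, d_2) \ge \Omega(d_1 d_2)$. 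Hence $C$ is computed in $\Tmat(d_1, \max\{n_1, n_2\}, d_2)$ time, as claimed.

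There is no real obstacle here; the only thing to be careful about is the bookkeeping of the row-index convention $i_1 + (i_2-1)n_1$ from Definition~\ref{def:tensor_oslash}, to make sure the double-sum-to-product factorization is applied consistently, and to confirm that the factorization is what licenses replacing the $n_1 n_2$-dimensional contraction by two lower-dimensional ones. The essential content is simply that the column-wise Kronecker product $\oslash$ turns an inner product over $[n_1]\times[n_2]$ into a product of two inner products, one over each factor.
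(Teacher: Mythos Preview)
Your proposal is correct and follows essentially the same approach as the paper: both reindex the sum over $[n_1 n_2]$ as a double sum over $[n_1]\times[n_2]$ via Definition~\ref{def:tensor_oslash}, factor the summand, and split into the product $(C_1)_{i,j}(C_2)_{i,j}$, with the identical running-time analysis. The paper takes a slightly longer route through a rank-one decomposition $C = \sum_{i,j} (a_{1,i}\circ a_{2,j})(b_{1,i}\circ b_{2,j})^\top$ before passing to entries, but the content is the same.
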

Lemma~\ref{lem:fast_tensor_oslash_computation_informal} is a highly non-trivial method to handle tensor operation, $\circ$ and matrix multiplication together. By using the method, we save the computation time from $\Tmat(d, n^2, d)$ to $\Tmat(d, n, d)$, which gets rid of the bottleneck quadratic term $n^2$. 

Lastly, we introduce a tensor trick, which can reduce a tensor operation to a matrix multiplication operation. The proof is in Appendix~\ref{sec:preliminary:facts_vect}.
\begin{fact}[Tensor-trick]\label{fac:tensor_trick_informal}
Given matrices $A_1\in \R^{n_1\times d_1}, A_2 \in \R^{n_2\times d_2}$ and $X \in \R^{d_1 \times d_2}$, we have $\vect(A_1 X A_2^\top) = (A_1 \otimes A_2 ) \vect(X) \in  \R^{n_1 n_2} $.
\end{fact}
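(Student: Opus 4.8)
\textbf{Proof proposal for Fact~\ref{fac:tensor_trick_informal} (Tensor-trick).}

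The plan is to verify the identity $\vect(A_1 X A_2^\top) = (A_1 \otimes A_2)\vect(X)$ entrywise, using the paper's row-stacking convention for $\vect(\cdot)$ (namely $\vect(M)_{q + (p-1)c} = M_{p,q}$ for an $a \times c$ matrix $M$) together with the entry formula for the Kronecker product in Definition~\ref{def:tensor_otimes}. First I would write the $(i_1, i_2)$-block coordinate of the left-hand side: by the convention, the entry of $\vect(A_1 X A_2^\top)$ indexed by $i_2 + (i_1 - 1)n_2$ equals $(A_1 X A_2^\top)_{i_1, i_2}$ for $i_1 \in [n_1]$, $i_2 \in [n_2]$. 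Expanding the matrix product, $(A_1 X A_2^\top)_{i_1,i_2} = \sum_{j_1 \in [d_1]} \sum_{j_2 \in [d_2]} (A_1)_{i_1,j_1} X_{j_1,j_2} (A_2)_{i_2,j_2}$.

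Next I would compute the same coordinate of the right-hand side. The vector $(A_1 \otimes A_2)\vect(X)$ lives in $\R^{n_1 n_2}$, and its entry at index $i_2 + (i_1-1)n_2$ is $\sum_{\ell \in [d_1 d_2]} (A_1 \otimes A_2)_{i_2 + (i_1-1)n_2,\ \ell} \cdot \vect(X)_\ell$. Parametrizing $\ell = j_2 + (j_1 - 1)d_2$ with $j_1 \in [d_1]$, $j_2 \in [d_2]$, Definition~\ref{def:tensor_otimes} gives $(A_1 \otimes A_2)_{i_2 + (i_1-1)n_2,\ j_2 + (j_1-1)d_2} = (A_1)_{i_1,j_1}(A_2)_{i_2,j_2}$, and the row-stacking convention gives $\vect(X)_{j_2 + (j_1-1)d_2} = X_{j_1,j_2}$. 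Substituting, the right-hand-side coordinate becomes $\sum_{j_1 \in [d_1]}\sum_{j_2 \in [d_2]} (A_1)_{i_1,j_1}(A_2)_{i_2,j_2} X_{j_1,j_2}$, which matches the left-hand-side expression term for term. Since this holds for all $i_1 \in [n_1]$, $i_2 \in [n_2]$, the two vectors are equal.

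This is a purely combinatorial index-chasing argument, so there is no real obstacle; the only point requiring care is bookkeeping consistency between the two indexing conventions in play — note that Definition~\ref{def:tensor_otimes} as written uses the pattern $i_1 + (i_2-1)n_1$ on its rows whereas the $\vect$ convention stacks with $q + (p-1)c$, so I would either apply Definition~\ref{def:tensor_otimes} with its roles of the two factors matched to the stacking order used here, or equivalently re-derive the needed block-entry formula of $A_1 \otimes A_2$ directly from the definition of the block matrix. Once the index map between the linear index and the pair $(j_1, j_2)$ (resp. $(i_1, i_2)$) is fixed consistently on both sides, the equality of the double sums is immediate, completing the proof.
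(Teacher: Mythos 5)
Your proof is correct, but it takes a different route from the paper's. The paper proves Fact~\ref{fac:tensor_trick_informal} by decomposing $X = \sum_{j_1,j_2} X_{j_1,j_2} e_{j_1} e_{j_2}^\top$, so that $A_1 X A_2^\top$ becomes a sum of rank-one matrices $X_{j_1,j_2}\, A_{1,*,j_1} (A_{2,*,j_2})^\top$, and then invokes the separately-proved Fact~\ref{fac:vect_ab} ($\vect(ab^\top) = a \otimes b$) together with bilinearity to reassemble the sum as $(A_1 \otimes A_2)\vect(X)$. You instead verify the identity coordinate-by-coordinate, expanding both sides at the index $i_2 + (i_1-1)n_2$ and matching the double sums over $(j_1,j_2)$. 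Both arguments are elementary and complete; yours is more self-contained (it does not route through the auxiliary outer-product fact), while the paper's is more modular. One point in your favor: you explicitly flag that Definition~\ref{def:tensor_otimes} indexes rows as $i_1 + (i_2-1)n_1$ while the row-stacking $\vect$ convention uses $i_2 + (i_1-1)n_2$, and you state how you would reconcile the two orderings before computing. The paper's own chain (in particular Fact~\ref{fac:vect_ab}, whose right-hand side is indexed by $i + (j-1)n$ while its left-hand side is indexed by $j + (i-1)d$) silently glosses over exactly this mismatch, so your bookkeeping caveat is not pedantry but a genuine repair that the paper's version also implicitly needs. No gap in your argument.
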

\section{Tensor Attention Gradient Complexity Lower Bound}\label{sec:tensor_attention_lower_bound}

In this section, we show that our assumption is necessary.
First, we introduce some hardness analysis background in Section~\ref{sec:hardness:tensor_attention_complexity}. Then, we introduce our main hardness result in Section~\ref{sec:sub:lb_main}.

\subsection{\texorpdfstring{$\mathsf{SETH}$}{} and Tensor Attention Forward Hardness }
\label{sec:hardness:tensor_attention_complexity}

We provide the findings that our results are based on. We first introduce a well-known hypothesis in hardness analysis.
The Strong Exponential Time Hypothesis ($\mathsf{SETH}$), a well-established conjecture, has been instrumental in establishing fine-grained lower bounds for numerous algorithmic problems, as highlighted in the survey by \cite{w18}. More than two decades ago, \cite{ip01} introduced $\mathsf{SETH}$ as an enhanced version of the well-known $\mathsf{P} \neq \mathsf{NP}$ conjecture, positing that current algorithms solving the $\mathsf{SAT}$ problem are nearly optimal in terms of efficiency.

\begin{hypothesis}[Strong Exponential Time Hypothesis ($\mathsf{SETH}$), \cite{ip01}]
Given $\epsilon > 0$, there exists $k \ge 3 \in \Z$ such that it is impossible to solve $k$-$\mathsf{SAT}$ problem with $n$ variables in $O(2^{(1-\epsilon)n} )$ time, including using any randomized algorithms.
\end{hypothesis}

We will critically utilize the hardness result of the forward tensor attention computation.
\begin{lemma}[Theorem 1.3 in~\cite{as24_iclr}]
\label{lem:prevlemma}
    Assuming $\mathsf{SETH}$, for any constant $\delta > 0$, no algorithm can solve $\mathsf{ATAttC}(n, d = \Theta(\log n), B = \Theta(\sqrt[3]{(1 + \gamma) \log n}), \epsilon = n^{\gamma - O(1)})$ (Definition~\ref{def:att_comp}) in $O(n^{3 - \delta})$ time, even if the inputs meet the following conditions for any $\gamma \geq 0$: (1) $V \in \{0,1\}^{n^2 \times d}$, (2) There exists $B_a \leq O((1 + \gamma) \log^2 n) = O(d(\sqrt[3]{(1 + \gamma) \log n})^3)$ where all entries of $Q(K_1 \oslash K_2)^\top$ are within the range $[1, B_a]$ and more than half entries in each row of $Q(K_1 \oslash K_2)^\top$ are equal to $B_a$.
\end{lemma}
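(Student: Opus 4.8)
The plan is a fine-grained reduction from the $3$-fold Orthogonal Vectors problem $3\text{-}\mathsf{OV}$: given three sets $A, B, C \subseteq \{0,1\}^{d_0}$ with $|A| = |B| = |C| = n$ and $d_0 = \Theta(\log n)$, decide whether there is a triple $(a,b,c) \in A \times B \times C$ with $\sum_{\ell=1}^{d_0} a_\ell b_\ell c_\ell = 0$. Under $\mathsf{SETH}$, $3\text{-}\mathsf{OV}$ has no $O(n^{3-\delta})$-time algorithm for any constant $\delta > 0$; this is the $k = 3$ case of the standard $k\text{-}\mathsf{OV}$ hierarchy, obtained from $k\text{-}\mathsf{SAT}$ on $m$ variables by splitting the variables into three blocks, listing the $2^{m/3}$ partial assignments of each block as a Boolean vector, and encoding ``some clause is falsified by all three partial assignments'' as a $3$-way orthogonality. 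It therefore suffices to show that an $O(n^{3-\delta})$-time algorithm for $\mathsf{ATAttC}(n, d = \Theta(\log n), B = \Theta(\sqrt[3]{(1+\gamma)\log n}), \epsilon = n^{\gamma - O(1)})$, even restricted to inputs of the stated form, would decide $3\text{-}\mathsf{OV}$ in truly sub-cubic time, contradicting $\mathsf{SETH}$.

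\emph{Construction of the attention instance.} Put $g := c_0 (1+\gamma)\log^2 n$ and $B_a := c_1 (1+\gamma)\log^2 n$ for constants $c_1 > c_0 > 0$ to be fixed; note $B_a = \Theta(d \cdot (\sqrt[3]{(1+\gamma)\log n})^3)$, so $B_a$ is exactly the largest magnitude a $d$-term sum of products of three $B$-bounded numbers can have. Via a standard complementation/padding gadget I choose $Q, K_1, K_2 \in \R^{n \times d}$ with $d = \Theta(\log n)$ and $\|Q\|_\infty, \|K_1\|_\infty, \|K_2\|_\infty \le B$ such that, for all $i, j, k \in [n]$,
\[
  \big( Q (K_1 \oslash K_2)^\top \big)_{i,\, j + (k-1)n} \;=\; B_a - g \cdot \langle a_i \circ b_j,\, c_k \rangle,
  \qquad \langle a_i \circ b_j, c_k \rangle := \sum_\ell (a_i)_\ell (b_j)_\ell (c_k)_\ell .
\]
Here it is the column-wise Kronecker product $\oslash$ of Definition~\ref{def:tensor_oslash} that makes the $3$-way sum appear as one matrix entry. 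Every orthogonal triple then gives the entry $B_a$, every other triple an entry at most $B_a - g$, and all entries lie in $[1, B_a]$. To force ``more than half the entries of each row equal $B_a$'' I append $\Theta(n)$ dummy rows to each of $K_1, K_2$ --- chosen to be zero on the real coordinates, so their $3$-way inner product with anything vanishes, and tuned on the padding coordinates so the corresponding entry equals $B_a$; the blown-up instance then has $\Theta(n)$ keys and $\Theta(n^2)$ columns, of which a $1 - o(1)$ fraction are dummies with entry $B_a$. Finally I take $V \in \{0,1\}^{n^2 \times d}$ supported only on its first column, which is the indicator of the dummy columns. All structural hypotheses of the lemma (including $V \in \{0,1\}^{n^2 \times d}$ and the range/half conditions on $Q(K_1 \oslash K_2)^\top$) are met.

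\emph{Amplifying the gap through $\mathsf{Softmax}$.} Let $M := e^{B_a/d}$; by the choice of $g$ a non-orthogonal real column contributes at most $e^{(B_a - g)/d} = M \cdot n^{-\Theta(1+\gamma)}$ to its row, whereas dummy and orthogonal columns contribute $M$. Since a $1 - o(1)$ fraction of each row of $Q(K_1 \oslash K_2)^\top$ equals $B_a$, the normalizer is pinned down as $D_{ii} = \Theta(n^2 M)$, and the first coordinate of row $i$ of $D^{-1} A V$ --- the fraction of that row's softmax mass sitting on dummy columns --- equals $1 - \Theta(n^{-\Theta(1+\gamma)})$ when $a_i$ has no orthogonal partner and is at most $1 - \Theta(n^{-2})$ when it does. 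Hence a ``yes'' $3\text{-}\mathsf{OV}$ instance forces some entry of $D^{-1}AV$ below $1 - \Theta(n^{-2})$, while a ``no'' instance keeps every such entry within $n^{-\Theta(1+\gamma)}$ of $1$; choosing $c_0$ (hence the exponent $\Theta(1+\gamma)$) large enough these two ranges are separated by more than $2\epsilon$, so an algorithm approximating $D^{-1}AV$ entrywise to error $\epsilon = n^{\gamma - O(1)}$ decides $3\text{-}\mathsf{OV}$. The reduction multiplies $n$ by only polylogarithmic factors, so an $O(n^{3-\delta})$-time algorithm for $\mathsf{ATAttC}$ yields an $O(n^{3 - \delta + o(1)})$-time algorithm for $3\text{-}\mathsf{OV}$.

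\emph{Where the difficulty lies.} The crux is the joint calibration of the coupled parameters $B$, $B_a$, $g$, $\epsilon$ and $\gamma$: the complementation gadget must keep all three input matrices inside the entrywise budget $B = \Theta(\sqrt[3]{(1+\gamma)\log n})$ while simultaneously (i) mapping every row entry into $[1, B_a]$, (ii) opening a strict gap of size $\ge g$ between orthogonal and non-orthogonal triples, and (iii) contributing enough dummy columns that $D_{ii}$ is pinned to $\Theta(n^2 M)$; only after this is the post-exponentiation, post-normalization separation provably larger than $2\epsilon$, and quantifying exactly how crude an $\epsilon$ (as a function of $\gamma$) still works is what yields the stated $\epsilon = n^{\gamma - O(1)}$ and makes the threshold $B = \Theta(\sqrt[3]{(1+\gamma)\log n})$ tight against the almost-linear-time upper bound of Lemma~\ref{lem:forward_approx}. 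A secondary, fiddly point is the tensor bookkeeping: one must check throughout that the $3$-way inner products are read off via the column-wise product $\oslash$ (not the ordinary $\otimes$) and that the padding coordinates never interfere with the gadget coordinates.
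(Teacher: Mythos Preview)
The paper does not prove this lemma: it is imported verbatim as Theorem~1.3 of \cite{as24_iclr} and used as a black box in the hardness section, so there is no ``paper's own proof'' to compare against. Your sketch via a fine-grained reduction from $3\text{-}\mathsf{OV}$ is exactly the standard route for such statements and is what the cited paper follows; the key observation that $(Q(K_1 \oslash K_2)^\top)_{i,\,j+(k-1)n} = \sum_\ell Q_{i,\ell}(K_1)_{j,\ell}(K_2)_{k,\ell}$ naturally encodes a three-way Boolean inner product, together with a complementation/padding gadget to push orthogonal triples to the maximal value $B_a$ and to guarantee the ``more than half entries equal $B_a$'' condition, is the right mechanism. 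Your calibration discussion correctly identifies where the parameters $B$, $B_a$, $g$, $\epsilon$, $\gamma$ must be tuned jointly so that the post-softmax gap between the yes and no cases exceeds $2\epsilon$ while the inputs stay within the stated $\ell_\infty$ budget. One small point worth tightening when you write this out in full: the separation you obtain is $\Theta(n^{-2})$ versus $n^{-\Theta(1+\gamma)}$, so to make $\epsilon = n^{\gamma - O(1)}$ work uniformly you should be explicit that the hidden $O(1)$ is allowed to depend on the constant in front of $(1+\gamma)$ in $g$; otherwise for small $\gamma$ the two ranges may not be $2\epsilon$-separated.
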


This result shows that assuming $\mathsf{SETH}$, if we just slightly weaken the assumption from $B = O(\sqrt[3]{\log n})$ to $B = \Theta(\sqrt[3]{(1 + \gamma) \log n})$ with $\gamma = \omega(1)$, then the tensor attention forward computation is hard, i.e., no algorithm can solve it in truly sub-cubic time. 

\subsection{Main Result for Hardness}\label{sec:sub:lb_main}

Based on the above observation (Lemma~\ref{lem:prevlemma}), we prove our main result for tensor attention gradient computation hardness. 
\begin{theorem}[Main result for hardness, informal version of Theorem~\ref{thm:mainlb:formal}]\label{thm:mainlb:informal}
    Let $\gamma : \mathbb{N} \to \mathbb{N}$ be any function with $\gamma(n) = o(\log n)$ and $\gamma(n) = \omega(1)$.
    Assuming $\mathsf{SETH}$, for any constant $\delta>0$, it is impossible to solve $\mathsf{ATAttLGC}(n,d= \Theta(\log n), B= \Theta(\sqrt[3]{\gamma(n) \cdot \log n }), \epsilon = O(1/(\log n)^4))$ (Definition~\ref{def:ATAttLGC})  in time $O(n^{{{}{}{} 3} - \delta})$  
    when $E = 0$, $\mathsf{Y} =  \mathsf{I}_{d}$, $\mathsf{X} = \lambda \mathsf{I}_{d}$ for some scalar $\lambda \in [0,1]$.
\end{theorem}

See the formal proof in Appendix~\ref{sec:hardness:main_result}.
The intuition is that if we can solve $\mathsf{ATAttLGC}$ in $O(t)$ time, then we can solve $\mathsf{ATAttC}$ in $O(t \cdot \log^{11}(n))$ time by interpolation and ``integral''. 
We see a similar sharp complexity 
transition as forward computation (Lemma~\ref{lem:prevlemma}): assuming $\mathsf{SETH}$, if we slightly weaken the assumption from $B = O(\sqrt[3]{\log n})$ to $B = \Theta(\sqrt[3]{(1 + \gamma) \log n})$ with $\gamma = \omega(1)$, then the tensor attention gradient computation will be unsolvable in truly sub-cubic time as well. 
\section{Discussion and Conclusion}\label{sec:conclusion}
In this work, we proved that the backward gradient of tensor attention training can be computed in almost linear $n^{1+o(1)}$ time, the same complexity as its forward computation, under a bounded entries assumption.
We provided a closed-form solution for the gradient and proposed a fast computation method utilizing polynomial approximation and tensor algebraic techniques. Furthermore, we proved the necessity and tightness of our assumption through hardness analysis, showing that slightly weakening it renders the tensor attention gradient problem unsolvable in truly subcubic time. 

Our theoretical results establish the feasibility of efficient higher-order transformer training and may facilitate practical applications of tensor attention architectures.
Due to space limits, we provide our further discussion and extension in Appendix~\ref{sec:discussion}. 
Future work can perform empirical evaluations of the method in practical large language models, and explore how these findings can be implemented in real-world scenarios to enable the development of powerful higher-order models.

\ifdefined\isarxiv
\else
\section*{Impact Statement}

This paper presents work whose goal is to advance the field of Machine
Learning. There are many potential societal consequences of our work, none
which we feel must be specifically highlighted here.
\fi

\ifdefined\isarxiv
\bibliographystyle{alpha}
\bibliography{ref}
\else
\bibliographystyle{icml2026}
\bibliography{ref}
\fi


\newpage
\onecolumn
\appendix

\begin{center}
    \textbf{\LARGE Appendix }
\end{center}

\paragraph{Roadmap.}
In Section~\ref{sec:discussion}, we provide a further discussion and extension of this work. 
In Section~\ref{sec:related}, we provide related works. 
In Section~\ref{sec:background}, we provide general definitions and several basic facts. 
In Section~\ref{sec:gradient}, we show how we calculate the gradient of the loss function. In Section~\ref{app:app_time}, we show the time complexity of our algorithm. In Section~\ref{app:app_fast_time}, we show that our algorithm can be computed in polynomial time.
In Section~\ref{sec:hardness}, we show the hardness of our algorithm. 
In Section~\ref{sec:lim}, we discuss the limitation of this work.
In Section~\ref{sec:impact}, we provide an elaborate discussion about potential societal impacts.

\section{Further Discussion and Extension}\label{sec:discussion}

{\bf Technical novelty over previous works.} 
We generalize beyond the results of \cite{as24_iclr}, which only provide methods for {\it tensor attention forward}. Our paper presents a detailed analysis for {\it tensor attention backward}, providing both upper bound and lower bound. 
Though we build on some results from \cite{as24_iclr} and \cite{as24_arxiv}, generalizing to tensor attention backward posed many technical challenges. These challenges are unique to our setting and not presented in previous settings like matrix attention \citep{as23,as24_arxiv} or tensor attention forward \citep{as24_iclr}. To be more specific, we prove many key properties for the tensor operation needed for backward though not needed for forward, including 
\ref{fac:ominus_oslash_circ_informal} (distribution rule for tensor and matrix product), \ref{fac:oslash_to_odot} (tensor computation reduction to matrix product), \ref{fac:tensor_distribute} (distribution rule for tensor computation), Claim~\ref{cla:tensor_mapping} (tensor product to matrix product). 
Lemma~\ref{lem:fast_tensor_oslash_computation_informal} supports the proof of Fact~\ref{fac:ominus_oslash_circ_informal} and helps bypass the $O(n^3 d^2)$ time complexity bottleneck in the fast computation of $U_2$. Fact~\ref{fac:ominus_oslash_circ_informal}, crucial in proving Lemma~\ref{lem:low_rank_p1}, shows the distributive nature of tensor operations. Using Facts~\ref{fac:oslash_to_odot}, \ref{fac:tensor_distribute}, and Claim~\ref{cla:tensor_mapping}, we leverage the structure of low-rank matrices $U_5, V_5, W_5$ to prove Theorem~\ref{thm:mainalg:informal}.

\paragraph{Connection to real applications.} 
There are some empirical studies attempting to implement similar tensor attention (three order) in language modeling \citep{mzz+19} and 3D medical image segmentation \citep{wqw+23}. However, due to cubic time complexity, their models remain relatively small, e.g, 12M parameters in \cite{mzz+19}. Although small scale, \cite{mzz+19,wqw+23} demonstrates the significant potential of tensor attention. Our work proves that an almost linear time algorithm for tensor attention mechanisms exists (Algorithm~\ref{alg:main}). This advancement could enable the scaling up of tensor attention and facilitate novel model designs in multi-modality, 3D imaging, and beyond.
On the other hand, we abstract the most challenging part (the highest time complexity operation) in high-order attention into a clear mathematical problem and provide a solution. Our work introduces a new concept to the community, suggesting that cubic time complexity may not be the bottleneck in implementing three-order attention during training. 
Practical implementation poses additional significant challenges, considering numerous other techniques and operations, such as dropout, layer normalization, residual connections, position encoding, and many others. 
We hope our work inspires further algorithmic design. 

\paragraph{Feasibility when the large value exists in the matrices.}
If there exist many large entries in $Q, K_1, K_2, V_1, V_2$, our hardness results (Theorem~\ref{thm:mainlb:informal}) indicate that no algorithm can accelerate the attention computation. However, several exciting works \citep{mxjz24,hjk+23} have shown that large entries are very sparse in the attention matrix. This suggests that our Algorithm~\ref{alg:main} could inspire many potential practical implementations. One straightforward approach is to handle large entries separately, as in \cite{hjk+23}, and apply our algorithm to the remaining parts. There is undoubtedly a broad algorithm design space, and we hope our work provides valuable insights.

\paragraph{Extend our technique to compute the module-wise gradient.}
Let $n$ be the input toke length, and $d$ be the hidden dimension. At the $i$-th layer of transformer model, let $G_i \in \mathbb{R}^{n \times d}$ denote the output of upstream gradient, $X_i \in \mathbb{R}^{n \times d}$ be defined in Definition~\ref{def:attention_optimization_loss}, and $\mathsf{Attn}_i := D^{-1}AV$ be the tensor attention model where $D, A, V$ are defined in Definition~\ref{def:tensor_att}. Let $\Loss$ be some loss function. Then, by the chain rule, we have the module-wise gradient $\frac{\mathrm{d} \Loss}{\mathrm{d} X_i}  = \mathrm{vec}(G_i) \frac{\mathrm{d} \mathsf{Attn}_i}{\mathrm{d} X_i}$.

\paragraph{Extend our technique to the multi-head attention.}
The gradient computation for each attention head in the same layer is independent of the others; each head only depends on its upstream gradient and its current module-wise gradient according to the chain rule. Therefore, our analysis can be directly applied to multi-head attention.

\paragraph{Generalize to scenarios involving multiple modalities}
In our three-order attention, one attention module can handle three modalities simultaneously, i.e., $Q, K_1, K_2$. For more modality, e.g., $m>3$ modality, there are two potential solutions in our minds.
First, we could use $m$-order attention, i.e., $Q, K_1, K_2, \dots, K_{m-1}$. The inference and training time complexity for this approach are still unknown, and we leave it as our future work. 
Second, we could use multiple modules of three-order attention. Note that one layer of standard attention may introduce one more modality $K_1$ each time, while one layer of three-order attention may introduce two more modalities $K_1, K_2$ each time. Thus, if we have $m+1$ modality and $Q$ is from one modality, say text, then the standard attention may need $m$ layers to merge all modalities together, whereas three-order attention may only need $\log(m)$ layers to merge them all together.
\section{Related Work} \label{sec:related}

\paragraph{Fast attention computation.}
In recent years, significant advances have been made in the development of efficient attention computation. One research direction involves employing low-rank approximations, polynomial kernel, or random features for the attention matrix \citep{cld+20,zwk22,as23,kmz23,syzz24,gsyz24}, which scales the computational complexity sub-quadratically with sequence length. Another method explores patterns of sparse attention that lessen the computational load \citep{hjk+23}. Additionally, using linear attention as an alternative to softmax attention has emerged as a substantial area of study \citep{kvnf20,sis21,zfb23,acs+24,zbkr24}. These innovations have enhanced the capability of transformer-based models to handle longer sequences, thereby broadening their potential applications across various fields \citep{cqt+23,say24,pqf+24,dzz+24,myx+24,bang23,jhy+24}. On the other hand, FlashAttention~\cite{flash1,flash2} is one of the most popular practical methods to accelerate attention computation, while it achieves a considerable improvement in running time with a constant complexity ratio.

\paragraph{Tensor computation for high-order representation.}
Tensors excel over matrices in capturing higher-order relationships within data \citep{zly+25}. Calculating low-rank factorizations or approximations of tensors is essential in a wide range of computer science applications, such as natural language processing \citep{lzmb15, bnr+15}, computer vision \citep{lfc+16, clz17}, computer graphics \citep{wws+05, v09}, security \citep{acy06, kb06}, and data mining \citep{kabo10, rs10, m11}. Moreover, tensors are crucial in numerous machine learning applications \citep{pbl15, zsjbd17, ysst19} and other diverse fields \citep{rtp16, ycs16, rnss16}.
\paragraph{Large language models and transformer.}
The foundation of the success of generative large language models (LLMs) lies in the decoder-only transformer architecture, as introduced by~\citep{vsp+17}. This architecture has become critical for many leading models in natural language processing (NLP)~\citep{cww+24}. These models have already demonstrated their capabilities in various real-world applications, including language translation~\citep{hwl21}, sentiment analysis~\citep{uas+20}, and language modeling~\citep{mms+19}, due to their emergent ability, e.g., compositional ability~\citep{dls+24,xsl24,gll+24_fourier}, in-context learning~\citep{oen+22,mlh+22,swxl24}. The transformer leverages a self-attention mechanism, which enables the model to identify long-range dependencies within the input sequence. Self-attention calculates a weighted sum of input tokens, with weights based on the similarity between token pairs. This allows the model to focus on pertinent information from various parts of the sequence during output generation.

\section{Tensor Operation Background}\label{sec:background}
In Section~\ref{sec:preliminary:general_definitions}, we define the notation of computational time and the tensor operation. 
In Section~\ref{sec:preliminary:facts_tensor}, we provide some helpful facts of tensor operation. 
In Section~\ref{sec:preliminary:facts_vect}, we provide some helpful facts of vectorization operation. 
In Section~\ref{sec:preliminary:facts_tensor_product}, we provide some helpful facts about the tensor product. 
It is worth noting that proofs for some of the facts discussed in this section are also available in \cite{kb09}.

\subsection{General definitions and tensor operation}\label{sec:preliminary:general_definitions}

\begin{fact}[\cite{bcs13,bla13}]
We can show that
$
    \Tmat(a,b,c) = O(\Tmat(a,c,b)) = O(\Tmat(b,a,c)) = O(\Tmat(b,c,a)) = O(\Tmat(c,a,b)) = O(\Tmat(c,b,a))
$.
\end{fact}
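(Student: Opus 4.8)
The plan is to reduce the statement to two basic symmetries and then compose them. The six permutations of the triple $(a,b,c)$ are generated by the reversal $\sigma\colon(a,b,c)\mapsto(c,b,a)$ together with the cyclic rotation $\tau\colon(a,b,c)\mapsto(b,c,a)$, so it suffices to establish $\Tmat(a,b,c)=\Theta(\Tmat(c,b,a))$ and $\Tmat(a,b,c)=\Theta(\Tmat(b,c,a))$, and then read off every permutation appearing in the statement by composing these two identities. Throughout I will use that any algorithm multiplying an $a\times b$ matrix by a $b\times c$ matrix must at least inspect both inputs and emit the output, so $\Tmat(a,b,c)=\Omega(ab+bc+ca)$; consequently any additive $O(ab+bc+ca)$ bookkeeping overhead incurred by the reductions below is absorbed into the $O(\cdot)$.

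First I would handle the reversal via the transpose identity. If $C=AB$ with $A\in\R^{a\times b}$ and $B\in\R^{b\times c}$, then $C^{\top}=B^{\top}A^{\top}$ with $B^{\top}\in\R^{c\times b}$ and $A^{\top}\in\R^{b\times a}$. Transposition is just a relabeling of indices (at worst a copy whose cost is linear in the matrix size), so any algorithm achieving $\Tmat(c,b,a)$ yields one for $\Tmat(a,b,c)$ up to an $O(ab+bc+ca)$ additive term; by the remark above this gives $\Tmat(a,b,c)=O(\Tmat(c,b,a))$, and the reverse inequality is symmetric, so $\Tmat(a,b,c)=\Theta(\Tmat(c,b,a))$.

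Next I would handle the cyclic rotation by passing through the trilinear form attached to matrix multiplication. Pairing the output $C=XY$ (with $X\in\R^{a\times b}$, $Y\in\R^{b\times c}$) against a dual matrix $Z\in\R^{c\times a}$ gives $\langle Z,XY\rangle=\tr(ZXY)=\tr(XYZ)$, and cyclic invariance of the trace rewrites this as $\tr(YZX)$, which is exactly the trilinear form attached to the product $Y\cdot Z$ of a $b\times c$ matrix by a $c\times a$ matrix, now paired against the dual $X\in\R^{a\times b}$. Hence the matrix-multiplication tensors $\langle a,b,c\rangle$ and $\langle b,c,a\rangle$ are isomorphic, so they have equal bilinear (tensor) rank, and therefore equal asymptotic bilinear complexity. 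Combining this with the classical equivalence, up to constant factors, between the bilinear complexity of a bilinear map and its arithmetic circuit complexity (Strassen's normalization of bilinear circuits; see \cite{bcs13,bla13}) yields $\Tmat(a,b,c)=\Theta(\Tmat(b,c,a))$.

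Finally, composing $\sigma$ and $\tau$ produces $\Tmat(a,b,c)=\Theta(\Tmat(\pi(a),\pi(b),\pi(c)))$ for every $\pi\in S_3$, which in particular covers all five equalities claimed. The one genuinely delicate point is the last transfer in the cyclic step, i.e., moving from equality of bilinear ranks to equality of the running times $\Tmat(\cdot,\cdot,\cdot)$, since $\Tmat$ counts total arithmetic operations rather than bilinear rank; this is a classical normalization fact (an optimal arithmetic circuit for a family of bilinear forms can be converted into a bilinear circuit of the same order of size, modulo the unavoidable $O(ab+bc+ca)$ input/output cost), and I would invoke \cite{bcs13,bla13} for it rather than reprove it. Everything else is routine.
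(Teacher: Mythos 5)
The paper gives no proof of this fact at all --- it is stated purely as a citation to the standard references --- so there is nothing in the paper to compare against step by step; your reconstruction is the classical argument those references contain, and it is correct. Your decomposition is the standard one: the transposition $(a,b,c)\mapsto(c,b,a)$ is handled by $C^\top=B^\top A^\top$ at $O(ab+bc+ca)$ overhead, the $3$-cycle $(a,b,c)\mapsto(b,c,a)$ is handled by the cyclic invariance of the trilinear form $\tr(ZXY)$ (equivalently, the isomorphism of the matrix-multiplication tensors $\langle a,b,c\rangle$ and $\langle b,c,a\rangle$ and hence equality of their ranks), and these two permutations generate $S_3$, so all six orderings follow. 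The only step that is more than bookkeeping is exactly the one you flag: for a single fixed triple, equal tensor rank does not immediately give equal \emph{total} arithmetic cost (a rank-$r$ bilinear decomposition naively costs $O(r(ab+bc+ca))$ operations, not $O(r)$), so the transfer from bilinear complexity to $\Tmat$ genuinely requires the normalization/duality machinery of the cited texts. Deferring that step to the references rather than reproving it is the right call, and is in effect what the paper does for the entire statement.
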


We define the third mode tensor product, which is the core operator of tensor operations. 
\begin{definition}[Third mode tensor product $(\cdot, \cdot, \cdot)$]\label{def:triple_opration}
Let $\X \in \R^{d \times d \times d}$. Given matrices $A_1 \in \R^{n \times d}$, $A_2 \in \R^{n \times d}$ and $A_3 \in \R^{n \times d}$.
Let operator $\X(A_1, A_2, A_3) \in \R^{n \times n \times n}$ satisfying 
\begin{align*}
    \X(A_1, A_2, A_3)_{i,j,l} := \sum_{a = 1}^d \sum_{b = 1}^d \sum_{c=1}^d \X_{a,b,c} (A_1)_{i,a} (A_2)_{j,b} (A_3)_{l,c} ,  ~~~~ \forall i \in [n], j \in [n], l \in [n].
\end{align*}
\end{definition}

\begin{definition}[$\odot$ tensor computation]\label{def:tensor_odot}
Given matrices $A \in \R^{n \times d}$, $B \in \R^{n \times d}$, $C \in \R^{n \times d}$, we use $T = A\odot B \odot C \in \R^{n \times n \times n}$ to denote  an tensor whose entries are given by
\begin{align*}
 T_{i,j,l} := \sum_{a=1}^d  A_{i,a} B_{j,a} C_{l,a}, ~~~ \forall i \in [n], j \in [n], l \in [n] .
\end{align*}
\end{definition}
We note that a tensor $T$ can be written in the form $A\odot B \odot C$ like this if and only if its tensor rank is at most $d$.

\subsection{Facts for tensor operation}\label{sec:preliminary:facts_tensor}

\begin{fact}[Transpose rule]\label{fac:tensor_oslash_ominus_transpose}
We show the results below
\begin{itemize}
\item 
Suppose that $ \underbrace{ K }_{n_1 n_2 \times d} = \underbrace{ K_1 }_{n_1 \times d} \oslash \underbrace{ K_2 }_{n_2 \times d}$. 
We have $\underbrace{ K^\top }_{d \times n_1 n_2} = \underbrace{ K_1^\top }_{d \times n_1} \ominus \underbrace{ K_2^\top }_{d \times n_2}$.
\item 
Suppose that $ \underbrace{ Q }_{n \times d_1 d_2} = \underbrace{ Q_1 }_{n \times d_1} \ominus \underbrace{ Q_2 }_{n \times d_2}$. 
We have $\underbrace{Q^\top }_{d_1 d_2 \times n} = \underbrace{ Q_1^\top }_{d_1 \times n} \oslash \underbrace{ Q_2^\top }_{d_2 \times n}$.
\item 
Suppose that $ \underbrace{ V }_{n_1 n_2 \times d_1 d_2} = \underbrace{ V_1 }_{n_1 \times d_1} \otimes \underbrace{ V_2 }_{n_2 \times d_2}$. 
We have $\underbrace{V^\top }_{d_1 d_2 \times n_1 n_2} = \underbrace{ V_1^\top }_{d_1 \times n_1} \otimes \underbrace{ V_2^\top }_{d_2 \times n_2}$. 
\end{itemize}
\end{fact}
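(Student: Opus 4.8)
The plan is to verify all three identities entrywise, directly from the definitions of the three products (Definitions~\ref{def:tensor_otimes}, \ref{def:tensor_oslash}, \ref{def:tensor_ominus}); no structural insight is needed, only careful bookkeeping of the index encodings $i_1 + (i_2-1)n_1$ and $j_1 + (j_2-1)d_1$, together with the observation that each such encoding is a bijection onto the relevant index range.

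For the first item, I would fix $j \in [d]$, $i_1 \in [n_1]$, $i_2 \in [n_2]$. By Definition~\ref{def:tensor_oslash}, the $(i_1 + (i_2-1)n_1,\, j)$ entry of $K = K_1 \oslash K_2$ equals $(K_1)_{i_1,j}(K_2)_{i_2,j}$, so the $(j,\, i_1 + (i_2-1)n_1)$ entry of $K^\top$ equals $(K_1^\top)_{j,i_1}(K_2^\top)_{j,i_2}$. On the other hand, $K_1^\top \in \R^{d \times n_1}$ and $K_2^\top \in \R^{d \times n_2}$, so by Definition~\ref{def:tensor_ominus} the $(j,\, i_1 + (i_2-1)n_1)$ entry of $K_1^\top \ominus K_2^\top$ is exactly $(K_1^\top)_{j,i_1}(K_2^\top)_{j,i_2}$. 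Since $i_1 + (i_2-1)n_1$ ranges bijectively over $[n_1 n_2]$ as $(i_1,i_2)$ ranges over $[n_1]\times[n_2]$, the two matrices agree in every entry, giving $K^\top = K_1^\top \ominus K_2^\top$.

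The second item is the mirror image, and in fact is logically equivalent to the first: fixing $i \in [n]$, $j_1 \in [d_1]$, $j_2 \in [d_2]$, Definition~\ref{def:tensor_ominus} gives that the $(i,\, j_1 + (j_2-1)d_1)$ entry of $Q = Q_1 \ominus Q_2$ is $(Q_1)_{i,j_1}(Q_2)_{i,j_2}$, hence the $(j_1+(j_2-1)d_1,\, i)$ entry of $Q^\top$ is $(Q_1^\top)_{j_1,i}(Q_2^\top)_{j_2,i}$, which by Definition~\ref{def:tensor_oslash} applied to $Q_1^\top \in \R^{d_1 \times n}$ and $Q_2^\top \in \R^{d_2 \times n}$ is precisely the $(j_1+(j_2-1)d_1,\, i)$ entry of $Q_1^\top \oslash Q_2^\top$. (Alternatively, one may simply transpose the identity of item one and rename the matrices.) For the third item I would fix $i_1\in[n_1], i_2\in[n_2], j_1\in[d_1], j_2\in[d_2]$: by Definition~\ref{def:tensor_otimes} the $(i_1+(i_2-1)n_1,\, j_1+(j_2-1)d_1)$ entry of $V = V_1 \otimes V_2$ is $(V_1)_{i_1,j_1}(V_2)_{i_2,j_2}$, so the $(j_1+(j_2-1)d_1,\, i_1+(i_2-1)n_1)$ entry of $V^\top$ is $(V_1^\top)_{j_1,i_1}(V_2^\top)_{j_2,i_2}$, which is exactly the corresponding entry of $V_1^\top \otimes V_2^\top$ by Definition~\ref{def:tensor_otimes} applied to $V_1^\top \in \R^{d_1 \times n_1}$ and $V_2^\top \in \R^{d_2 \times n_2}$.

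There is no substantive obstacle here; the only point requiring care — and the one I would emphasize — is that in the $\otimes$ convention the \emph{row}-block size is governed by the row count of the first factor (here $n_1$) while the \emph{column}-block size is governed by its column count (here $d_1$), so transposition swaps these two roles in a consistent way, which is precisely why the block structure of $V^\top$ matches $V_1^\top \otimes V_2^\top$; the analogous remark explains why $\oslash$ and $\ominus$ interchange under transposition.
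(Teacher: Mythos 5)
Your proof is correct: the paper itself dismisses this fact with ``The proof is very straightforward,'' and the intended argument is exactly the entrywise index-bookkeeping you carry out, so your proposal simply fills in the details the paper omits. All three verifications check out against Definitions~\ref{def:tensor_otimes}, \ref{def:tensor_oslash}, and \ref{def:tensor_ominus}, including the correct role-swap of row and column block sizes under transposition.
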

\begin{proof}
The proof is very straightforward.
\end{proof}

\begin{fact}[Swap rule]\label{fac:ominus_oslash_swap}
Let $V_1 \in \R^{n \times d}$. Let $V_2 \in \R^{n \times k}$. 
Let $W_1 \in \R^{m \times d}$. Let $W_2 \in \R^{m \times k}$.
We can show swap rule for $\oslash$ and $\ominus$, 
\begin{align*}
    \underbrace{ (V_1 \ominus V_2) }_{n \times dk} \oslash \underbrace{ (W_1 \ominus W_2) }_{m \times dk} = \underbrace{ (V_1 \oslash W_1) }_{m n \times d} \ominus \underbrace{ (V_2 \oslash W_2) }_{m n \times k}
\end{align*}
And we can show swap rule for $\otimes$ and $\ominus$, 
\begin{align*}
    \underbrace{ (V_1 \ominus V_2) }_{n \times dk} \otimes \underbrace{ (W_1 \ominus W_2) }_{m \times dk} = \underbrace{ (V_1 \otimes W_1) }_{m n \times dk} \ominus \underbrace{ (V_2 \otimes W_2) }_{m n \times dk}
\end{align*}
\end{fact}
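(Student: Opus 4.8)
\textbf{Proof proposal for Fact~\ref{fac:ominus_oslash_swap} (Swap rule).}

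The plan is to prove both identities by directly comparing entries on the left-hand side and the right-hand side, using the coordinate definitions of $\oslash$, $\ominus$, and $\otimes$ from Definitions~\ref{def:tensor_oslash}, \ref{def:tensor_ominus}, and \ref{def:tensor_otimes}. Both sides of each claimed identity are matrices of the same stated shape, so it suffices to check that an arbitrary entry agrees, indexed appropriately. First I would set up the index bookkeeping carefully: for the first identity, the row index of both sides ranges over $[mn]$ and the column index over $[dk]$, so I write the row index as $i_1 + (i_2 - 1)m$ with $i_1 \in [m], i_2 \in [n]$ (this is the composite index produced by $\oslash$ applied to $m$-row and $n$-row matrices $W_1 \ominus W_2$ and $V_1 \ominus V_2$ — wait, one must be careful about which factor plays which role), and the column index as $j_1 + (j_2 - 1)d$ with $j_1 \in [d], j_2 \in [k]$.

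The key computational step for the first identity is: on the left, $(V_1 \ominus V_2) \oslash (W_1 \ominus W_2)$ at the appropriate entry equals $(V_1 \ominus V_2)_{i_2, \cdot} \cdot (W_1 \ominus W_2)_{i_1, \cdot}$ at column $j_1 + (j_2-1)d$, which by Definition~\ref{def:tensor_ominus} expands to $(V_1)_{i_2, j_1} (V_2)_{i_2, j_2} \cdot (W_1)_{i_1, j_1} (W_2)_{i_1, j_2}$. On the right, $(V_1 \oslash W_1) \ominus (V_2 \oslash W_2)$ at the same entry is $(V_1 \oslash W_1)_{\cdot, j_1} \cdot (V_2 \oslash W_2)_{\cdot, j_2}$ evaluated at the composite row index, which by Definition~\ref{def:tensor_oslash} expands to $(V_1)_{i_2,j_1}(W_1)_{i_1,j_1} \cdot (V_2)_{i_2,j_2}(W_2)_{i_1,j_2}$. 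These two products are equal after commuting scalars, so the identity holds — provided the composite row index $i_1 + (i_2-1)m$ is read consistently on both sides, which is exactly the point one has to verify rather than wave at. The second identity (with $\otimes$ in place of the outer $\oslash$) is analogous: the only change is that the column index of the outer product now also splits into a Kronecker composite $j_1' + (j_2' - 1)\cdot(\text{something})$, and one checks the four-fold product of scalars matches on both sides; the algebra is identical in spirit.

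The main obstacle — and really the only nontrivial part — is getting the composite-index conventions to line up. The operators $\oslash$, $\ominus$, $\otimes$ each linearize a pair of indices into one, but with specific orderings (which factor is the "fast" index), and when two such operators are nested the linearization order matters. The danger is an off-by-a-transposition error where the two sides agree as multisets of entries but are arranged differently, i.e. the identity is true only up to a permutation of rows/columns. So the careful step is to fix, once and for all, the decoding maps $[mn] \to [m] \times [n]$ and $[dk] \to [d] \times [k]$ induced by each operator as written in the fact statement, and then verify both sides decode identically; once that is pinned down, the scalar arithmetic is a one-line commutation. I would present it as: "We prove the first identity; the second is analogous. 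Fix $i_1 \in [m]$, $i_2 \in [n]$, $j_1 \in [d]$, $j_2 \in [k]$ and consider the entry at row $i_1 + (i_2-1)m$ and column $j_1 + (j_2-1)d$. By Definitions~\ref{def:tensor_oslash} and \ref{def:tensor_ominus}, the LHS entry equals $\dots$ and the RHS entry equals $\dots$; these coincide."
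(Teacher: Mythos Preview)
Your approach is exactly the paper's: an entry-by-entry check using Definitions~\ref{def:tensor_oslash} and~\ref{def:tensor_ominus}, reducing both sides to the same four-fold scalar product $(V_1)_{\cdot,j_1}(V_2)_{\cdot,j_2}(W_1)_{\cdot,j_1}(W_2)_{\cdot,j_2}$. One small slip to fix in your write-up: with the paper's conventions, for $(V_1 \ominus V_2)\oslash(W_1 \ominus W_2)$ the first factor has $n$ rows, so the composite row index is $i_1+(i_2-1)n$ with $i_1\in[n]$, $i_2\in[m]$ (not $i_1+(i_2-1)m$ with $i_1\in[m]$); you flagged exactly this as the point to be careful about, so just carry the correct decoding through.
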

\begin{proof}
The proof is trivially following from definition of $\oslash$ and $\ominus$.

Note that for any $i_1 \in [n], i_2 \in [m], j_1 \in [d], j_2 \in [k]$
\begin{align*}
& ~ ( (V_1 \ominus V_2) \oslash (W_1 \ominus W_2) )_{ i_1 + (i_2-1) n , j_1+ (j_2-1)d } \\
= & ~ (V_1)_{i_1,j_1} (V_2)_{i_1,j_2} (W_1)_{i_2,j_1} (W_2)_{i_2,j_2} \\
= & ~  (  (V_1 \oslash W_1) \ominus (V_2 \oslash W_2) ) _{ i_1 + (i_2-1) n , j_1+ (j_2-1)d }
\end{align*}

Thus, we complete the proof. 
\end{proof}
\begin{remark}
In Fact~\ref{fac:ominus_oslash_swap}, due to definition $V_1$ and $V_2$ need to have the same number of rows. $W_1$ and $W_2$ also need to have the same number of rows. $V_1$ and $W_1$ need to have same number of columns, and $V_2$ and $W_2$ need to have same number of columns.
\end{remark}

\begin{fact}[Swap rule for tensor product and matrix product] 
\label{fac:cdot_oslash_swap}
    Let $W_1, W_2 \in \R^{d \times d}$ and $A_1, A_2 \in \R^{n\times d}$. We have
    \begin{align*}
        \underbrace{(A_1 \otimes A_2)}_{{n^2\times d^2}} \cdot \underbrace{(W_1 \oslash W_2)}_{{d^2\times d}} = \underbrace{(A_1 \cdot W_1)}_{{n \times d}} \oslash \underbrace{(A_2 \cdot W_2)}_{{n \times d}} .
    \end{align*}
\end{fact}
\begin{proof}
For any $i_1, i_2 \in [n], j \in [d]$, we have 
\begin{align*}
    & ~ ((A_1 \otimes A_2) \cdot (W_1 \oslash W_2))_{i_1+(i_2-1)n, j} \\ 
    = & ~ \sum_{k_1 \in [d], k_2 \in [d]}  (A_1 \otimes A_2)_{i_1+(i_2-1)n, k_1+(k_2-1)d} (W_1 \oslash W_2)_{k_1+(k_2-1)d, j}\\
    = & ~ \sum_{k_1 \in [d], k_2 \in [d]}  (A_1 \otimes A_2)_{i_1+(i_2-1)n, k_1+(k_2-1)d} \cdot (W_1)_{k_1, j} \cdot (W_2)_{k_2, j}\\
    = & ~ \sum_{k_1 \in [d], k_2 \in [d]}  (A_1)_{i_1,k_1} \cdot (A_2)_{i_2,k_2} \cdot (W_1)_{k_1, j} \cdot (W_2)_{k_2, j}\\
    = & ~ ( \sum_{k_1 \in [d]}  (A_1)_{i_1,k_1}   \cdot (W_1)_{k_1, j} ) \cdot ( \sum_{k_2 \in [d]} (A_2)_{i_2,k_2} \cdot (W_2)_{k_2, j} ) \\
    = & ~  (A_1 \cdot W_1)_{i_1, j}  \cdot (A_2 \cdot W_2)_{i_2, j} \\
    = & ~ ((A_1 \cdot W_1) \oslash (A_2 \cdot W_2))_{i_1+(i_2-1)n, j},
\end{align*}
where the first step follows matrix multiplication, the second step follows Definition~\ref{def:tensor_oslash}, the third step follows Definition~\ref{def:tensor_otimes}, the fourth step follows simple algebra, the fifth step follows matrix multiplication and the last step follows Definition~\ref{def:tensor_oslash}.  
\end{proof}

\begin{fact}[Restatement of Fact~\ref{fac:ominus_oslash_circ_informal}]\label{fac:ominus_oslash_circ}

Let $U_1 \in \R^{n_1 \times d}$ and $U_2 \in \R^{n_1 \times k}$. 
Let $V_1 \in \R^{n_2 \times d}$ and $V_2 \in \R^{n_2 \times k}$. Let $W_1 \in \R^{n_3 \times d}$ and $W_2 \in \R^{n_3 \times k}$. 
We have
\begin{align*}
  \underbrace{ (U_1 \ominus U_2) }_{n_1 \times dk} \cdot ( \underbrace{ (V_1 \ominus V_2 ) }_{n_2 \times dk} \oslash \underbrace{ (W_1 \ominus W_2) }_{n_3 \times dk} )^\top  
= ( \underbrace{U_1 }_{n_1 \times d} ( \underbrace{ V_1 }_{n_2 \times d} \oslash \underbrace{ W_1 }_{n_3 \times d} )^\top) \circ  ( \underbrace{ U_2 }_{n_1 \times k} ( \underbrace{ V_2 }_{n_2 \times k} \oslash \underbrace{ W_2 }_{n_3 \times k} )^\top ) 
\end{align*}

\end{fact}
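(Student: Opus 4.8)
\textbf{Proof proposal for Fact~\ref{fac:ominus_oslash_circ}.}

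The plan is to verify the claimed identity entry-by-entry, exactly in the style of the proof of Fact~\ref{fac:ominus_oslash_swap} and Fact~\ref{fac:cdot_oslash_swap} given above. Both sides of the asserted equation are matrices of size $n_1 \times n_2 n_3$, so I would fix an arbitrary row index $i_1 \in [n_1]$ and an arbitrary column index of the form $i_2 + (i_3 - 1) n_2$ with $i_2 \in [n_2]$, $i_3 \in [n_3]$ (this is the indexing convention forced by $V_1 \oslash W_1 \in \R^{n_2 n_3 \times d}$ via Definition~\ref{def:tensor_oslash}), and show the two sides agree at that location.

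First I would expand the left-hand side. By Fact~\ref{fac:tensor_oslash_ominus_transpose}, $((V_1 \ominus V_2) \oslash (W_1 \ominus W_2))^\top = (V_1 \ominus V_2)^\top \oslash (W_1 \ominus W_2)^\top$ is a matrix with $dk$ rows, so the matrix product $(U_1 \ominus U_2) \cdot (\cdot)^\top$ is a sum over a single index running in $[dk]$, which I reindex as $j_1 + (j_2 - 1) d$ with $j_1 \in [d]$, $j_2 \in [k]$. Using Definition~\ref{def:tensor_ominus} for $U_1 \ominus U_2$ and Definition~\ref{def:tensor_oslash} for the $\oslash$ of the two transposed row-wise products, the $(i_1, j_1+(j_2-1)d)$ entry of $U_1 \ominus U_2$ is $(U_1)_{i_1,j_1}(U_2)_{i_1,j_2}$, and the $(j_1+(j_2-1)d, i_2+(i_3-1)n_2)$ entry of the right factor is $(V_1)_{i_2,j_1}(V_2)_{i_2,j_2}\cdot (W_1)_{i_3,j_1}(W_2)_{i_3,j_2}$. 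Hence the LHS entry equals
\begin{align*}
\sum_{j_1 \in [d]} \sum_{j_2 \in [k]} (U_1)_{i_1,j_1}(U_2)_{i_1,j_2}(V_1)_{i_2,j_1}(V_2)_{i_2,j_2}(W_1)_{i_3,j_1}(W_2)_{i_3,j_2}.
\end{align*}
The double sum factors as a product of two single sums, one over $j_1 \in [d]$ collecting the subscript-$1$ matrices and one over $j_2 \in [k]$ collecting the subscript-$2$ matrices:
\begin{align*}
\Big( \sum_{j_1 \in [d]} (U_1)_{i_1,j_1}(V_1)_{i_2,j_1}(W_1)_{i_3,j_1} \Big) \cdot \Big( \sum_{j_2 \in [k]} (U_2)_{i_1,j_2}(V_2)_{i_2,j_2}(W_2)_{i_3,j_2} \Big).
\end{align*}

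Then I would recognize each factor as an entry of the corresponding term on the right-hand side: by Definition~\ref{def:tensor_oslash} and the definition of matrix multiplication, $\sum_{j_1} (U_1)_{i_1,j_1}(V_1)_{i_2,j_1}(W_1)_{i_3,j_1} = (U_1 (V_1 \oslash W_1)^\top)_{i_1, i_2 + (i_3-1)n_2}$, and likewise for the subscript-$2$ factor with $(U_2 (V_2 \oslash W_2)^\top)$; their product is exactly the $(i_1, i_2+(i_3-1)n_2)$ entry of the Hadamard product $\circ$ on the RHS. Since $i_1, i_2, i_3$ were arbitrary, the identity holds. The only mild subtlety — and the one place I would be careful — is bookkeeping the two flattening conventions simultaneously (the $dk \to (j_1, j_2)$ column index on the left and the $n_2 n_3 \to (i_2, i_3)$ index on both sides), making sure the offsets $j_1 + (j_2-1)d$ and $i_2 + (i_3-1)n_2$ match Definitions~\ref{def:tensor_oslash} and~\ref{def:tensor_ominus} consistently throughout; no step is analytically hard, it is purely an index-chasing argument.
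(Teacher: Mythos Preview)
Your entry-by-entry argument is correct and complete. The paper takes a different, more modular route: it first applies the swap rule (Fact~\ref{fac:ominus_oslash_swap}) to rewrite $(V_1 \ominus V_2) \oslash (W_1 \ominus W_2)$ as $(V_1 \oslash W_1) \ominus (V_2 \oslash W_2)$, then uses the transpose rule (Fact~\ref{fac:tensor_oslash_ominus_transpose}) twice, and finally invokes Lemma~\ref{lem:fast_tensor_oslash_computation}, which says $(A_1 \oslash A_2)^\top (B_1 \oslash B_2) = (A_1^\top B_1) \circ (A_2^\top B_2)$. Your approach is more elementary and self-contained --- it effectively unrolls all three of those auxiliary facts into a single index computation --- whereas the paper's proof reuses previously established building blocks and makes the structural dependencies among the tensor identities explicit.

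One cosmetic slip: where you invoke Fact~\ref{fac:tensor_oslash_ominus_transpose} you write $((V_1 \ominus V_2) \oslash (W_1 \ominus W_2))^\top = (V_1 \ominus V_2)^\top \oslash (W_1 \ominus W_2)^\top$, but the transpose rule actually gives $\ominus$ on the right, not $\oslash$ (and indeed the $\oslash$ of those two transposes is not even dimensionally well-defined unless $n_2 = n_3$). This is harmless, since your subsequent computation works directly with the entries of $(V_1 \ominus V_2) \oslash (W_1 \ominus W_2)$ before transposing and never relies on that displayed equation.
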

\begin{proof}[Proof of Fact~\ref{fac:ominus_oslash_circ_informal}]
We can show that
\begin{align*}
    (U_1 \ominus U_2)  ( (V_1 \ominus V_2 ) \oslash (W_1 \ominus W_2) )^\top 
    = & ~ (U_1 \ominus U_2)  ( (V_1 \oslash W_1 ) \ominus (V_2 \oslash W_2) )^\top \\
    = & ~ (U_1 \ominus U_2)  ( (V_1 \oslash W_1 )^\top \oslash (V_2 \oslash W_2)^\top ) \\
    = & ~ (U_1^\top \oslash U_2^\top)^\top  ( (V_1 \oslash W_1 )^\top \oslash (V_2 \oslash W_2)^\top ) \\
    = & ~ (U_1 (V_1 \oslash W_1 )^\top ) \circ  (  U_2 (V_2 \oslash W_2)^\top ) 
\end{align*}
where first step is due to swapping rule for $\oslash$ and $\ominus$ (see Fact~\ref{fac:ominus_oslash_swap}), the second step follows from Fact~\ref{fac:tensor_oslash_ominus_transpose}, the third step follows from Fact~\ref{fac:tensor_oslash_ominus_transpose}, and
the last step follows from Lemma~\ref{lem:fast_tensor_oslash_computation}.
\end{proof}

\begin{fact}

Let $U_1 \in \R^{n_1 \times d^2}$ and $U_2 \in \R^{n_1 \times k^2}$. 
Let $V_1 \in \R^{n_2 \times d}$ and $V_2 \in \R^{n_2 \times k}$. Let $W_1 \in \R^{n_3 \times d}$ and $W_2 \in \R^{n_3 \times k}$. 
We have
\begin{align*}
  \underbrace{ (U_1 \ominus U_2) }_{n_1 \times d^2k^2} \cdot ( \underbrace{ (V_1 \ominus V_2 ) }_{n_2 \times dk} \otimes  \underbrace{ (W_1 \ominus W_2) }_{n_3 \times dk} )^\top  
= ( \underbrace{U_1 }_{n_1 \times d^2} ( \underbrace{ V_1 }_{n_2 \times d} \otimes \underbrace{ W_1 }_{n_3 \times d} )^\top) \circ  ( \underbrace{ U_2 }_{n_1 \times k^2} ( \underbrace{ V_2 }_{n_2 \times k} \otimes \underbrace{ W_2 }_{n_3 \times k} )^\top ) 
\end{align*}

\end{fact}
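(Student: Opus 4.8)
The plan is to prove this identity entry-wise, exactly mirroring the argument used for Fact~\ref{fac:ominus_oslash_circ} but now carrying tensor products $\otimes$ through the algebra instead of column-wise products $\oslash$. First I would record the transpose rules needed: since $V_1 \ominus V_2$ and $W_1 \ominus W_2$ both have $dk$ columns, the swap rule for $\ominus$ and $\otimes$ (the second display of Fact~\ref{fac:ominus_oslash_swap}) gives
\begin{align*}
    (V_1 \ominus V_2) \otimes (W_1 \ominus W_2) = (V_1 \otimes W_1) \ominus (V_2 \otimes W_2),
\end{align*}
and by Fact~\ref{fac:tensor_oslash_ominus_transpose} the transpose of a $\ominus$ is a $\oslash$ of transposes, so
\begin{align*}
    \big((V_1 \ominus V_2) \otimes (W_1 \ominus W_2)\big)^\top = (V_1 \otimes W_1)^\top \oslash (V_2 \otimes W_2)^\top.
\end{align*}
Likewise $(U_1 \ominus U_2) = (U_1^\top \oslash U_2^\top)^\top$.

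With these rewrites, the left-hand side becomes $(U_1^\top \oslash U_2^\top)^\top \cdot \big( (V_1 \otimes W_1)^\top \oslash (V_2 \otimes W_2)^\top \big)$, which is exactly the shape to which Lemma~\ref{lem:fast_tensor_oslash_computation} (stated as $C_1 \circ C_2 = C$ with $C = A^\top B$, $A = A_1 \oslash A_2$, $B = B_1 \oslash B_2$) applies: take $A_1 = U_1^\top$, $A_2 = U_2^\top$, $B_1 = (V_1 \otimes W_1)^\top$, $B_2 = (V_2 \otimes W_2)^\top$. This immediately yields
\begin{align*}
    (U_1 \ominus U_2) \cdot \big( (V_1 \ominus V_2) \otimes (W_1 \ominus W_2) \big)^\top
    = \big(U_1 (V_1 \otimes W_1)^\top\big) \circ \big(U_2 (V_2 \otimes W_2)^\top\big),
\end{align*}
which is the claimed right-hand side. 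So the proof is a four-line chain of rule applications identical in spirit to the proof of Fact~\ref{fac:ominus_oslash_circ}, just substituting $\otimes$ for the inner $\oslash$. I would also double-check the dimension bookkeeping: $U_1 \ominus U_2 \in \R^{n_1 \times d^2 k^2}$, $(V_1 \ominus V_2)\otimes(W_1\ominus W_2) \in \R^{n_2 n_3 \times d^2 k^2}$ — consistent, since a $\ominus$ of $dk$-column matrices has $dk$ columns and an $\otimes$ of two such has $d^2k^2$ columns — and on the right $U_1(V_1\otimes W_1)^\top \in \R^{n_1 \times n_2 n_3}$, $U_2 (V_2\otimes W_2)^\top \in \R^{n_1 \times n_2 n_3}$, so the Hadamard product is well-defined and of the right size.

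The only genuine subtlety — the "main obstacle," such as it is — is verifying that the hypotheses of Fact~\ref{fac:ominus_oslash_swap} and Lemma~\ref{lem:fast_tensor_oslash_computation} are met after the rewrites, namely that $U_1^\top$ and $(V_1\otimes W_1)^\top$ have the same number of rows ($d^2$) while $U_2^\top$ and $(V_2\otimes W_2)^\top$ have the same number of rows ($k^2$), and that the two factors being Hadamard-multiplied genuinely share both dimensions. These all hold by the stated column counts $d^2$ for $U_1$ and $k^2$ for $U_2$, so no new technical work is needed; the statement is essentially a $d^2$-versus-$d$ generalization of Fact~\ref{fac:ominus_oslash_circ} and falls out of the same machinery. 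If a more elementary presentation were desired, one could alternatively expand both sides at the $(i, j_2 + (j_3-1)n_2)$-entry and match the double sum $\sum_{a,b} (U_1)_{i,\cdot}(U_2)_{i,\cdot}(V_1)_{j_2,\cdot}(V_2)_{j_2,\cdot}(W_1)_{j_3,\cdot}(W_2)_{j_3,\cdot}$ directly, but the rule-chaining proof is cleaner and is what I would write.
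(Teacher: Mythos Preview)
Your proposal is correct and follows exactly the same chain of rule applications as the paper's own proof: swap rule for $\otimes$ and $\ominus$ (Fact~\ref{fac:ominus_oslash_swap}), then two applications of the transpose rule (Fact~\ref{fac:tensor_oslash_ominus_transpose}), and finally Lemma~\ref{lem:fast_tensor_oslash_computation}. Your dimension check for the applicability of Lemma~\ref{lem:fast_tensor_oslash_computation} is also correct and is the only point one needs to verify carefully.
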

\begin{proof}

We can show that,
\begin{align*}
      & ~ \underbrace{ (U_1 \ominus U_2) }_{n_1 \times d^2k^2} \cdot ( \underbrace{ (V_1 \ominus V_2 ) }_{n_2 \times dk} \otimes  \underbrace{ (W_1 \ominus W_2) }_{n_3 \times dk} )^\top \\
      = & ~ \underbrace{ (U_1 \ominus U_2) }_{n_1 \times d^2k^2} \cdot ( ( V_1 \otimes W_1 ) \ominus (V_2 \otimes W_2) )^\top \\
      = & ~ (U_1 \ominus U_2) \cdot ((V_1 \otimes W_1)^\top \oslash (V_2 \otimes W_2)^\top)\\
      = & ~ (U_1^\top \oslash U_2^\top)^\top \cdot ((V_1 \otimes W_1)^\top \oslash (V_2 \otimes W_2)^\top)\\
      = & ~ ( \underbrace{U_1 }_{n_1 \times d^2} ( \underbrace{ V_1 }_{n_2 \times d} \otimes \underbrace{ W_1 }_{n_3 \times d} )^\top) \circ  ( \underbrace{ U_2 }_{n_1 \times k^2} ( \underbrace{ V_2 }_{n_2 \times k} \otimes \underbrace{ W_2 }_{n_3 \times k} )^\top ) 
\end{align*}
where the first step is because of the swap rule for $\otimes$ and $\ominus$ (see Fact~\ref{fac:ominus_oslash_swap}), the second step follows from Fact~\ref{fac:tensor_oslash_ominus_transpose}, the third step follows from Fact~\ref{fac:tensor_oslash_ominus_transpose}, and the last step follows from Lemma~\ref{lem:fast_tensor_oslash_computation}.
\end{proof}

\begin{claim}
Let $A, B, C \in \R^{n \times d}$.

{\bf Part 1.}
Let $I_d \in \R^{d \times d}$ denote an identity matrix. Then, we have
\begin{align*}
    A I_d B^\top = A B^\top.
\end{align*}

{\bf Part 2.}
Let $\mathsf{I}_d \in \R^{d \times d \times d}$ denote an identity tensor. Then we can show that
\begin{align*}
    \mathsf{I}_d (A,B,C) = A \odot B \odot C
\end{align*}
\end{claim}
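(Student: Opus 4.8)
The statement to prove is a pair of elementary identities about the identity matrix and the identity tensor, so the proof is a direct unfolding of definitions rather than anything requiring clever machinery.

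\textbf{Plan.} For Part 1, the plan is to write out the $(i,j)$ entry of $A I_d B^\top$ as a double sum over the index running through $I_d$ and the index running through the shared dimension, then use that $(I_d)_{a,b} = \mathbbm{1}[a=b]$ to collapse one of the two sums, recovering exactly $\sum_a A_{i,a} B_{j,a} = (AB^\top)_{i,j}$. Concretely, $(A I_d B^\top)_{i,j} = \sum_{a=1}^d \sum_{b=1}^d A_{i,a} (I_d)_{a,b} (B^\top)_{b,j} = \sum_{a=1}^d A_{i,a} B_{j,a}$, which is $(AB^\top)_{i,j}$. Since $i,j$ are arbitrary this gives the matrix identity.

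\textbf{Part 2.} For the tensor version I would mimic the same computation one dimension higher, invoking Definition~\ref{def:triple_opration} for the third-mode product and Definition~\ref{def:tensor_odot} for $\odot$. Expanding, $\mathsf{I}_d(A,B,C)_{i,j,l} = \sum_{a=1}^d \sum_{b=1}^d \sum_{c=1}^d (\mathsf{I}_d)_{a,b,c} A_{i,a} B_{j,b} C_{l,c}$, and since $(\mathsf{I}_d)_{a,b,c} = \mathbbm{1}[a=b=c]$ the triple sum collapses to $\sum_{a=1}^d A_{i,a} B_{j,a} C_{l,a} = (A\odot B\odot C)_{i,j,l}$ by Definition~\ref{def:tensor_odot}. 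As $i,j,l\in[n]$ are arbitrary, the two tensors agree entrywise, completing the proof.

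\textbf{Main obstacle.} There is essentially no obstacle here: both parts are just careful bookkeeping with Kronecker-delta contractions. The only thing to be a little careful about is matching the indexing conventions exactly as set up in Definitions~\ref{def:triple_opration} and~\ref{def:tensor_odot} (in particular that $\odot$ contracts all three matrices along their common column index $a$), so that the collapsed sums literally coincide with the stated right-hand sides rather than merely being equal up to a relabeling.
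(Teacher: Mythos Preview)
Your proposal is correct and follows essentially the same approach as the paper: for Part~1 the paper simply invokes the identity matrix property (your entrywise expansion is a harmless elaboration), and for Part~2 the paper does exactly your computation, expanding via Definition~\ref{def:triple_opration}, collapsing the triple sum using $(\mathsf{I}_d)_{a,b,c}=1$ iff $a=b=c$, and identifying the result with Definition~\ref{def:tensor_odot}.
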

\begin{proof}

Now we prove for each part.

{\bf Proof of Part1.}
Using the property of identity matrix, it's easy to see this holds.

{\bf Proof of Part2.}
\begin{align*}
    \mathsf{I}_d (A,B,C) = & ~ \sum_{a = 1}^d \sum_{b = 1}^d \sum_{c=1}^d (\mathsf{I}_d)_{a,b,c} (A)_{i,a} (B)_{j,b} (C)_{l,c}\\
    = & ~ \sum_{a = 1}^d  (A)_{i,a} (B)_{j,a} (C)_{l,a}\\
    = & ~ A \odot B \odot C
\end{align*}
where the first step follows from Definition~\ref{def:triple_opration}, the second step follows from the property of identity tensor $(\mathsf{I}_d)_{i,j,k}$, which equals $1$ only when $i=j=k$ and $0$ elsewhere, and the last step follows from Definition~\ref{def:tensor_odot}.
\end{proof}

\begin{fact}\label{fac:oslash_to_odot}
Let $U,V,W \in \R^{n \times d}$, we have
\begin{align*}
    \underbrace{U(V\oslash W)^\top}_{{n \times n^2}} = \underbrace{\mathsf{mat} (U \odot V \odot W)}_{{n \times n^2}}. 
\end{align*}
\end{fact}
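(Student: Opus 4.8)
The statement to prove is Fact~\ref{fac:oslash_to_odot}: for $U,V,W \in \R^{n \times d}$, we have $U(V\oslash W)^\top = \mathsf{mat}(U \odot V \odot W)$.

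\textbf{Proof plan.} The plan is to verify the claimed matrix identity entrywise, indexing the columns of both sides by a pair $(j,l) \in [n]\times[n]$ via the flattening convention $(j,l) \mapsto j + (l-1)n$ (the same convention used in Definition~\ref{def:tensor_oslash} and in the $\mathsf{mat}(\cdot)$ operator applied to an $n \times n \times n$ tensor). First I would fix arbitrary indices $i \in [n]$ and $j,l \in [n]$ and expand the left-hand side: by the definition of matrix multiplication, $(U(V\oslash W)^\top)_{i, j+(l-1)n} = \sum_{a=1}^d U_{i,a} (V\oslash W)_{j+(l-1)n, a}$. Then, applying Definition~\ref{def:tensor_oslash} to the term $(V\oslash W)_{j+(l-1)n,a} = V_{j,a} W_{l,a}$, this becomes $\sum_{a=1}^d U_{i,a} V_{j,a} W_{l,a}$.

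Next I would expand the right-hand side: by Definition~\ref{def:tensor_odot}, the tensor $T := U \odot V \odot W \in \R^{n\times n\times n}$ has entries $T_{i,j,l} = \sum_{a=1}^d U_{i,a} V_{j,a} W_{l,a}$, and by the definition of $\mathsf{mat}(\cdot)$ (the inverse of the tensorization described in Section~\ref{sec:preli}, so that $\mathsf{mat}(T)_{i, j+(l-1)n} = T_{i,j,l}$) we get $\mathsf{mat}(U\odot V\odot W)_{i,j+(l-1)n} = \sum_{a=1}^d U_{i,a} V_{j,a} W_{l,a}$. Since the two expressions agree for every choice of $i,j,l$, the matrices are equal, completing the proof.

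\textbf{Main obstacle.} This is a routine entrywise computation, so there is no real obstacle; the only point requiring care is bookkeeping the index-flattening conventions consistently — specifically ensuring that the column index $j+(l-1)n$ used to unpack $\oslash$ on the left matches exactly the column index used by $\mathsf{mat}(\cdot)$ on the right. As long as both sides use the convention from Definitions~\ref{def:tensor_oslash} and the tensorization/matricization map of Section~\ref{sec:preli}, the identity falls out immediately. No nontrivial tensor algebra (e.g. Facts~\ref{fac:ominus_oslash_swap} or \ref{fac:cdot_oslash_swap}) is needed here; this Fact is itself a building block for those downstream arguments.
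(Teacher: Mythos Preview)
Your proposal is correct and takes essentially the same approach as the paper: an entrywise verification expanding both sides via Definition~\ref{def:tensor_oslash}, Definition~\ref{def:tensor_odot}, and the $\mathsf{mat}(\cdot)$ convention. The only minor discrepancy is that the paper's $\mathsf{mat}$ convention is $\mathsf{mat}(T)_{i,(j-1)n+l}=T_{i,j,l}$ (see Section~\ref{sec:preli}), not $\mathsf{mat}(T)_{i,j+(l-1)n}=T_{i,j,l}$ as you wrote, so your bookkeeping caveat is apt --- but once the conventions are aligned the argument is identical.
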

\begin{proof}
For any $i,j,k \in [n]$, we have 
\begin{align*}
\mathsf{mat} (U \odot V \odot W)_{i,(j-1)n+k} = & ~ (U \odot V \odot W)_{i,j,k} \\
= & ~ \sum_{a \in [d]} U_{i,a} V_{j,a} W_{k,a} \\
= & ~ \sum_{a \in [d]} U_{i,a} (V\oslash W)_{(j-1)n+k, a}\\
= & ~ \sum_{a \in [d]} U_{i,a} ((V\oslash W)^\top)_{a, (j-1)n+k}\\
= & ~ (U(V\oslash W)^\top)_{i,(j-1)n+k},
\end{align*}
where the first step by definition of $\mathsf{mat}$, the second step follows Definition~\ref{def:tensor_odot}, the third step follows Definition~\ref{def:tensor_oslash}, the fourth step follows from transpose, and the last step follows from matrix multiplication. 
\end{proof}

\begin{fact}\label{fac:tensor_distribute}
Given $A_1, A_2, A_3 \in \R^{n \times d}$ and $W_1,W_2,W_3 \in \R^{n \times k}$,
we have
\begin{align*}
    \underbrace{[ W_1 \odot W_2 \odot W_3 ] ( A_1^\top, A_2^\top, A_3^\top )}_{{d \times d \times d}} = \underbrace{( (A_1^\top W_1) \odot ( A_2^\top W_2) \odot ( A_3^\top W_3)  )}_{{d \times d \times d}}.
\end{align*}
\end{fact}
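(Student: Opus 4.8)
\textbf{Proof proposal for Fact~\ref{fac:tensor_distribute}.}

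The plan is to prove the identity entry-wise, comparing the $(p,q,r)$-entry of both sides for arbitrary $p,q,r \in [d]$. First I would expand the left-hand side using Definition~\ref{def:triple_opration} applied to the tensor $W_1 \odot W_2 \odot W_3 \in \R^{n \times n \times n}$ with the three matrices $A_1^\top, A_2^\top, A_3^\top \in \R^{d \times n}$ (note the roles of $n$ and $d$ are swapped relative to the statement of Definition~\ref{def:triple_opration}, so the ``feature'' index here runs over $[n]$ and the ``output'' index over $[d]$). This gives
\begin{align*}
    \bigl[ (W_1 \odot W_2 \odot W_3)(A_1^\top, A_2^\top, A_3^\top) \bigr]_{p,q,r} = \sum_{i=1}^n \sum_{j=1}^n \sum_{l=1}^n (W_1 \odot W_2 \odot W_3)_{i,j,l} (A_1^\top)_{p,i} (A_2^\top)_{q,j} (A_3^\top)_{r,l}.
\end{align*}

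Next I would substitute the definition of $W_1 \odot W_2 \odot W_3$ from Definition~\ref{def:tensor_odot}, namely $(W_1 \odot W_2 \odot W_3)_{i,j,l} = \sum_{a=1}^k (W_1)_{i,a} (W_2)_{j,a} (W_3)_{l,a}$, and then interchange the order of summation, pulling the sum over $a \in [k]$ to the outside. The quadruple sum then factors: the sum over $i$ couples only $(W_1)_{i,a}$ with $(A_1^\top)_{p,i} = (A_1)_{i,p}$, yielding $\sum_{i=1}^n (A_1)_{i,p}(W_1)_{i,a} = (A_1^\top W_1)_{p,a}$, and similarly for the $j$ and $l$ sums. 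This leaves $\sum_{a=1}^k (A_1^\top W_1)_{p,a} (A_2^\top W_2)_{q,a} (A_3^\top W_3)_{r,a}$, which is exactly the $(p,q,r)$-entry of $(A_1^\top W_1) \odot (A_2^\top W_2) \odot (A_3^\top W_3)$ by Definition~\ref{def:tensor_odot}. Since $p,q,r$ were arbitrary, the two tensors agree.

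This is entirely routine — it is the tensor analogue of the matrix identity $(A_1^\top)(W_1 W_2^\top)(A_2) = (A_1^\top W_1)(A_2^\top W_2)^\top$, just carried out in three modes — so there is no real obstacle; the only point requiring mild care is bookkeeping the index conventions of Definition~\ref{def:triple_opration} (where the summed-over ``inner'' dimensions are the row dimensions $n$ of the $A_i$, and the free output dimensions are their column dimensions $d$) and correctly recognizing $(A_i^\top)_{\cdot,\cdot}$ as entries of $A_i^\top W_i$ after the interchange of sums.
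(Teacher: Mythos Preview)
Your proposal is correct and follows exactly the approach the paper intends: the paper's own proof simply says ``The proof is trivial by Definition~\ref{def:tensor_odot} and Definition~\ref{def:triple_opration},'' and your entry-wise expansion is precisely the unpacking of that one-liner. Your bookkeeping of the index roles in Definition~\ref{def:triple_opration} is accurate.
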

\begin{proof}
    The proof is trivial by Definition~\ref{def:tensor_odot} and Definition~\ref{def:triple_opration}.
\end{proof}

We prove an important tool, which will be used in analyzing the running time of our algorithm.
\begin{lemma}[
Formal version of Lemma~\ref{lem:fast_tensor_oslash_computation_informal}
]\label{lem:fast_tensor_oslash_computation}
If the following condition holds
\begin{itemize}
\item Let $\oslash$ be defined as Definition~\ref{def:tensor_oslash}.
\item Given $A_1 \in \R^{n_1 \times d_1}$, $A_2 \in \R^{n_2 \times d_1}$, let $A := (A_1 \oslash A_2) \in \R^{n_1 n_2 \times d_1}$.
\item 
Given $B_1 \in \R^{n_1 \times d_2}$, $B_2 \in \R^{n_2 \times d_2}$, let $B := (B_1 \oslash B_2) \in \R^{n_1 n_2 \times d_2}$.
\item We define $C \in \R^{d_1 \times d_2}$ as $C := A^\top B$
\item  We define $\underbrace{C_1}_{d_1 \times d_2} := A_1^\top B_1, \underbrace{C_2}_{d_1 \times d_2} := A_2^\top B_2$
\end{itemize}

Then, we have
\begin{itemize}
    \item Part 1. $C_1 \circ C_2 = C$
    \item Part 2. Given as input $A_1, A_2, B_1, B_2$, we can get $C$ in $\Tmat(d_1,\max\{n_1,n_2\},d_2)$ time.
\end{itemize}

\end{lemma}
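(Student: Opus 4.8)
\textbf{Proof proposal for Lemma~\ref{lem:fast_tensor_oslash_computation}.}

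The plan is to prove the two parts separately, starting with the algebraic identity in Part~1 and then deducing the running time bound in Part~2. For Part~1, I would compute the $(i,j)$-entry of $C = A^\top B$ directly from the definition of matrix multiplication and the definition of the column-wise Kronecker product $\oslash$ (Definition~\ref{def:tensor_oslash}). Write the row index of $A$ and $B$ as $p_1 + (p_2-1)n_1$ with $p_1 \in [n_1]$ and $p_2 \in [n_2]$. Then
\begin{align*}
C_{i,j} = \sum_{p_1 \in [n_1]} \sum_{p_2 \in [n_2]} A_{p_1 + (p_2-1)n_1, i} \cdot B_{p_1 + (p_2-1)n_1, j} = \sum_{p_1 \in [n_1]} \sum_{p_2 \in [n_2]} (A_1)_{p_1,i} (A_2)_{p_2,i} (B_1)_{p_1,j} (B_2)_{p_2,j},
\end{align*}
and the double sum factors as $\bigl(\sum_{p_1 \in [n_1]} (A_1)_{p_1,i} (B_1)_{p_1,j}\bigr) \cdot \bigl(\sum_{p_2 \in [n_2]} (A_2)_{p_2,i} (B_2)_{p_2,j}\bigr) = (C_1)_{i,j} \cdot (C_2)_{i,j}$, which is exactly $(C_1 \circ C_2)_{i,j}$ by the definition of the Hadamard product. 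Since $i \in [d_1], j \in [d_2]$ were arbitrary, this gives $C = C_1 \circ C_2$.

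For Part~2, I would simply read off the cost of the procedure implied by Part~1: rather than forming $A = A_1 \oslash A_2 \in \R^{n_1 n_2 \times d_1}$ and $B = B_1 \oslash B_2 \in \R^{n_1 n_2 \times d_2}$ (which would cost $\Tmat(d_1, n_1 n_2, d_2)$, the quadratic-in-$n$ bottleneck we want to avoid), we instead compute $C_1 = A_1^\top B_1$ in $\Tmat(d_1, n_1, d_2)$ time and $C_2 = A_2^\top B_2$ in $\Tmat(d_1, n_2, d_2)$ time, then take their Hadamard product in $O(d_1 d_2)$ time. By the symmetry/monotonicity facts for $\Tmat$ stated earlier (and since $\max\{n_1,n_2\}$ dominates both $n_1$ and $n_2$), the total is $O(\Tmat(d_1, \max\{n_1,n_2\}, d_2))$, and the $O(d_1 d_2)$ Hadamard step is absorbed into this.

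I do not anticipate a serious obstacle here: the content is the factorization of the double sum in Part~1, which is routine once the index bookkeeping for $\oslash$ is set up correctly, and Part~2 is an immediate corollary. The only point requiring mild care is confirming that $\Tmat(d_1, n_1, d_2)$ and $\Tmat(d_1, n_2, d_2)$ are both $O(\Tmat(d_1, \max\{n_1, n_2\}, d_2))$, which follows because adding zero rows to pad the smaller matrix to $\max\{n_1,n_2\}$ rows only increases the work by a constant factor.
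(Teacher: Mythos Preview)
Your proposal is correct and follows essentially the same approach as the paper: both expand $C_{i,j}$ via the definition of $\oslash$, factor the resulting double sum into $(C_1)_{i,j}(C_2)_{i,j}$, and then read off the running time from computing $C_1, C_2$ separately plus an $O(d_1 d_2)$ Hadamard product. The only cosmetic difference is that the paper first rewrites $A^\top B$ as a sum of rank-one outer products before passing to entries, whereas you go directly to the entrywise computation; the content is identical.
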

\begin{proof}
For each $i \in [n_1]$, let $a_{1,i}^\top$ denote the $i$-th row of $A_1 \in \R^{n_1 \times d_1}$.

For each $i \in [n_2]$, let $a_{2,i}^\top$ denote the $i$-th row of $A_2 \in \R^{n_2 \times d_1}$.

For each $i \in [n_1]$, let $b_{1,i}^\top$ denote the $i$-th row of $B_1 \in \R^{n_1 \times d_2}$.

For each $i \in [n_2]$, let $b_{2,i}^\top$ denote the $i$-th row of $B_2 \in \R^{n_2 \times d_2}$.

Recall that $C_1 \in \R^{d_1 \times d_2}$ and $C_2 \in \R^{d_1 \times d_2}$,
 \begin{align*}
    C_1 := A_1^\top B_1, ~~~ C_2 := A_2^\top B_2
 \end{align*}
 Thus, we see that for all $\forall k_1 \in [d_1], k_2 \in [d_2]$
 \begin{align*}
    (C_1)_{k_1,k_2} = & ~ \sum_{i=1}^{n_1} a_{1,i,k_1} b_{1,i,k_2} \\
    (C_2)_{k_1,k_2} = & ~ \sum_{j=1}^{n_2} a_{2,j,k_1} b_{2,j,k_2}
\end{align*}

Then, we can write $C \in \R^{d_1 \times d_2}$ as
\begin{align}\label{eq:rewrite_C}
   \underbrace{ C }_{d_1 \times d_2} 
   = & ~ \underbrace{ A^\top }_{d_1 \times n_1 n_2} \underbrace{ B }_{n_1 n_2 \times d_2} \notag \\
   = & ~ \sum_{i=1}^{n_1 n_2} \underbrace{ A_{i,*} }_{d_1 \times 1} \underbrace{ ( B_{i,*} )^\top }_{1 \times d_2} \notag \\
   = & ~ \sum_{i=1}^{n_1} \sum_{j=1}^{n_2} \underbrace{ A_{i + (j-1) n_1,*} }_{d_1 \times 1}  \cdot \underbrace{ ( B_{i+(j-1)n_1,*}  )^\top }_{1 \times d_2} \notag \\
   = & ~ \sum_{i=1}^{n_1} \sum_{j=1}^{n_2} \underbrace{ ( a_{1,i} \circ a_{2,j}) }_{ d_1 \times 1 } \cdot \underbrace{ ( b_{1,i} \circ b_{2,j} )^\top }_{1 \times d_2}
\end{align}
where the first step follows from definition of $C \in \R^{d \times d}$, the second step follows from the matrix can written as the summation of $n_1 n_2$ rank-$1$ matrices, the third step follows from changing the index, the forth step follows from $\underbrace{ A_{i + (j-1) n_1,*} }_{d_1 \times 1} = \underbrace{ a_{1,i} }_{d_1 \times 1} \circ \underbrace{a_{2,j} }_{d_1 \times 1}$ by Definition~\ref{def:tensor_oslash}.

From the above, we can calculate that the entry of $C$ in location $k_1 \in [d_1], k_2 \in [d_2]$ is 
\begin{align*}
    C_{k_1,k_2} 
    = & ~ \sum_{i=1}^{n_1} \sum_{j=1}^{n_2} (a_{1,i} \circ a_{2,j})_{k_1} \cdot (b_{1,i} \circ b_{2,j})_{k_2}^\top \\
    = & ~ \sum_{i=1}^{n_1} \sum_{j=1}^{n_2} a_{1,i,k_1} a_{2,j,k_1} b_{1,i,k_2} b_{2,j,k_2} \\
    = & ~ ( \sum_{i=1}^{n_1} a_{1,i,k_1} b_{1,i,k_2} ) \cdot ( \sum_{j=1}^{n_2} a_{2,j,k_1} b_{2,j,k_2} ) \\
    = & ~ (C_1)_{k_1, k_2} \cdot (C_2)_{k_1, k_2}
\end{align*}
where the first step follows from Eq.~\eqref{eq:rewrite_C}, the second step follows from simple algebra, the third step follows from separating the summation over $i$ and the summation over $j$, and the last step follows from definition of matrices $C_1$ and $C_2$.

Thus, we can conclude
\begin{align*}
 C = C_1 \circ C_2.
\end{align*} 
The algorithm will first compute $C_1$ and $C_2$, which takes $\Tmat(d_1,\max\{n_1,n_2\},d_2)$ time. Then it calculates $C_1 \circ C_2$, which takes $O(d_1 d_2)$ time.
\end{proof}

\subsection{Facts for vectorization operation}\label{sec:preliminary:facts_vect}

\begin{fact}\label{fac:trace_vect}
Let $A,B \in \R^{n \times d}$. Then, 
\begin{align*}
    \tr[A^\top B] = \vect(A)^\top \vect(B)
\end{align*}
\end{fact}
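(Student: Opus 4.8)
The plan is to show that both sides reduce to the same entrywise double sum $\sum_{i\in[n]}\sum_{j\in[d]} A_{i,j}B_{i,j}$, and then conclude by comparison. This is an elementary identity, so the ``proof'' is essentially the reindexing bookkeeping.

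First I would expand the left-hand side. Since $A^\top\in\R^{d\times n}$ and $B\in\R^{n\times d}$, the product $A^\top B$ lies in $\R^{d\times d}$, so by the definition of the trace, $\tr[A^\top B]=\sum_{j=1}^d (A^\top B)_{j,j}$. Expanding the matrix product and using $(A^\top)_{j,i}=A_{i,j}$ gives $(A^\top B)_{j,j}=\sum_{i=1}^n A_{i,j}B_{i,j}$, hence $\tr[A^\top B]=\sum_{j=1}^d\sum_{i=1}^n A_{i,j}B_{i,j}$.

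Next I would expand the right-hand side using the vectorization convention fixed in the tensor-related notations, namely $\vect(A)_{j+(i-1)d}=A_{i,j}$ for $i\in[n],\,j\in[d]$. Since $\vect(A),\vect(B)\in\R^{nd}$, their inner product is $\vect(A)^\top\vect(B)=\sum_{k=1}^{nd}\vect(A)_k\,\vect(B)_k$; substituting the bijection $k=j+(i-1)d$ between $[nd]$ and $[n]\times[d]$ turns this into $\sum_{i=1}^n\sum_{j=1}^d A_{i,j}B_{i,j}$. Comparing with the previous display, both sides equal $\sum_{i,j}A_{i,j}B_{i,j}$, which completes the proof. There is no genuine obstacle here; the only thing to keep straight is that the trace sum runs $j$ over $[d]$ while the vectorization sum is reindexed bijectively over $[n]\times[d]$, but this is entirely routine.
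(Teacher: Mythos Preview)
Your proof is correct and follows essentially the same approach as the paper: both expand $\tr[A^\top B]$ and $\vect(A)^\top\vect(B)$ to the common double sum $\sum_{i=1}^n\sum_{j=1}^d A_{i,j}B_{i,j}$. The paper's version is just terser, omitting the explicit reindexing bookkeeping you spelled out.
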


\begin{proof}
We can show
\begin{align*}
    \tr[A^\top B] = & ~ \sum_{i=1}^n \sum_{j=1}^d A_{i,j} B_{i,j}\\
    = & ~ \vect(A)^\top \vect(B)
\end{align*}
where the first step is due to the definition of trace, and the second step is because of the definition of $\vect$ operator.
\end{proof}

\begin{fact}\label{fac:vect_ab}
Let $a \in \R^n, b \in \R^d$. Then,
\begin{align*}
    \vect(a b^\top) = a \otimes b
\end{align*}
\end{fact}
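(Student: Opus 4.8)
\textbf{Proof proposal for Fact~\ref{fac:vect_ab}.}

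The plan is to verify the identity $\vect(ab^\top) = a \otimes b$ by comparing the two vectors entry-by-entry, using the explicit indexing conventions set up earlier in the paper for $\vect(\cdot)$ (the row-stacking convention with $\vect(A)_{j+(i-1)d} := A_{i,j}$) and for the Kronecker product $\otimes$ (Definition~\ref{def:tensor_otimes}). Both sides are vectors of length $nd$, so it suffices to fix an arbitrary index of the form $j + (i-1)d$ with $i \in [n]$, $j \in [d]$ and check that the corresponding coordinates agree.

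First I would compute the left-hand side: since $ab^\top \in \R^{n \times d}$ has $(i,j)$-entry equal to $a_i b_j$, the row-stacking definition of $\vect$ gives $\vect(ab^\top)_{j+(i-1)d} = (ab^\top)_{i,j} = a_i b_j$. Next I would compute the right-hand side: viewing $a \in \R^n$ as an $n \times 1$ matrix and $b \in \R^d$ as a $d \times 1$ matrix, Definition~\ref{def:tensor_otimes} with $n_1 = n$, $n_2 = d$, $d_1 = d_2 = 1$ (taking the column indices $j_1 = j_2 = 1$) yields $(a \otimes b)_{i + (j-1)n} = a_i b_j$. A small bookkeeping point is that Definition~\ref{def:tensor_otimes} would naturally index the entries of $a \otimes b$ as $i_1 + (i_2-1)n_1$, i.e.\ in an $i$-major ordering matching the row-stacking of $\vect$ applied to $ab^\top$; one should just confirm that this ordering is exactly the row-stacking order, namely that entry $i + (j-1)n$ of $a\otimes b$ corresponds to entry $j + (i-1)d$ of $\vect(ab^\top)$ — both point to the pair $(i,j)$ with value $a_i b_j$. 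Then concluding that all coordinates match finishes the proof.

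There is essentially no obstacle here: this is a routine index-chasing verification, and the only thing that requires any care is keeping the two indexing conventions ($\vect$ stacks rows, $\otimes$ is defined with a particular block ordering) aligned so that "coordinate $(i,j)$" refers to the same position on both sides. I would present it as a two-line display comparing $\vect(ab^\top)_{j+(i-1)d}$ and $(a\otimes b)$ at the matching index, citing the definition of $\vect$ and Definition~\ref{def:tensor_otimes}, and noting the result holds for all $i \in [n]$, $j \in [d]$.
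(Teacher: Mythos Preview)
Your approach (entry-by-entry index chasing) is the same in spirit as the paper's, which works at the block level: it writes $ab^\top$ as the stack of rows $a_i b^\top$, applies the row-stacking $\vect$ to obtain $[a_1 b^\top, \ldots, a_n b^\top]^\top$, and then identifies this with $a\otimes b$.

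There is, however, a genuine gap at precisely the step you label ``a small bookkeeping point.'' You correctly derive $\vect(ab^\top)_{j+(i-1)d} = a_i b_j$ and $(a\otimes b)_{i+(j-1)n} = a_i b_j$, but then assert that these two orderings coincide. They do not: $j+(i-1)d$ and $i+(j-1)n$ are different positions in general (e.g.\ $n=2$, $d=3$, $i=1$, $j=2$ gives position $2$ versus position $3$). Exhibiting that both vectors contain the value $a_i b_j$ at \emph{some} position is not the same as showing they are equal as ordered tuples. Under Definition~\ref{def:tensor_otimes} the fast-varying row index of $a\otimes b$ is $i_1$ (the index into $a$), whereas under the row-stacking convention the fast-varying index of $\vect(ab^\top)$ is the column index $j$; so the two orderings in fact disagree. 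The paper's own proof makes the same unjustified leap in its final ``$= a\otimes b$'' step, so this is an inconsistency between the fact and Definition~\ref{def:tensor_otimes} in the source rather than a clean argument you overlooked---but your write-up should not claim the orderings match when actually carrying out the check you propose would reveal that they don't.
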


\begin{proof}
We can show
\begin{align*}
    \vect(a b^\top) = & ~ \vect(\begin{bmatrix}
        a_1 b^\top\\
        a_2 b^\top\\
        \dots\\
        a_n b^\top
    \end{bmatrix})\\
    = & ~ [a_1 b^\top, a_2 b^\top, \dots, a_n b^\top]^\top\\
    = & ~ a \otimes b
\end{align*}
where the first step follows from the definition of the outer product, the second step follows from the definition of vectorization operator $\vect(\cdot)$ which stacks rows of a matrix into a column vector, and the last step follows from Definition~\ref{def:tensor_otimes}.
\end{proof}

\begin{fact}[Tensor-trick, Restatement of Fact~\ref{fac:tensor_trick_informal}]\label{fac:tensor_trick}
Given matrices $A_1\in \R^{n_1\times d_1}, A_2 \in \R^{n_2\times d_2}$ and $X \in \R^{d_1 \times d_2}$, we have $\vect(A_1 X A_2^\top) = (A_1 \otimes A_2 ) \vect(X) \in  \R^{n_1 n_2} $.
\end{fact}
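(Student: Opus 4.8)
The final statement to prove is the Tensor-trick (Fact~\ref{fac:tensor_trick}): for matrices $A_1 \in \R^{n_1 \times d_1}$, $A_2 \in \R^{n_2 \times d_2}$ and $X \in \R^{d_1 \times d_2}$, we have $\vect(A_1 X A_2^\top) = (A_1 \otimes A_2) \vect(X) \in \R^{n_1 n_2}$.

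\textbf{Plan.} The proof is a direct entrywise verification. The plan is to index both sides by a single coordinate $i_1 + (i_2 - 1) n_1$ with $i_1 \in [n_1]$, $i_2 \in [n_2]$, following the convention for $\vect(\cdot)$ and for the Kronecker product $\otimes$ used in Definition~\ref{def:tensor_otimes} and the row-stacking vectorization of this paper. Concretely, first I would write out the $(i_1, i_2)$ entry of the matrix $A_1 X A_2^\top \in \R^{n_1 \times n_2}$, namely $(A_1 X A_2^\top)_{i_1, i_2} = \sum_{j_1 \in [d_1]} \sum_{j_2 \in [d_2]} (A_1)_{i_1, j_1} X_{j_1, j_2} (A_2)_{i_2, j_2}$, which is just the definition of matrix multiplication applied twice. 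Then I would observe that $\vect(A_1 X A_2^\top)_{i_1 + (i_2-1)n_1}$ equals this same expression, by the row-stacking definition of $\vect$ in the paper (with the caveat about matching the indexing convention, discussed below).

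Next I would compute the corresponding entry of the right-hand side. By the row-stacking $\vect$ convention, $\vect(X)_{j_2 + (j_1 - 1)d_2} = X_{j_1, j_2}$. By Definition~\ref{def:tensor_otimes}, $(A_1 \otimes A_2)_{i_1 + (i_2-1)n_1,\, j_1 + (j_2-1)d_1} = (A_1)_{i_1, j_1} (A_2)_{i_2, j_2}$. So the matrix-vector product $(A_1 \otimes A_2)\vect(X)$ at coordinate $i_1 + (i_2-1)n_1$ is a sum over the $d_1 d_2$ columns of $A_1 \otimes A_2$, which I would reindex as a double sum over $j_1 \in [d_1]$, $j_2 \in [d_2]$; this yields $\sum_{j_1, j_2} (A_1)_{i_1, j_1} (A_2)_{i_2, j_2} X_{j_1, j_2}$, matching the left-hand side. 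An alternative, slicker route is to write $X = \sum_{j_1, j_2} X_{j_1, j_2}\, e_{j_1} e_{j_2}^\top$, apply linearity, reduce to the rank-one case $\vect(A_1 e_{j_1} e_{j_2}^\top A_2^\top) = \vect((A_1 e_{j_1})(A_2 e_{j_2})^\top)$, and invoke Fact~\ref{fac:vect_ab} together with the mixed-product property of $\otimes$; I would likely present the direct entrywise computation as the main proof since it is self-contained.

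\textbf{Main obstacle.} There is no real mathematical difficulty here; the only thing requiring care is bookkeeping of the two competing index conventions. The Kronecker product in Definition~\ref{def:tensor_otimes} uses the ordering $i_1 + (i_2 - 1)n_1$ (first index fastest), while a naive row-stacking of $A_1 X A_2^\top$ would naturally produce the ordering $i_2 + (i_1 - 1)n_2$ (second index fastest). One must be consistent: either both $\vect$'s and $\otimes$ use the ``first-index-fastest'' convention, or one silently transposes. I would state explicitly at the start which convention is in force — matching $\vect(A)_{j + (i-1)d} := A_{i,j}$ as defined in the ``Tensor related notations'' paragraph together with Definition~\ref{def:tensor_otimes} — and then the columns of $A_2 \otimes A_1$ (rather than $A_1 \otimes A_2$) would pair up cleanly; since the paper's stated convention for $\otimes$ puts $n_1$ in the block-row stride, the cleanest presentation matches $\vect$ stacking rows and reindexing the sum accordingly. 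I would keep the chain of equalities in a single \texttt{align*} block with a brief justification after each step (definition of matrix product, definition of $\vect$, Definition~\ref{def:tensor_otimes}, reindexing, and Fact~\ref{fac:vect_ab} if used), and conclude that the two coordinate expressions agree for every $i_1 \in [n_1]$, $i_2 \in [n_2]$, hence the vectors are equal.
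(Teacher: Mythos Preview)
Your proposal is correct. Interestingly, the ``alternative, slicker route'' you sketch --- expanding $X$ as a sum of rank-one matrices $X_{j_1,j_2}\, e_{j_1} e_{j_2}^\top$, applying linearity, and invoking Fact~\ref{fac:vect_ab} on each rank-one piece --- is precisely the argument the paper gives: it writes $\vect(A_1 X A_2^\top) = \sum_{i,j} X_{i,j}\,\vect(A_{1,*,i}(A_{2,*,j})^\top)$, turns each term into $A_{1,*,i}\otimes A_{2,*,j}$ via Fact~\ref{fac:vect_ab}, and then reassembles the double sum into $(A_1\otimes A_2)\vect(X)$. Your preferred route, the direct entrywise computation, is more elementary and entirely self-contained (it does not need Fact~\ref{fac:vect_ab} as a separate lemma), at the cost of having to track the index conventions explicitly --- which, as you correctly flag, is the only place one can go wrong. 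The paper's approach hides that bookkeeping inside Fact~\ref{fac:vect_ab} and the bilinearity of $\otimes$, so it reads more structurally but ultimately relies on the same index matching having been done once in the proof of Fact~\ref{fac:vect_ab}. Either presentation is fine; your observation about the tension between the $i_1+(i_2-1)n_1$ convention in Definition~\ref{def:tensor_otimes} and the row-stacking $\vect$ is well taken and applies equally to both arguments.
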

\begin{proof}[Proof of Fact~\ref{fac:tensor_trick_informal}]
We can show
\begin{align*}
    \vect(A_1 X A_2^\top) = & ~ \sum_{i=1}^{d_1} \sum_{j=1}^{d_2} X_{i,j} \vect(A_{1,*,i} (A_{2,*,j})^\top)\\
    = & ~ \sum_{i=1}^{d_1} \sum_{j=1}^{d_2} X_{i,j} (\underbrace{A_{1,*,i}}_{n_1 \times 1} \otimes \underbrace{A_{2,*,j}}_{n_2 \times 1})\\
    = & ~ \sum_{i=1}^{d_1} (\underbrace{A_{1,*,i}}_{n_1 \times 1} \otimes \underbrace{A_2}_{n_2 \times d_2}) \underbrace{X_{i,*} }_{d_2 \times 1}\\
    = & ~ (A_1 \otimes A_2) \vect(X)
\end{align*}
where the first step is due to the matrix being able to be written as a summation of vectors, the second step follows from Fact~\ref{fac:vect_ab}, the third step follows from that matrix can be written as a summation of vectors, and the last step follows from the definition of vectorization operator $\vect(\cdot)$.
\end{proof}

\begin{fact}\label{fac:trace_ABCD}
Let $A \in \R^{n_1 \times n_2}$, $B \in \R^{n_2 \times n_3}$, $C \in \R^{n_3 \times n_4}$, $D \in \R^{n_4 \times n_5}$.

We have
\begin{align*}
    \tr[ABCD] = \vect(A^\top)^\top (B \otimes D^\top) \vect(C)
\end{align*}

\end{fact}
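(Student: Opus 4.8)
$\textbf{Proof proposal for Fact~\ref{fac:trace_ABCD}.}$

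The plan is to reduce the four-matrix trace identity to the standard tensor-trick relation (Fact~\ref{fac:tensor_trick}) and the trace-inner-product relation (Fact~\ref{fac:trace_vect}). First I would group the product $ABCD$ as $(AB)(CD)$, so that by cyclicity of the trace we have $\tr[ABCD] = \tr[(CD)(AB)]$; but more usefully, I would write $\tr[ABCD] = \tr[(A)(BCD)]$ and instead directly use Fact~\ref{fac:trace_vect} applied to the pair $A^\top$ and $BCD$, giving $\tr[ABCD] = \tr[(A^\top)^\top (BCD)] = \vect(A^\top)^\top \vect(BCD)$. This converts the problem to computing $\vect(BCD)$ where $B \in \R^{n_2 \times n_3}$, $C \in \R^{n_3 \times n_4}$, $D \in \R^{n_4 \times n_5}$.

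Next I would apply the tensor-trick (Fact~\ref{fac:tensor_trick}). Writing $BCD = B \, C \, (D^\top)^\top$ and identifying $A_1 = B$, $X = C$, $A_2 = D^\top$ (so that $A_2^\top = D$), Fact~\ref{fac:tensor_trick} gives $\vect(BCD) = \vect(B C (D^\top)^\top) = (B \otimes D^\top)\vect(C)$. Substituting back yields $\tr[ABCD] = \vect(A^\top)^\top (B \otimes D^\top) \vect(C)$, which is exactly the claimed identity.

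The only point requiring care is the dimension bookkeeping: one must check that $B \otimes D^\top \in \R^{n_2 n_5 \times n_3 n_4}$ matches the length of $\vect(A^\top) \in \R^{n_2 n_5}$ on the left and $\vect(C) \in \R^{n_3 n_4}$ on the right, and that the orientation $D^\top$ (rather than $D$) is forced by the form of Fact~\ref{fac:tensor_trick}. I do not expect any genuine obstacle here; the result is a routine composition of two already-established facts, and the "hard part" is merely matching the transpose placements so that the indices line up. A one-line verification at the level of entries, using $\vect(A^\top)_{j+(i-1)n_2} = A_{i,j}$ and the definition of $\otimes$, could be included as a sanity check if desired.
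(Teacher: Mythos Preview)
Your proposal is correct and follows essentially the same two-step approach as the paper's proof: first apply Fact~\ref{fac:trace_vect} to write $\tr[ABCD] = \vect(A^\top)^\top \vect(BCD)$, then apply Fact~\ref{fac:tensor_trick} with $A_1 = B$, $X = C$, $A_2 = D^\top$ to get $\vect(BCD) = (B \otimes D^\top)\vect(C)$. The paper's proof is terser but identical in substance; your added remarks on dimension matching (and the implicit requirement $n_1 = n_5$ for the trace to be defined) are a reasonable sanity check.
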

\begin{proof}
We can show 
\begin{align*}
    \tr[ABCD] = & ~ \vect(A^\top)^\top \vect(BCD)\\
    = & ~ \vect(A^\top)^\top (B \otimes D^\top) \vect(C)
\end{align*}
where the first step follows from Fact~\ref{fac:trace_vect}, and the second step follows from Fact~\ref{fac:tensor_trick}.
\end{proof}

\begin{fact}
Let $A, B \in \R^{n \times n}$ be two $n \times n$ symmetric matrices. Let $X$ and $Y$ denote two $n \times n$ matrices. Then we have
\begin{align*}
 \vect(A)^\top (X \otimes Y) \vect(B) = \vect(A)^\top (Y \otimes X) \vect(B)
\end{align*}
\end{fact}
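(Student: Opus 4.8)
The plan is to prove the identity $\vect(A)^\top (X \otimes Y) \vect(B) = \vect(A)^\top (Y \otimes X) \vect(B)$ for symmetric $A, B \in \R^{n\times n}$ by reducing both sides to a trace expression and exploiting the cyclic property of the trace together with the symmetry hypotheses. First I would apply Fact~\ref{fac:trace_ABCD} in reverse: recognizing $\vect(A)^\top (X \otimes Y) \vect(B)$ as a quadratic form in the Kronecker product of the shape appearing in that fact, I would write $\vect(A)^\top (X \otimes Y) \vect(B) = \vect((A^\top)^\top)^\top (X \otimes (Y^\top)^\top)^\top \vect(B)$ — or rather, match the pattern directly to obtain $\vect(A)^\top (X \otimes Y)\vect(B) = \tr[A X B Y^\top]$, using $A = A^\top$. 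Concretely, in Fact~\ref{fac:trace_ABCD} set the four matrices to $A, X, B, Y^\top$, which gives $\tr[A X B Y^\top] = \vect(A^\top)^\top (X \otimes Y) \vect(B)$; since $A$ is symmetric, $\vect(A^\top) = \vect(A)$, so the left-hand side of our target equals $\tr[AXBY^\top]$.

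Next I would do the same for the right-hand side: applying Fact~\ref{fac:trace_ABCD} with the four matrices $A, Y, B, X^\top$ yields $\tr[AYBX^\top] = \vect(A^\top)^\top (Y \otimes X)\vect(B) = \vect(A)^\top (Y \otimes X) \vect(B)$ again by symmetry of $A$. So the claim reduces to showing $\tr[AXBY^\top] = \tr[AYBX^\top]$. For this I would take the transpose inside the trace (the trace of a matrix equals the trace of its transpose): $\tr[AXBY^\top] = \tr[(AXBY^\top)^\top] = \tr[Y B^\top X^\top A^\top] = \tr[YBX^\top A]$, where the last step uses $A = A^\top$ and $B = B^\top$. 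Then by cyclicity of the trace, $\tr[YBX^\top A] = \tr[AYBX^\top]$, which is exactly the right-hand side expression. Chaining these equalities completes the proof.

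The only mild subtlety — and the step I would be most careful about — is getting the Kronecker-product index bookkeeping right when matching the target quadratic form to Fact~\ref{fac:trace_ABCD}, in particular confirming that the second factor of the Kronecker product corresponds to the transpose of the fourth matrix in the trace (i.e. $Y$ versus $Y^\top$), so that the symmetry of $A$ and $B$ is actually what is needed and no further hypothesis on $X$ or $Y$ creeps in. Once that matching is pinned down, everything else is a one-line application of $\tr[M] = \tr[M^\top]$ and cyclicity. There is no real obstacle here; the result is essentially a corollary of Fact~\ref{fac:trace_ABCD} plus elementary trace manipulations, and I would present it as such.
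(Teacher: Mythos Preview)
Your proposal is correct and takes essentially the same approach as the paper: both convert the Kronecker quadratic form to a trace via Fact~\ref{fac:trace_ABCD} and then use elementary trace identities together with the symmetry of $A$ and $B$. The only cosmetic difference is that the paper converts the LHS to a trace, applies cyclicity, converts back via Fact~\ref{fac:trace_ABCD}, and finishes with the Kronecker transpose rule (Fact~\ref{fac:tensor_oslash_ominus_transpose}), whereas you convert both sides to traces first and equate them using $\tr[M]=\tr[M^\top]$ plus cyclicity; the ingredients and level of difficulty are the same.
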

\begin{proof}

We can show that
\begin{align*}
    \vect(A)^\top (X \otimes Y) \vect(B) 
    = & ~ \tr[A^\top X B Y^\top]\\
    = & ~ \tr[B Y^\top A^\top X]\\
    = & ~ \vect(B^\top)^\top (Y^\top \otimes X^\top) \vect(A^\top)\\
    = & ~ \vect(B)^\top (Y^\top \otimes X^\top) \vect(A)\\
    = & ~ ((Y^\top \otimes X^\top) \vect(A))^\top \vect(B) \\
    = & ~ \vect(A)^\top (Y \otimes X) \vect(B)
\end{align*}
where the first step follows from Fact~\ref{fac:trace_ABCD}, the second step follows from the cyclic property of trace, the third step follows from Fact~\ref{fac:trace_ABCD}, the fourth step follows from $A,B$ is symmetric, the fifth step is due to the definition of inner product, and the last step is due to Fact~\ref{fac:tensor_oslash_ominus_transpose}.
\end{proof}

\subsection{Facts for tensor product}\label{sec:preliminary:facts_tensor_product}

\begin{fact}\label{fac:tensor_otimes_to_oslash}
    Let $X = \underbrace{ \mathrm{mat} ( \mathsf{I}_{d} ) }_{d \times d^2}$, where $\mathsf{I}_d \in \R^{d \times d \times d}$ and $A_1, A_2 \in \R^{n\times d}$. We have
    \begin{align*}
        \underbrace{(A_1 \otimes A_2)}_{{n^2\times d^2}} \underbrace{X^\top}_{{d^2 \times d}} = \underbrace{A_1 \oslash A_2 }_{{n^2\times d}}.
    \end{align*}
\end{fact}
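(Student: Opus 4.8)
\textbf{Proof proposal for Fact~\ref{fac:tensor_otimes_to_oslash}.}

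The plan is to verify the identity entrywise, which is the most direct route for a statement of this kind. Fix arbitrary indices $i_1, i_2 \in [n]$ and $j \in [d]$; I want to compute the $(i_1 + (i_2-1)n, j)$ entry of $(A_1 \otimes A_2) X^\top$ and show it equals the corresponding entry of $A_1 \oslash A_2$. Expanding the matrix product, this entry is $\sum_{k_1, k_2 \in [d]} (A_1 \otimes A_2)_{i_1 + (i_2-1)n,\, k_1 + (k_2-1)d} \cdot (X^\top)_{k_1 + (k_2-1)d,\, j}$. By Definition~\ref{def:tensor_otimes}, the first factor is $(A_1)_{i_1,k_1}(A_2)_{i_2,k_2}$. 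For the second factor, I use $X = \mathrm{mat}(\mathsf{I}_d)$, so $X_{j, k_1 + (k_2-1)d} = (\mathsf{I}_d)_{j,k_1,k_2}$ by the tensorization/matricization convention stated in the preliminaries (recall $\mathsf{X}_{a,b,c} = X_{a,(b-1)d+c}$); hence $(X^\top)_{k_1+(k_2-1)d, j} = (\mathsf{I}_d)_{j,k_1,k_2}$, which is $1$ when $j = k_1 = k_2$ and $0$ otherwise.

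Substituting the identity-tensor values collapses the double sum to the single term with $k_1 = k_2 = j$, giving $(A_1)_{i_1,j}(A_2)_{i_2,j}$. By Definition~\ref{def:tensor_oslash}, this is precisely $(A_1 \oslash A_2)_{i_1 + (i_2-1)n,\, j}$. Since $i_1, i_2, j$ were arbitrary, the two matrices agree in every entry, proving the claim. A slicker alternative would be to invoke Claim~\ref{cla:tensor_mapping} or Fact~\ref{fac:oslash_to_odot} together with $\mathsf{I}_d(A_1, A_2, A_3) = A_1 \odot A_2 \odot A_3$, but the entrywise argument is self-contained and short.

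I do not anticipate any real obstacle here; the only thing to be careful about is bookkeeping of the flattening conventions — making sure the index $k_1 + (k_2-1)d$ in the Kronecker product matches the index used in $\mathrm{mat}(\cdot)$ and that the transpose is applied to the correct index. Getting these conventions aligned is the one place an error could creep in, so I would state each convention explicitly as I use it and cite Definitions~\ref{def:tensor_otimes} and~\ref{def:tensor_oslash} at the relevant steps.
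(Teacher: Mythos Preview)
Your proposal is correct and follows essentially the same approach as the paper: an entrywise verification that expands the matrix product, applies Definition~\ref{def:tensor_otimes}, uses $X_{j,k_1+(k_2-1)d} = (\mathsf{I}_d)_{j,k_1,k_2}$ to collapse the double sum to $k_1=k_2=j$, and then identifies the result with Definition~\ref{def:tensor_oslash}. Your extra care in spelling out the matricization convention and the transpose is exactly the bookkeeping the paper leaves implicit.
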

\begin{proof}
For any $i_1, i_2 \in [n], j \in [d]$, we have 
\begin{align*}
    ((A_1 \otimes A_2) X^\top)_{i_1+(i_2-1)n, j} = & ~ \sum_{k_1 \in [d], k_2 \in [d]}  (A_1 \otimes A_2)_{i_1+(i_2-1)n, k_1+(k_2-1)d} X_{j,k_1+(k_2-1)d}\\
    = & ~ \sum_{k_1 \in [d], k_2 \in [d]}  (A_1)_{i_1,k_1} \cdot (A_2)_{i_2,k_2} X_{j,k_1+(k_2-1)d}\\
    = & ~ (A_1)_{i_1,j} \cdot (A_2)_{i_2,j} \\
    = & ~ (A_1 \oslash A_2)_{i_1+(i_2-1)n, j},
\end{align*}
where the first step is due to matrix multiplication, the second step follows Definition~\ref{def:tensor_otimes}, the third step follows $X_{j,k_1+(k_2-1)d} = 1$ when $j=k_1=k_2$,  and $X_{j,k_1+(k_2-1)d} = 0$ otherwise, and the last step is because of Definition~\ref{def:tensor_oslash}.  
\end{proof}

\begin{claim}\label{cla:tensor_mapping}
Given $X \in \R^{d \times d^2}$. Note $\X \in \R^{d \times d \times d }$ denotes its tensor version.
Given matrices $A_1, A_2, A_3 \in \R^{n \times d}$. Following Definition~\ref{def:triple_opration}, we can show 
\begin{align*}
    (\underbrace{A_1}_{n \times d} \underbrace{X}_{d \times d^2} \underbrace{(A_2 \otimes A_3)^\top}_{d^2 \times n^2} )_{i,(j-1)n + l} = (\underbrace{\mathsf{X}(A_1, A_2, A_3)}_{n \times n \times n})_{i,j,l},    ~~~~ \forall i \in [n], j \in [n], l \in [n]
\end{align*}
and 
\begin{align*}
    \vect(\underbrace{A_1}_{n \times d} \underbrace{X}_{d \times d^2} \underbrace{(A_2 \otimes A_3)^\top}_{d^2 \times n^2} ) = \vect(\underbrace{\mathsf{X}(A_1, A_2, A_3)}_{n \times n \times n}).
\end{align*}
 
\end{claim}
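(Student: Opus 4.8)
\textbf{Proof proposal for Claim~\ref{cla:tensor_mapping}.}

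The plan is to prove the two identities by direct index expansion, reducing everything to the definition of the third mode tensor product (Definition~\ref{def:triple_opration}) and the definition of the vectorization operator. First I would fix indices $i, j, l \in [n]$ and expand the left-hand side $(A_1 X (A_2 \otimes A_3)^\top)_{i,(j-1)n+l}$ as an explicit triple sum. Writing the matrix product out, this entry equals $\sum_{a \in [d]} \sum_{p \in [d^2]} (A_1)_{i,a} X_{a,p} ((A_2 \otimes A_3)^\top)_{p,(j-1)n+l}$, and then I would re-index $p = (b-1)d + c$ with $b,c \in [d]$, using Definition~\ref{def:tensor_otimes} to get $((A_2 \otimes A_3))_{(j-1)n+l,\,(b-1)d+c}$. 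Here one must be careful about the exact index conventions: the paper's Definition~\ref{def:tensor_otimes} uses $K_{i_1 + (i_2-1)n_1, j_1 + (j_2-1)d_1} = (K_1)_{i_1,j_1}(K_2)_{i_2,j_2}$, and the tensorization convention $\X_{a,b,c} = X_{a,(b-1)d+c}$; I would need to match these so that the row index $(j-1)n+l$ of $A_2\otimes A_3$ correctly decomposes, matching the statement's indexing $(i,(j-1)n+l)$ on the left with $(i,j,l)$ on the right. After substituting, the expression becomes $\sum_{a,b,c \in [d]} (A_1)_{i,a} \X_{a,b,c} (A_2)_{j,b} (A_3)_{l,c}$, which is exactly $\X(A_1,A_2,A_3)_{i,j,l}$ by Definition~\ref{def:triple_opration}. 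This establishes the first identity.

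For the second identity, I would apply $\vect(\cdot)$ to both sides of the first identity. The key point is that the entry-wise correspondence $(A_1 X (A_2 \otimes A_3)^\top)_{i,(j-1)n+l} = \X(A_1,A_2,A_3)_{i,j,l}$ is precisely the statement that, under the flattening conventions used in the paper (stacking in the order consistent with $\vect$ of a matrix and $\mathrm{mat}$ of a tensor as defined in Section~\ref{sec:preli}), the matrix $A_1 X(A_2\otimes A_3)^\top \in \R^{n \times n^2}$ is the $\mathrm{mat}(\cdot)$ of the tensor $\X(A_1,A_2,A_3) \in \R^{n\times n\times n}$. Since $\vect$ of a matrix and $\vect$ of the tensorization of that matrix agree (both just list the same scalars in the same order by the compatible conventions fixed in the preliminaries), applying $\vect$ to both sides of the first identity yields $\vect(A_1 X (A_2\otimes A_3)^\top) = \vect(\X(A_1,A_2,A_3))$ directly.

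The main obstacle is purely bookkeeping: making sure the index decompositions of the Kronecker/column-wise-Kronecker products, the tensorization map $X \mapsto \X$, and the vectorization/matricization maps are all mutually consistent, since the paper mixes row-major $\vect$ with the specific $\otimes$ and $\oslash$ index conventions of Definitions~\ref{def:tensor_otimes}--\ref{def:tensor_oslash}. There is no genuine mathematical difficulty — no approximation, no nontrivial inequality — just the need to track which of $j$ and $l$ plays the role of the ``$n_1$ block'' versus the ``$n_2$ block'' in each product so that the two sides line up index for index. Once the conventions are pinned down at the start, both identities follow by a single chain of equalities each.
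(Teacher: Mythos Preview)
Your proposal is correct and follows essentially the same approach as the paper: a direct index expansion of $(A_1 X (A_2\otimes A_3)^\top)_{i,(j-1)n+l}$ into the triple sum $\sum_{a,b,c}(A_1)_{i,a}X_{a,(b-1)d+c}(A_2)_{j,b}(A_3)_{l,c}$, then invoking $\X_{a,b,c}=X_{a,(b-1)d+c}$ and Definition~\ref{def:triple_opration}. The paper's proof is terser and does not separately discuss the second identity or the index-convention bookkeeping, but the content is the same.
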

\begin{proof}
We can show that
\begin{align*}
 (A_1 X (A_2 \otimes A_3)^\top )_{i,(j-1)n + l}
 = & ~ \sum_{a=1}^d \sum_{b=1}^d \sum_{c=1}^d (A_1)_{i,a} X_{a, (b-1)d + c} (A_2)_{j,b} (A_3)_{l,c} \\
 = & ~ \sum_{a=1}^d \sum_{b=1}^d \sum_{c=1}^d \X_{a,b,c}  (A_1)_{i,a}  (A_2)_{j,b} (A_3)_{l,c} \\
 = & ~ \mathsf{X}(A_1, A_2, A_3)_{i,j,l},
\end{align*}
where the first step follows the Kronecker product Definition~\ref{def:tensor_otimes}, the second step follows $\X_{a,b,c} = X_{a, (b-1)d + c}$, and the last step is due to Definition~\ref{def:triple_opration}.
\end{proof}

Now, we introduce a key claim that can reduce the tensor product to matrix multiplication and Kronecker product to make calculation easy.

\begin{claim}\label{cla:tensor_olash_3mode:formal}
Let $\mathsf{I}_d \in \R^{d \times d \times d}$ and $A_1, A_2, A_3 \in \R^{n\times d}$. We have
$\mathrm{mat}(\mathsf{I}_d(A_1, A_2, A_3)) = A_1 \mathrm{mat}(\mathsf{I}_d) (A_2 \otimes A_3)^\top = A_1 (A_2 \oslash A_3)^\top \in \R^{n \times n^2}$  .
\end{claim}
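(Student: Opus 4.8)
The plan is to prove the two claimed identities in turn, the first being $\mathrm{mat}(\mathsf{I}_d(A_1,A_2,A_3)) = A_1\,\mathrm{mat}(\mathsf{I}_d)\,(A_2\otimes A_3)^\top$ and the second being $A_1\,\mathrm{mat}(\mathsf{I}_d)\,(A_2\otimes A_3)^\top = A_1 (A_2\oslash A_3)^\top$. Both will follow by unwinding definitions entry by entry; there is no real obstacle here, only bookkeeping of indices, so this is a routine verification rather than a deep argument.

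For the first identity, I would invoke Claim~\ref{cla:tensor_mapping} with the specific choice $X = \mathrm{mat}(\mathsf{I}_d)$, so that its tensor version is $\mathsf{X} = \mathsf{I}_d$. Claim~\ref{cla:tensor_mapping} then gives directly that $\big(A_1 \,\mathrm{mat}(\mathsf{I}_d)\, (A_2\otimes A_3)^\top\big)_{i,(j-1)n+l} = \mathsf{I}_d(A_1,A_2,A_3)_{i,j,l}$ for all $i,j,l\in[n]$, and by the definition of $\mathrm{mat}(\cdot)$ applied to the three-way tensor $\mathsf{I}_d(A_1,A_2,A_3) \in \R^{n\times n\times n}$, the left-hand entry equals $\mathrm{mat}(\mathsf{I}_d(A_1,A_2,A_3))_{i,(j-1)n+l}$. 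Matching these over all index triples yields the first equality. For the second identity, I would apply Fact~\ref{fac:tensor_otimes_to_oslash}, which states precisely that $(A_1\otimes A_2) X^\top = A_1 \oslash A_2$ when $X = \mathrm{mat}(\mathsf{I}_d)$; transposing and relabeling (or directly reading off the entrywise computation already performed in its proof) gives $A_1\,\mathrm{mat}(\mathsf{I}_d)\,(A_2\otimes A_3)^\top = A_1 (A_2\oslash A_3)^\top$, using the transpose rule $(A_2\otimes A_3)^\top = A_2^\top \otimes A_3^\top$ from Fact~\ref{fac:tensor_oslash_ominus_transpose} together with Fact~\ref{fac:tensor_otimes_to_oslash} applied to the $n^2\times d$ factor, or more cleanly by a short direct index computation: for $i,j,l\in[n]$ and the column index $(j-1)n+l$, one has $\sum_{a,b,c} (A_1)_{i,a} (\mathsf{I}_d)_{a,b,c} (A_2)_{j,b}(A_3)_{l,c} = \sum_a (A_1)_{i,a}(A_2)_{j,a}(A_3)_{l,a}$, which is exactly $(A_1(A_2\oslash A_3)^\top)_{i,(j-1)n+l}$ by Definition~\ref{def:tensor_oslash} and matrix multiplication.

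The only mild care needed is consistency of the index-flattening conventions: the paper flattens the last two modes of an $n\times n\times n$ tensor via $\mathrm{mat}(\mathsf{T})_{i,(j-1)n+l} = \mathsf{T}_{i,j,l}$ (cf.\ the displays in Claim~\ref{cla:tensor_mapping} and Fact~\ref{fac:oslash_to_odot}), and the Kronecker product $A_2\otimes A_3$ must be paired with the same convention so that its transpose lands the $(j,l)$ pair in column $(j-1)n+l$; I would state this alignment explicitly at the start so the two lemmas compose without a transposition mismatch. Since both building blocks (Claim~\ref{cla:tensor_mapping} and Fact~\ref{fac:tensor_otimes_to_oslash}) are already proved in the excerpt, the whole argument is a two-line citation plus one entrywise sanity check, and I expect no genuine difficulty — the ``hard part,'' such as it is, is purely notational hygiene around the vectorization/matricization conventions.
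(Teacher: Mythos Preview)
Your proposal is correct and takes essentially the same approach as the paper: the paper's proof is literally the one-line sentence ``The proof follows from Claim~\ref{cla:tensor_mapping} and Fact~\ref{fac:tensor_otimes_to_oslash},'' which is exactly the two citations you identify. Your additional entrywise sanity check and remarks on index-flattening conventions are simply a more explicit version of the same argument.
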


\begin{proof}
The proof follows from Claim~\ref{cla:tensor_mapping} and Fact~\ref{fac:tensor_otimes_to_oslash}.
\end{proof}
\section{Gradient Formulation and Analysis}\label{sec:gradient}

In Section~\ref{sec:preliminary:definitions_useful_functions}, we define some useful function that will help further calculation. In Section~\ref{sec:preliminary:loss}, we define the expression for the loss function. In Section~\ref{app:sub:app_gradient}, we give detailed gradient computation. 

\subsection{Definitions for useful functions}\label{sec:preliminary:definitions_useful_functions}

We will introduce the definition of $\U$, $\alpha$, $\F$, and $\H$ used in loss formulation.

\begin{definition}\label{def:u}
We define $A_1, A_2, A_3 \in \R^{n \times d}$ to be three matrices in size $n \times d$. Suppose that $\A = A_1 \otimes A_2 \otimes A_3 \in \R^{n^3 \times d^3}$. 
Let $\A_{j_0} \in \R^{n^2 \times d^3}$ represent an $n^2 \times d^3$ sub-block from $\A$. There are $n$ such sub-blocks, i.e. the $(i + (j_0 -1) \cdot n^2)$-th row, $j$-th column of $\mathsf{A}$ is the $i$-th row, $j$-th column of $\mathsf{A}_{j_0}$, for $i \in [n^2], j \in [d^3], j_0 \in [n]$.

For all $j_0 \in [n]$, we denote function $\U(x)_{j_0}: \R^{d^3} \rightarrow \R^{n^2}$ as below:
\begin{align*}
    \U(x)_{j_0} := \underbrace{ \exp( \A_{j_0} x ) }_{n^2 \times 1}.
\end{align*}
\end{definition}

\begin{definition}\label{def:alpha}
Let three matrices $A_1, A_2, A_3 \in \R^{n \times d}$ in size $n \times d$.  We define $\A_{j_0} \in \R^{n^2 \times d^3}$ be a $n^2 \times d^3$ size sub-block from $\A$ (see as Definition~\ref{def:u} ).  (Recall that $\A = A_1 \otimes A_2 \otimes A_3 \in \R^{n^3 \times d^3}$.)

For any index $j_0 \in [n]$, we denote function $\alpha(x)_{j_0}: \R^{d^3} \rightarrow \R$ as follows:
\begin{align*}
  \alpha(x)_{j_0}:= \langle \underbrace{ \exp( \A_{j_0} x ) }_{n^2 \times 1} , \underbrace{ {\bf 1}_{n^2} }_{n^2 \times 1} \rangle.
\end{align*}
\end{definition}

\begin{definition}\label{def:f}

Suppose that $\alpha(x)_{j_0} \in \R$ (see Definition~\ref{def:alpha}).

Recall $\U(x)_{j_0} \in \R^{n^2}$ (see Definition~\ref{def:u}). 

For a fixed $j_0 \in [n]$, we define function $\F(x)_{j_0} : \R^{d^3} \rightarrow \R^{n^2}$ as follows:
\begin{align*}
    \F(x)_{j_0} := \underbrace{ \alpha(x)_{j_0}^{-1} }_{ \mathrm{scalar} } \underbrace{ \U(x)_{j_0} }_{ n^2 \times 1 } .
\end{align*}
We use $\F(x) \in \R^{n \times n^2}$ to denote the matrix where $j_0$-th row is $( \F(x)_{j_0} )^\top$. (Note that we can rewrite $\F(x) = D^{-1} \exp(A_1 X ( A_2 \otimes A_3 )^\top /d) \in \R^{n \times n^2}$ and where $D = \diag( \exp(A_1 X (A_2 \otimes A_3)^\top /d) {\bf 1}_{n^2} )$.)
\end{definition}

\begin{definition}\label{def:h}
Let $\A_3 = A_4 \otimes A_5 \in \R^{n^2 \times d^2}$, where $A_4, A_5, \in \R^{n \times d}$. Let $Y_1, Y_2 \in \R^{d\times d}$. 
Let $Y = Y_1 \oslash Y_2 \in \R^{d^2 \times d}$ denote the matrix representation of $y \in \R^{d^3}$. 
For all $i_0 \in [d]$, we define $\H()_{i_0} : \R^{d^3} \rightarrow \R^{n^2}$ as follows:
\begin{align*}
    \H(y)_{i_0}:= \underbrace{ \A_3 }_{n^2 \times d^2} \underbrace{ Y_{*,i_0} }_{d^2 \times 1}.
\end{align*}
Let $\H(y) \in \R^{n^2 \times d}$ matrix where $i_0$ column is $\H(y)_{i_0}$. (Note that we can rewrite $\H(y) = (A_4 \otimes A_5) Y$.)
\end{definition}

We will define $\Q$ and $\P$ used in gradient analysis.

\begin{definition}\label{def:q}
Let $\C(x) \in \R^{n \times d}$  (see Definition~\ref{def:c}). Let $\H(y) \in \R^{n^2 \times d}$ (see Definition~\ref{def:h}).

We define $\Q(x) \in \R^{n \times n^2}$ to be
\begin{align*}
    \Q(x) : = \underbrace{ \C(x) }_{n \times d} \underbrace{ \H(y)^\top }_{d \times n^2}
\end{align*}
We denote $\Q(x)_{j_0}^\top$ as the $j_0$-th row of $\Q(x) \in \R^{n \times n^2}$.
\end{definition}

\begin{definition}\label{def:p}
For all index $j_0 \in [n]$, let us define $\P(x)_{j_0} \in \R^{n^2}$ to be
\begin{align*}
\underbrace{\P(x)_{j_0}}_{{n^2\times 1}} := \underbrace{( \diag( \F(x)_{j_0} ) - \F(x)_{j_0} \F(x)_{j_0}^\top)}_{{n^2 \times n^2}} \underbrace{\Q(x)_{j_0}}_{{n^2\times 1}}.
\end{align*}
We define $\P(x) \in \R^{n \times n^2}$ in the sense that $\P(x)_{j_0}^\top$ is the $j_0$-th row of $\P(x)$.
\end{definition}

\subsection{Definitions for loss function}\label{sec:preliminary:loss}

We now present some useful definitions pertaining to $x \in \R^{d^3}$.

\begin{definition}\label{def:c}
For all $j_0 \in [n]$, we denote $\F(x)_{j_0} \in \R^{n^2}$ 
as the normalized vector (see Definition~\ref{def:f}). For all $i_0 \in [d]$, we denote $\H(y)_{i_0}$ to be the same in Definition~\ref{def:h}.

Consider every $j_0 \in [n]$, every $i_0 \in [d]$. Let us consider $\C(x)_{j_0,i_0}: \R^{d^3} \rightarrow \R$ 
as follows:
\begin{align*}
    \C(x)_{j_0,i_0}:= \langle \F(x)_{j_0}, \H(y)_{i_0} \rangle - E_{j_0,i_0},
\end{align*}
where $E_{j_0,i_0}$ is the $(j_0,i_0)$-th coordinate of $E \in \R^{n \times d}$ for $j_0 \in [n], i_0 \in [d]$.
This is the same as $\underbrace{ \C(x) }_{n \times d} = \underbrace{ \F(x) }_{n \times n^2} \underbrace{ \H(y) }_{n^2 \times d} - \underbrace{E}_{n \times d}$.
\end{definition}

\begin{definition}\label{def:l}
For all $j_0 \in [n]$, for all $i_0 \in [d]$. We define
$
 \Loss(x)_{j_0,i_0} $ to  be $:= 0.5 \C(x)_{j_0,i_0}^2
 $. 
\end{definition}
\subsection{Further information on gradient computation}\label{app:sub:app_gradient}

In this section, we offer detailed analysis to help the computations of gradient and derivative. It is noted that, for the sake of convenience in deriving a closed-form expression for our gradient, we omit the $1/d$ normalization factor in $\F$. As this factor merely scales the result, it does not impact the overall computation of these matrices.

\begin{remark}
Recall that in Definition~\ref{def:attention_optimization_loss}, we consider $X \in \R^{d\times d \times d}$ for gradient computation, which has $d^3$ number of parameters.
On the other hand, in Definition~\ref{def:ATAttLGC}, we have $X = X_1 \cdot (X_2^\top \ominus X_3^\top) \in \R^{d \times d^2}$ which has $3d^2$ number of parameters, which indeed guarantee computation acceleration. 
\end{remark}

\begin{lemma}[The gradient computation for various functions w.r.t. $x_i$]\label{lem:gradient_x}
Let $x \in \R^{d^3}$. Let $j_0 \in [n], i_0 \in [d]$. 
For all $i \in [d^3]$, we define $\A_{j_0,i} \in \R^{n^2}$ to be the $i$-th column for $\A_{j_0} \in \R^{n^2 \times d^3}$. Recall that $\U(x)_{j_0} \in \R^{n^2}$ is defined in Definitions~\ref{def:u}. The scalar function $\alpha(x)_{j_0} \in \R$ is defined in Definitions~\ref{def:alpha} . Column function $\F(x)_{j_0} \in \R^{n^2}$ is defined in Definitions~\ref{def:f}. Scalar function $\C(x)_{j_0,i_0} \in \R$ is defined in Definitions~\ref{def:c}. Scalar function $\Loss(x)_{j_0,i_0} \in \R$ is defined in Definitions~\ref{def:l}. 
 
Then, for each $i \in [d^3]$, we have
\begin{itemize}
    \item {\bf Part 1.}
    \begin{align*}
        \frac{ \d x }{\d x_i} = e_i
    \end{align*}
    \item {\bf Part 2.} For any $j_0 \in [n]$,
    \begin{align*}
        \frac{ \d \A_{j_0} x }{ \d x_i } = \A_{j_0,i}
    \end{align*}
    \item {\bf Part 3.} For any $j_0 \in [n]$
    \begin{align*}
        \frac{ \d \U(x)_{j_0} }{ \d x_i } = \A_{j_0,i} \circ \U(x)_{j_0}
    \end{align*}
    \item {\bf Part 4.} For any $j_0 \in [n]$,  
    \begin{align*}
        \frac{\d \alpha(x)_{j_0}}{\d x_i} = \langle \A_{j_0,i}, \U(x)_{j_0} \rangle
    \end{align*}
    \item {\bf Part 5.} For any $j_0 \in [n]$, 
    \begin{align*}
        \frac{ \d \F(x)_{j_0} }{ \d x_i } = \A_{j_0,i} \circ \F(x)_{j_0}  - \langle \A_{j_0,i}, \F(x)_{j_0}\rangle \cdot \F(x)_{j_0}
    \end{align*}
    \item {\bf Part 6.} For any $j_0 \in [n]$, for any $i_0 \in [d]$, 
    \begin{align*}
        \frac{ \d \langle \F(x)_{j_0} , \H(y)_{i_0} \rangle }{ \d x_i } = \langle \H(y)_{i_0}, \A_{j_0,i} \circ \F(x)_{j_0} \rangle - \langle  \H(y)_{i_0}, \F(x)_{j_0} \rangle \cdot \langle \A_{j_0,i}, \F(x)_{j_0} \rangle
    \end{align*} 
    \item {\bf Part 7.} For any $j_0 \in [n]$, for each $i_0 \in [d]$
    \begin{align*}
        \frac{ \d \C(x)_{j_0,i_0} }{ \d x_i } = \langle \A_{j_0,i} \circ  \F(x)_{j_0}, \H(y)_{i_0} \rangle - \langle  \F(x)_{j_0} , \H(y)_{i_0} \rangle \cdot \langle\A_{j_0,i}, \F(x)_{j_0} \rangle
    \end{align*}
    \item {\bf Part 8.} For any $j_0 \in [n]$, for each $i_0 \in [d]$
    \begin{align*}
        \frac{\d \Loss(x)_{j_0,i_0}}{\d x_i} = (\langle \H(y)_{i_0} , \A_{j_0,i} \circ \F(x)_{j_0}\rangle - \langle \F(x)_{j_0}, \A_{j_0,i} \rangle \cdot \langle \H(y)_{i_0} , \F(x)_{j_0}\rangle) \cdot \C(x)_{j_0, i_0}
    \end{align*}
\end{itemize}
\end{lemma}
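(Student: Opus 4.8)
The plan is to establish the eight identities one after another in the order listed, each by a short application of the chain rule and the product/quotient rule, re-using the earlier parts as we go; throughout, $j_0\in[n]$ and $i_0\in[d]$ are fixed, and $\A_{j_0}$, $\H(y)_{i_0}$, ${\bf 1}_{n^2}$, and $E_{j_0,i_0}$ are all constant in $x$. Part~1 is immediate from $x=\sum_k x_k e_k$. Part~2 then follows from linearity of the map $x\mapsto \A_{j_0}x$, namely $\frac{\d}{\d x_i}(\A_{j_0}x)=\A_{j_0}e_i=\A_{j_0,i}$.

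For Part~3, I differentiate $\U(x)_{j_0}=\exp(\A_{j_0}x)$ coordinatewise and plug in Part~2, which produces the Hadamard product $\frac{\d}{\d x_i}\U(x)_{j_0}=\A_{j_0,i}\circ\U(x)_{j_0}$. Part~4 comes from writing $\alpha(x)_{j_0}=\langle \U(x)_{j_0},{\bf 1}_{n^2}\rangle$, applying Part~3, and using $\langle a\circ u,{\bf 1}_{n^2}\rangle=\langle a,u\rangle$. For Part~5 I apply the quotient rule to $\F(x)_{j_0}=\alpha(x)_{j_0}^{-1}\U(x)_{j_0}$,
\begin{align*}
\frac{\d \F(x)_{j_0}}{\d x_i}=\alpha(x)_{j_0}^{-1}\,\frac{\d \U(x)_{j_0}}{\d x_i}-\alpha(x)_{j_0}^{-2}\Big(\frac{\d \alpha(x)_{j_0}}{\d x_i}\Big)\U(x)_{j_0},
\end{align*}
substitute Parts~3 and~4, and then twice invoke the normalization identity $\alpha(x)_{j_0}^{-1}\U(x)_{j_0}=\F(x)_{j_0}$ to collapse everything into $\A_{j_0,i}\circ\F(x)_{j_0}-\langle\A_{j_0,i},\F(x)_{j_0}\rangle\F(x)_{j_0}$.

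The remaining parts just push this through the loss. Since $\H(y)_{i_0}$ does not depend on $x$, Part~6 equals $\langle \frac{\d\F(x)_{j_0}}{\d x_i},\H(y)_{i_0}\rangle$, so substituting Part~5 and distributing the inner product gives the two stated terms. Part~7 is identical to Part~6 because $\C(x)_{j_0,i_0}$ differs from $\langle\F(x)_{j_0},\H(y)_{i_0}\rangle$ only by the $x$-independent constant $E_{j_0,i_0}$. Finally, Part~8 follows from $\Loss(x)_{j_0,i_0}=0.5\,\C(x)_{j_0,i_0}^2$ and the chain rule, $\frac{\d \Loss(x)_{j_0,i_0}}{\d x_i}=\C(x)_{j_0,i_0}\cdot\frac{\d\C(x)_{j_0,i_0}}{\d x_i}$, into which we insert Part~7.

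I do not expect any genuine obstacle: this is a routine, if bookkeeping-heavy, differentiation. The only steps needing care are keeping the Hadamard products straight when differentiating $\exp(\A_{j_0}x)$ entrywise, and invoking $\alpha(x)_{j_0}^{-1}\U(x)_{j_0}=\F(x)_{j_0}$ at precisely the right moment so that Part~5 emerges in the compact ``diagonal-minus-rank-one'' shape. That shape is exactly what later lets us rewrite $\frac{\d\F(x)_{j_0}}{\d x_i}$ via the operator $\diag(\F(x)_{j_0})-\F(x)_{j_0}\F(x)_{j_0}^\top$ appearing in Definition~\ref{def:p}, and hence feeds the closed-form gradient of Lemma~\ref{lem:compute_gradient_informal}.
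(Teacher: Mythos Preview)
Your proposal is correct and follows essentially the same approach as the paper: a straight sequential application of the chain rule and product/quotient rule through Parts~1--8, with Part~5 obtained by the quotient rule on $\alpha(x)_{j_0}^{-1}\U(x)_{j_0}$ and Parts~6--8 following immediately since $\H(y)_{i_0}$ and $E_{j_0,i_0}$ are constant in $x$. The paper's proof is line-for-line the same bookkeeping, including the identity $\langle a\circ u,{\bf 1}_{n^2}\rangle=\langle a,u\rangle$ in Part~4 and the two invocations of $\alpha^{-1}\U=\F$ in Part~5.
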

\begin{proof}

{\bf Proof of Part 1.}
We have
\begin{align*}
    \frac{\d x }{\d x_i} = & ~ \frac{\d [x_1,x_2, \dots, x_{d^3}]^\top}{\d x_i}\\
    = & ~ e_i
\end{align*}
where the first step follows from $x$ is a vector, and the second step follows from all coordinates are independent to each other.

{\bf Proof of Part 2.}
We have
\begin{align*}
    \frac{\d \A_{j_0} x }{ \d x_i} = & ~ \underbrace{ \A_{j_0} }_{n^2 \times d^3} \underbrace{ \frac{\d x}{ \d x_i} }_{d^3 \times 1} \\
    = & ~ \underbrace{ \A_{j_0} }_{n^2 \times d^3} \cdot \underbrace{ e_i }_{d^3 \times 1} \\
    = & ~ \underbrace{\A_{j_0,i}}_{n^2 \times 1}
\end{align*}
where the second step follows from Part 1.

{\bf Proof of Part 3.}

It's easy to show that

\begin{align*}
    \underbrace{\frac{ \d \U(x)_{j_0} }{ \d x_i } }_{n^2 \times 1}
    = & ~ \frac{ \d \exp( \A_{j_0} x ) }{ \d x_i } \\
    = & ~ \exp( \A_{j_0} x ) \circ \frac{\d \A_{j_0} x }{ \d x_i} \\
    = & ~  \exp( \A_{j_0} x ) \circ \A_{j_0,i} \\
    = & ~ \underbrace{\U(x)_{j_0}}_{n^2 \times 1} \circ \underbrace{\A_{j_0,i} }_{n^2 \times 1}
\end{align*}
where the third step is because of Part 2, the last step follows from definition of $\U(x)_{j_0}$.

{\bf Proof of Part 4.}

To further simplify the writing of proofs, we represent $(x)$ as $(\cdot)$.

It's easy to see that
\begin{align*}
 \frac{ \d \alpha(\cdot)_{j_0} }{\d x_i} 
 = & ~ \frac{\d \langle \U(\cdot)_{j_0} , {\bf 1}_{n^2} \rangle }{\d x_i} \\
 = & ~ \langle \U(\cdot)_{j_0} \circ \A_{j_0,i} , {\bf 1}_{n^2} \rangle \\
 = & ~ \langle \U(\cdot)_{j_0} , \A_{j_0,i} \rangle
\end{align*}
where the first step is due to definition of $\alpha(\cdot)$, the second step is because of Part 3, the third step comes from $\langle a\circ b, {\bf 1}_{n^2} \rangle = \langle a , b \rangle$.

{\bf Proof of Part 5.}

To further simplify the writing of proofs, we represent $(x)$ as $(\cdot)$.

It's easy to see that
\begin{align*}
        \frac{ \d \F(\cdot)_{j_0} }{ \d x_i } 
        = & ~  \frac{ \d \alpha(\cdot)_{j_0}^{-1} \U(\cdot)_{j_0} }{ \d x_i } \\
        = & ~ \alpha(\cdot)_{j_0}^{-1} \frac{\d \U(\cdot)_{j_0}}{\d x_i} + ( \frac{\d \alpha(\cdot)_{j_0}^{-1} }{\d x_i} ) \U(\cdot)_{j_0} 
\end{align*}
For the first term, we have
\begin{align*}
    \alpha(\cdot)_{j_0}^{-1} \frac{\d \U(\cdot)_{j_0}}{\d x_i} 
    = & ~ \alpha(\cdot)_{j_0}^{-1} \U(\cdot)_{j_0} \circ \A_{j_0,i} \\
    = & ~ \F(\cdot)_{j_0} \circ \A_{j_0,i}
\end{align*}
where the first step is due to Part 3, the second step is because of definition of $\F(\cdot)$.

For the second term, we have
\begin{align*}
    ( \frac{\d \alpha(\cdot)_{j_0}^{-1} }{\d x_i} ) \U(\cdot)_{j_0} 
    = & ~ - \alpha(\cdot)_{j_0}^{-2} \frac{ \d \alpha(\cdot)_{j_0} }{\d x_i} \U(\cdot)_{j_0} \\
    = & ~ - \alpha(\cdot)_{j_0}^{-2} \cdot \langle \U(\cdot)_{j_0} , \A_{j_0,i} \rangle \cdot \U(\cdot)_{j_0} \\
    = & ~ -\F(\cdot)_{j_0} \cdot \langle \F(\cdot)_{j_0}, \A_{j_0, i} \rangle
\end{align*}
where the first step is from simple calculus, the second step is from Part 4, and the third step is due to the definition of $\F(\cdot)_{j_0}$.

By applying all of the above, we have
\begin{align*}
   \frac{ \d \F(\cdot)_{j_0} }{ \d x_i }  = & ~ \F(\cdot)_{j_0} \circ \A_{j_0,i} - \F(\cdot)_{j_0} \cdot \langle \F(\cdot)_{j_0} , \A_{j_0,i}\rangle
\end{align*}

{\bf Proof of Part 6.}
From Part 5, clearly this holds.

{\bf Proof of Part 7.}

To further simplify the writing of proofs, we represent $(x)$ as $(\cdot)$.

From definition of $\C$ in Definition~\ref{def:c}, it holds that
\begin{align}\label{eq:c_x:}
    \C(\cdot)_{j_0,i_0} := \langle \F(\cdot)_{j_0}, \H(y)_{i_0} \rangle - E_{j_0,i_0}
\end{align}

Thus it holds that
\begin{align*}
        \frac{ \d \C(\cdot)_{j_0,i_0} }{ \d x_i } 
        = & ~ \frac{ \d (\langle \F(\cdot)_{j_0}, \H(y)_{i_0} \rangle - E_{j_0,i_0}) }{ \d x_i } \\
        = & ~ \frac{ \d \langle \F(\cdot)_{j_0}, \H(y)_{i_0} \rangle }{ \d x_i } \\
        = & ~ \langle  \F(\cdot)_{j_0} \circ \A_{j_0,i},  \H(y)_{i_0} \rangle - \langle  \F(\cdot)_{j_0} ,  \H(y)_{i_0} \rangle \cdot \langle \F(\cdot)_{j_0}, \A_{j_0,i} \rangle,
    \end{align*}
    where the first step comes from Eq.~\eqref{eq:c_x:}, the second step follows from $\frac{\d E_{j_0,i_0}}{\d x_i} = 0$, and the last step is due to {\bf Part 6}.

{\bf Proof of Part 8.}

To further simplify the writing of proofs, we represent $(x)$ as $(\cdot)$.

From definition of $\Loss(\cdot)$ (see Definition~\ref{def:l}), it holds that
\begin{align}\label{eq:l_x:}
    \Loss(\cdot)_{j_0,i_0} = 0.5 \C(\cdot)_{j_0,i_0}^2 
\end{align}

Thus, we have
\begin{align*}
    \frac{\d \Loss(\cdot)_{j_0,i_0}}{\d x_i} 
    = & ~ \frac{\d (0.5 \C(\cdot)_{j_0,i_0}^2)}{\d x_i} \\
    = & ~ \C(\cdot)_{j_0,i_0} \frac{\d \C(\cdot)}{\d x_i} \\
    = & ~ \C(\cdot)_{j_0,i_0} \cdot (\langle  \F(\cdot)_{j_0} \circ \A_{j_0,i},  \H(y)_{i_0} \rangle - \langle  \F(\cdot)_{j_0} ,  \H(y)_{i_0} \rangle \cdot \langle \F(\cdot)_{j_0}, \A_{j_0,i} \rangle),
\end{align*}
where the 1st step comes from the Eq.~\eqref{eq:l_x:}, the second step follows from the chain rule, and the last step is because of {\bf Part 7}.

\end{proof}

\section{Tensor Attention Exact Gradient Computation Time Complexity}\label{app:app_time}

Section~\ref{sec:compute_f_h} demonstrates how to calculate $\F$ ($1/d$ factor is still ignored) and $\H$. 
Section~\ref{sec:compute_c} explains the straightforward method for calculating $\C$. 
Section~\ref{sec:compute_q} and Section~\ref{sec:compute_p} define $\P$ and $\Q$, and demonstrate their computations. 
Section~\ref{sec:compute_gradient} presents a more elegant way to express the gradient. 
Finally, Section~\ref{sec:compute_together} combines all these elements and determine the overall time complexity of our algorithm.

\subsection{Time complexity to get \texorpdfstring{$\F$}{} and \texorpdfstring{$\H$}{}}\label{sec:compute_f_h}
\begin{remark}
Note that $\Tmat(n,d^2,n^2) \geq \Omega(n^3)$.
\end{remark}

Now we will show the time complexity for computing $\F$ and $\H$.
\begin{lemma}[Computing $\F$ and $\H$]\label{lem:compute_f_h}
If the following conditions hold
\begin{itemize}
    \item Let $\F(x) \in \R^{n \times n^2}$ (see Definition~\ref{def:f})
    \item Let $\H(y) \in \R^{n^2 \times d}$ (see Definition~\ref{def:h})
\end{itemize}
Then, we have 
\begin{itemize}
    \item the time complexity of $\F(x)$ is $\Tmat(n,d^2, n^2) + \Tmat(n,d,d^2)$
    \item the time complexity of $\H(y)$ is $\Tmat(n^2,d^2,d)$
\end{itemize}
\end{lemma}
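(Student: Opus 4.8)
The plan is to unfold the definitions of $\F(x)$ (Definition~\ref{def:f}) and $\H(y)$ (Definition~\ref{def:h}) into their matrix-product form and count arithmetic operations layer by layer. First I would handle $\F(x)$. Recall that, ignoring the $1/d$ factor, $\F(x) = D^{-1} \exp(A_1 X (A_2 \otimes A_3)^\top) \in \R^{n \times n^2}$ with $D = \diag(\exp(A_1 X (A_2 \otimes A_3)^\top) {\bf 1}_{n^2})$ and $X \in \R^{d \times d^2}$. The computation decomposes into: (i) form $A_1 X \in \R^{n \times d^2}$, which costs $\Tmat(n, d, d^2)$; (ii) form $A_2 \otimes A_3 \in \R^{n^2 \times d^2}$, which is $O(n^2 d^2)$ entrywise and is dominated by the next step; (iii) multiply $(A_1 X)(A_2 \otimes A_3)^\top \in \R^{n \times n^2}$, which costs $\Tmat(n, d^2, n^2)$; (iv) apply $\exp$ entrywise, costing $O(n^3)$; (v) compute the row sums to get $D$ and rescale, costing $O(n^3)$. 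Since $\Tmat(n, d^2, n^2) \ge \Omega(n^3)$ (as noted in the remark), steps (iv) and (v) are absorbed, and the total is $\Tmat(n, d^2, n^2) + \Tmat(n, d, d^2)$, matching the claim.

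Next I would handle $\H(y)$. By Definition~\ref{def:h}, $\H(y) = (A_4 \otimes A_5) Y \in \R^{n^2 \times d}$ where $A_4 \otimes A_5 \in \R^{n^2 \times d^2}$ and $Y = Y_1 \oslash Y_2 \in \R^{d^2 \times d}$. The computation is: (i) form $A_4 \otimes A_5$, costing $O(n^2 d^2)$; (ii) form $Y_1 \oslash Y_2$, costing $O(d^3)$; (iii) multiply $(A_4 \otimes A_5) Y$, costing $\Tmat(n^2, d^2, d)$. The multiplication dominates the two formation steps, so the total is $\Tmat(n^2, d^2, d)$, as claimed. I would present both counts as short itemized derivations, citing the relevant definitions and the $\Tmat$ domination facts at each absorption step.

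I do not anticipate a genuine obstacle here — this is a bookkeeping lemma, and the only subtlety is making sure every intermediate tensor-formation cost is correctly identified as subdominant relative to the matrix-multiplication cost that follows it (in particular, that $n^2 d^2 = O(\Tmat(n, d^2, n^2))$ for the $\F$ case and $n^2 d^2 = O(\Tmat(n^2, d^2, d))$ for the $\H$ case, both of which are immediate). The one point requiring a word of care is the legitimacy of dropping the $1/d$ normalization factor, which the paper has already flagged as harmless since it only rescales entries; I would simply reference that remark rather than re-argue it.
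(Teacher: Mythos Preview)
Your proposal is correct and matches the paper's own proof essentially step for step: both unfold $\F(x)$ into the chain $A_1 X \to (A_1 X)(A_2\otimes A_3)^\top \to \exp(\cdot) \to D^{-1}(\cdot)$, charge $\Tmat(n,d,d^2)$ and $\Tmat(n,d^2,n^2)$ for the two matrix products, and absorb the $O(n^2 d^2)$ formation cost and the $O(n^3)$ entrywise/diagonal steps into $\Tmat(n,d^2,n^2)$; and both handle $\H(y)$ as a single $\Tmat(n^2,d^2,d)$ product dominating the formation of $A_4\otimes A_5$ and $Y_1\oslash Y_2$. Your write-up is, if anything, slightly more explicit about the entrywise $\exp$ step and the $Y_1\oslash Y_2$ formation than the paper's, but there is no substantive difference.
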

\begin{proof}
Note that 
\begin{align*}
\F(x)  = \underbrace{D^{-1}}_{n \times n} \exp(\underbrace{A_1}_{n \times d} \underbrace{X}_{d \times d^2} \underbrace{(A_2 \otimes A_3)^\top}_{d^2 \times n^2} )
\end{align*}
and
\begin{align*}
 D = \diag( \exp(A_1 X ( A_2 \otimes A_3)^\top ) {\bf 1}_{n^2} )
\end{align*}
We firstly compute $\exp(A_1 X (A_2 \otimes A_3)^\top )$, this takes time of
\begin{itemize}
    \item $\underbrace{A_1}_{n \times d} \underbrace{X}_{d \times d^2}$ takes $\Tmat(n,d , d^2)$
    \item Computing $A_2 \otimes A_3$ takes $O(n^2 d^2)$ time 
    \item Computing $A_1X \cdot (A_2 \otimes A_3)^\top$ takes $\Tmat(n, d^2, n^2)$ time
\end{itemize}
The overall time complexity of above three parts is dominated by
\begin{align*}
    \Tmat(n,d, d^2) + O(d^2n^2) + \Tmat(n,d^2, n^2) = \Tmat(n,d, d^2) + \Tmat(n,d^2, n^2)
\end{align*} 

Therefore, computing $D$ takes $O(n^3)$ time.

Computing $D^{-1} \exp(A_1 X (A_2 \otimes A_3) ^\top )$ requires $O(n^3)$ time.

Therefore, the overall time complexity is
\begin{align*}
    & ~ \Tmat(n,d, d^2) + \Tmat(n,d^2, n^2)
\end{align*}

It is noted that computing $\H(y) = \underbrace{\A_3}_{n^2 \times d^2} \underbrace{Y}_{d^2 \times d}$ takes time of $\Tmat(n^2,d^2,d)$. 

Thus, we complete the proof.
\end{proof}

\subsection{Time complexity to get \texorpdfstring{$\C$}{}}\label{sec:compute_c}

We will explain the calculation of $\C$.
\begin{lemma}[Computing $\C$]\label{lem:compute_c}
If the following conditions hold
\begin{itemize}
    \item Let $E \in \R^{n \times d}$ 
    \item Let $\F(x) \in \R^{n \times n^2}$.
    \item Let $\H(y) \in \R^{n^2 \times d}$.
\end{itemize}
Then one can get $\C(x) \in \R^{n \times d}$ in $O( \Tmat(n,n^2,d) )$ time.
\end{lemma}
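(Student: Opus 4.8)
The plan is to read off the closed form for $\C$ from Definition~\ref{def:c}, namely $\underbrace{\C(x)}_{n\times d} = \underbrace{\F(x)}_{n\times n^2}\,\underbrace{\H(y)}_{n^2\times d} - \underbrace{E}_{n\times d}$, and then simply account for the cost of the two operations it involves. Since $\F(x)$ and $\H(y)$ are given (their own construction costs are handled separately in Lemma~\ref{lem:compute_f_h}), the only work here is (i) forming the matrix product $\F(x)\H(y)$, which multiplies an $n\times n^2$ matrix by an $n^2\times d$ matrix and hence costs $\Tmat(n,n^2,d)$ time by the definition of $\Tmat$, and (ii) subtracting the fixed matrix $E\in\R^{n\times d}$ entrywise, which costs $O(nd)$ time.

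To finish, I would observe that the subtraction is negligible compared to the multiplication: $\Tmat(n,n^2,d)\ge \Omega(nd)$ (indeed it is already $\Omega(n^3)$ just to touch the inputs and produce the output), so the total running time is $O(\Tmat(n,n^2,d))$, which is exactly the claimed bound.

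I do not anticipate any real obstacle: this is a bookkeeping lemma whose entire content is identifying that computing $\C$ reduces to one dense matrix multiplication plus a cheap subtraction. The only point requiring a little care is to be explicit that $\F(x)$ and $\H(y)$ enter as precomputed inputs, so that the stated complexity is the marginal cost of combining them rather than the end-to-end cost — the latter being assembled later when all lemmas are combined in Section~\ref{sec:compute_together}.
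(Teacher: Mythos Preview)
Your proposal is correct and follows essentially the same approach as the paper: unpack the definition $\C(x)=\F(x)\H(y)-E$, charge $\Tmat(n,n^2,d)$ for the product and $O(nd)$ for the subtraction, and absorb the latter into the former.
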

\begin{proof}
Based on the definition of $\C(x) \in \R^{n \times d}$ which is
\begin{align*}
     \C(x) = \underbrace{ \F(x) }_{n \times n^2} \underbrace{ \H(y) }_{n^2 \times d} - \underbrace{E}_{n \times d}
\end{align*}
It is easy to see that we can compute $\F(x) \H(y)$ in time $\Tmat(n,n^2,d)$, and  $\F(x) \H(y) - E$ in time $O(nd)$.

Therefore, overall running time is 
\begin{align*}
\Tmat(n,n^2,d) + O(nd) = O( \Tmat(n,n^2,d) ).
\end{align*}

\end{proof}

\subsection{Time complexity to get \texorpdfstring{$\Q$}{}}\label{sec:compute_q}

We will explain how to calculate $\Q$.

\begin{lemma}\label{lem:compute_q}
If the below holds that
\begin{itemize}
    \item Let $\C(x) \in \R^{n \times d}$
    \item Let $\H(y) \in \R^{n^2 \times d}$
\end{itemize}
Then, computing $\Q(x)$ takes time of $O(\Tmat(n,d,n^2))$. 
\end{lemma}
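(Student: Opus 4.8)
The plan is to compute $\Q(x) = \C(x) \H(y)^\top$ directly from its definition (Definition~\ref{def:q}), where $\C(x) \in \R^{n \times d}$ and $\H(y) \in \R^{n^2 \times d}$ are treated as already-computed input matrices. Since $\Q(x) \in \R^{n \times n^2}$, the output itself has $n^3$ entries, so we cannot hope for anything faster than $\Omega(n^3)$ in this exact (non-approximate) regime; the claim is that forming the product costs $O(\Tmat(n, d, n^2))$, which is consistent with that.

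First I would recall that $\H(y)^\top \in \R^{d \times n^2}$ is just the transpose of $\H(y)$ and can be obtained in $O(n^2 d)$ time (or for free depending on the storage convention), so the only real work is the matrix multiplication of an $n \times d$ matrix with a $d \times n^2$ matrix. By the definition of $\Tmat(\cdot,\cdot,\cdot)$, this is exactly $\Tmat(n, d, n^2)$ time. Then I would note that $\Tmat(n,d,n^2) = O(\Tmat(n,d,n^2))$ trivially, and that the transpose cost $O(n^2 d)$ is dominated by $\Tmat(n,d,n^2)$ (since multiplying an $n \times d$ by a $d \times n^2$ matrix requires reading all $\Omega(n^2 d)$ entries of the second matrix). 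Hence the overall running time is $O(\Tmat(n,d,n^2))$.

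There is essentially no obstacle here — it is a one-line counting argument. The only thing to be slightly careful about is matching the statement's $\Tmat(n,d,n^2)$ against the commutation facts for $\Tmat$ stated earlier (e.g., $\Tmat(a,b,c) = O(\Tmat(a,c,b))$ etc.), but these are not even needed since the product is naturally in the form $\Tmat(n,d,n^2)$. So the proof is: write $\Q(x) = \C(x) \H(y)^\top$, observe transposing $\H(y)$ costs $O(n^2 d)$ which is absorbed, and multiplying costs $\Tmat(n,d,n^2)$, giving $O(\Tmat(n,d,n^2))$ total.
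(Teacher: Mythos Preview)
Your proposal is correct and matches the paper's own proof, which is just the one-line observation that $\Q(x) = \C(x) \H(y)^\top$ is the product of an $n \times d$ matrix with a $d \times n^2$ matrix and hence costs $\Tmat(n,d,n^2)$. Your additional remarks about the transpose cost being absorbed are fine but not needed.
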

\begin{proof}
Let use recall that $\Q(x) = \C(x) \H(y)^\top$. This need time of $\Tmat(n,d,n^2)$ to compute.
\end{proof}

\subsection{Time complexity to get \texorpdfstring{$\P$}{}}\label{sec:compute_p}

We can show how to construct $\P$.

\begin{lemma}\label{lem:compute_p}
If the following conditions hold
\begin{itemize}
    \item Let $\F(x)\in \R^{n \times n^2}$
    \item Let $\Q(x) \in \R^{n \times n^2}$ 
\end{itemize}
Then, computing takes time of $\P(x)$ in $O(n^3)$.
\end{lemma}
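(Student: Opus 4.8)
The plan is to compute $\P(x) \in \R^{n \times n^2}$ one row at a time, exploiting the diagonal-plus-rank-one structure of the matrix appearing in Definition~\ref{def:p} so that we never materialize an $n^2 \times n^2$ object. Fix $j_0 \in [n]$. By Definition~\ref{def:p},
\begin{align*}
\P(x)_{j_0} = \underbrace{\diag(\F(x)_{j_0}) \Q(x)_{j_0}}_{n^2 \times 1} - \underbrace{\F(x)_{j_0} \left( \F(x)_{j_0}^\top \Q(x)_{j_0} \right)}_{n^2 \times 1} = \F(x)_{j_0} \circ \Q(x)_{j_0} - \langle \F(x)_{j_0}, \Q(x)_{j_0} \rangle \cdot \F(x)_{j_0},
\end{align*}
where we used that $\diag(v) w = v \circ w$ for length-$n^2$ vectors and that $\F(x)_{j_0}^\top \Q(x)_{j_0}$ is a scalar.

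With this form the per-row cost is immediate. The Hadamard product $\F(x)_{j_0} \circ \Q(x)_{j_0}$ of two length-$n^2$ vectors takes $O(n^2)$ time; the inner product $c_{j_0} := \langle \F(x)_{j_0}, \Q(x)_{j_0}\rangle$ takes $O(n^2)$ time; scaling $c_{j_0} \cdot \F(x)_{j_0}$ takes $O(n^2)$ time; and the final subtraction of two length-$n^2$ vectors takes $O(n^2)$ time. Hence each row $\P(x)_{j_0}$ is obtained in $O(n^2)$ time given $\F(x)$ and $\Q(x)$ as input. Summing over the $n$ choices of $j_0 \in [n]$ yields the claimed $O(n^3)$ total running time.

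There is no serious obstacle here; the only thing to be careful about is the implicit application of $\diag(\F(x)_{j_0}) - \F(x)_{j_0}\F(x)_{j_0}^\top$. Reading Definition~\ref{def:p} literally would suggest forming that $n^2 \times n^2$ matrix, which already costs $\Omega(n^4)$ time (and storage); instead one must apply it to $\Q(x)_{j_0}$ through the decomposition above, keeping each row at $O(n^2)$ and the whole computation at $O(n^3)$, matching the other steps of the exact gradient computation (Lemmas~\ref{lem:compute_f_h}, \ref{lem:compute_c}, \ref{lem:compute_q}).
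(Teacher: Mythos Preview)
Your proposal is correct and follows essentially the same argument as the paper: both exploit that $\diag(\F(x)_{j_0})$ is diagonal and $\F(x)_{j_0}\F(x)_{j_0}^\top$ is rank one, so each row $\P(x)_{j_0}$ costs $O(n^2)$ and the total over $j_0\in[n]$ is $O(n^3)$. You simply spell out the Hadamard/inner-product decomposition that the paper leaves implicit.
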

\begin{proof}
For every $j_0 \in [n]$, it follows that $\P(x)_{j_0} \in \R^{n^2}$ can be computed in $O(n^2)$, given that $\diag(\F(x)_{j_0})$ is a diagonal matrix and $\F(x)_{j_0} \F(x)_{j_0}^\top$ is a rank-one matrix.
Consequently, constructing the matrix $\P(x) \in \mathbb{R}^{n \times n^2}$ takes a total time of $n \times O(n^2) = O(n^3)$.
\end{proof}

\subsection{Closed form of gradient}\label{sec:compute_gradient}

We will give the closed form the gradient of the loss function.

\begin{lemma}[Closed form of gradient, formal version of Lemma~\ref{lem:compute_gradient_informal}]\label{lem:compute_gradient}
Let us define functions $\F(x) \in \R^{n \times n^2}$, $\C(x) \in \R^{n \times d}$,  $\H(y) \in \R^{n^2 \times d}$, $\Q(x) \in \R^{n \times n^2}$ and $\P(x) \in \R^{n \times n^2}$ (see Definitions~\ref{def:f}, \ref{def:c}, \ref{def:h}, \ref{def:q} and \ref{def:p} respectively). Suppose three matrices $A_1, A_2, A_3 \in \R^{n \times d}$ are given. We define $\A = A_1 \otimes A_2 \otimes A_3$. Let $\Loss(x)$ and $\Loss(x)_{j_0,i_0}$ be defined as Definition~\ref{def:attention_optimization_loss} and~\ref{def:l}. 
Then, we can show that 
\begin{align*}
\frac{\d \Loss(x)}{\d x} = \vect(A_1^\top \P(x) (A_2\otimes A_3) ) \in \R^{d^3}.
\end{align*}
\end{lemma}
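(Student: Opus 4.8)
The plan is to assemble the scalar partial derivatives $\frac{\d \Loss(x)_{j_0,i_0}}{\d x_i}$ already computed in Lemma~\ref{lem:gradient_x} Part 8 into the full gradient vector $\frac{\d \Loss(x)}{\d x} = \sum_{j_0 \in [n]}\sum_{i_0 \in [d]} \frac{\d \Loss(x)_{j_0,i_0}}{\d x_i}$, and then recognize the result as a vectorized matrix product. First I would fix $j_0 \in [n]$ and sum over $i_0 \in [d]$. From Part 8 of Lemma~\ref{lem:gradient_x}, for each $i \in [d^3]$,
\begin{align*}
\sum_{i_0 \in [d]} \frac{\d \Loss(x)_{j_0,i_0}}{\d x_i} = \sum_{i_0 \in [d]} \big( \langle \H(y)_{i_0}, \A_{j_0,i} \circ \F(x)_{j_0}\rangle - \langle \F(x)_{j_0}, \A_{j_0,i}\rangle \langle \H(y)_{i_0}, \F(x)_{j_0}\rangle \big) \C(x)_{j_0,i_0}.
\end{align*}
Using the identity $\langle u, v \circ w\rangle = \langle u \circ v, w\rangle = v^\top \diag(u) w$ and pulling the scalar $\C(x)_{j_0,i_0}$ inside, the right-hand side becomes $\A_{j_0,i}^\top \big( \diag(\F(x)_{j_0}) - \F(x)_{j_0}\F(x)_{j_0}^\top \big) \sum_{i_0}\C(x)_{j_0,i_0}\H(y)_{i_0}$. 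Now $\sum_{i_0}\C(x)_{j_0,i_0}\H(y)_{i_0} = \H(y)^\top \C(x)_{j_0,*}$, which is exactly $\Q(x)_{j_0}$ by Definition~\ref{def:q}, so the inner sum collapses to $\A_{j_0,i}^\top \big( \diag(\F(x)_{j_0}) - \F(x)_{j_0}\F(x)_{j_0}^\top \big)\Q(x)_{j_0} = \A_{j_0,i}^\top \P(x)_{j_0}$ by Definition~\ref{def:p}.

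Next I would reassemble over $j_0 \in [n]$ and over the coordinate index $i$. Stacking $i \in [d^3]$, the vector with $i$-th entry $\A_{j_0,i}^\top \P(x)_{j_0}$ is $\A_{j_0}^\top \P(x)_{j_0} \in \R^{d^3}$, so $\frac{\d \Loss(x)}{\d x} = \sum_{j_0 \in [n]} \A_{j_0}^\top \P(x)_{j_0}$. The remaining step is to rewrite this in terms of $A_1, A_2, A_3$. Since $\A = A_1 \otimes A_2 \otimes A_3$ and $\A_{j_0}$ is the $n^2 \times d^3$ sub-block whose $i$-th row (for $i \in [n^2]$, $i = j + (l-1)n$ say) equals the corresponding row of $\A$, a direct index computation using Definition~\ref{def:tensor_otimes} gives $\sum_{j_0} \A_{j_0}^\top \P(x)_{j_0} = \sum_{j_0,j,l} (A_1)_{j_0,*} \otimes (A_2)_{j,*} \otimes (A_3)_{l,*} \cdot \P(x)_{j_0, (j-1)n+l}$, which is precisely $\vect(A_1^\top \P(x)(A_2 \otimes A_3))$ — this is the tensorized form of the identity $\sum_{i,j,k} (A_1)_{i,*}\otimes(A_2)_{j,*}\otimes(A_3)_{k,*} M_{i,(j-1)n+k} = \vect(A_1^\top M (A_2\otimes A_3))$ for $M \in \R^{n \times n^2}$, provable via Fact~\ref{fac:tensor_trick} (tensor-trick) applied appropriately, or via Claim~\ref{cla:tensor_mapping} and Fact~\ref{fac:vect_ab}.

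I expect the main obstacle to be the bookkeeping in the last step: correctly matching the flattening conventions for $\A_{j_0}$ (rows indexed by pairs in $[n]\times[n]$, columns by triples in $[d]\times[d]\times[d]$, per Definitions~\ref{def:u} and~\ref{def:tensor_otimes}) with the $\vect(\cdot)$ convention of Figure~\ref{fig:vec_visual} and the factor $(A_2 \otimes A_3)$, so that the triple Kronecker structure of $\A$ correctly splits into a left multiplication by $A_1^\top$ and a right multiplication by $(A_2 \otimes A_3)$. Everything else is routine linear algebra: the collapse of the $i_0$-sum into $\Q$, and then into $\P$, is immediate from the definitions, and the per-coordinate stacking into $\A_{j_0}^\top \P(x)_{j_0}$ is just the definition of matrix-vector multiplication. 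The $1/d$ normalization inside $\F$ is dropped as noted in Section~\ref{app:sub:app_gradient}, so it plays no role here.
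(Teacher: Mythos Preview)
Your proposal is correct and follows essentially the same approach as the paper's proof: start from Lemma~\ref{lem:gradient_x} Part 8, rewrite the inner products in bilinear form, collapse the $i_0$-sum into $\Q(x)_{j_0}$ and then $\P(x)_{j_0}$, stack over $i$ to get $\A_{j_0}^\top \P(x)_{j_0}$, and finish via the tensor-trick. The paper's final step is slightly cleaner than your index computation---it writes $\sum_{j_0}\A_{j_0}^\top \P(x)_{j_0} = \A^\top \vect(\P(x))$ directly as block matrix multiplication and then applies Fact~\ref{fac:tensor_trick} once (with $\A = A_1 \otimes (A_2 \otimes A_3)$)---but the content is identical.
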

\begin{proof}

From the Lemma statement and Lemma~\ref{lem:gradient_x} Part 8, we have
\begin{align}\label{eq:lxy_j0_i0}
    \frac{\d \Loss(x,y)_{j_0,i_0}}{\d x_i} 
    =  \C(x,y)_{j_0,i_0} \cdot (\langle  \F(x)_{j_0} \circ \A_{j_0,i}, \H(y)_{i_0} \rangle - \langle  \F(x)_{j_0} , \H(y)_{i_0} \rangle \cdot \langle \F(x)_{j_0}, \A_{j_0,i} \rangle)
\end{align}

We know that for all $a,b \in \R^{n}$, we have $\mathrm{diag} (a) \cdot b = \mathrm{diag} (b) \cdot a = a \circ  b = b \circ a $.
Then, we have
\begin{align*}
    \langle  \F(x)_{j_0} \circ \A_{j_0,i}, \H(y)_{i_0} \rangle =  (\diag(\F(x)_{j_0})\A_{j_0,i})^\top \H(y)_{i_0} =\A_{j_0,i}^\top \diag(\F(x)_{j_0}) \H(y)_{i_0}
\end{align*}
and 
\begin{align*}
    \langle  \F(x)_{j_0} , \H(y)_{i_0} \rangle \cdot \langle \F(x)_{j_0}, \A_{j_0,i} \rangle
    = \A_{j_0,i}^\top \F(x)_{j_0} \F(x)_{j_0}^\top \H(y)_{i_0}
\end{align*}

Therefore, Eq.~\eqref{eq:lxy_j0_i0} becomes
\begin{align}\label{eq:rewrite_single_loss_gradient}
    \frac{\d \Loss(x)_{j_0,i_0}}{\d x_i} 
    = & ~ \C(x,y)_{j_0,i_0} \cdot (\A_{j_0,i}^\top \diag(\F(x)_{j_0}) \H(y)_{i_0} - \A_{j_0,i}^\top \F(x)_{j_0} \F(x)_{j_0}^\top \H(y)_{i_0}) \notag \\
    = & ~ \C(x,y)_{j_0,i_0} \cdot \A_{j_0,i}^\top ( \diag(\F(x)_{j_0}) - \F(x)_{j_0} \F(x)_{j_0}^\top)\H(y)_{i_0},
\end{align}
where the second step is due to basic algebra.

Note that we defined $\Q(x)_{j_0}$ in Definition~\ref{def:q}. 
\begin{align}\label{eq:q_xy_j0}
\Q(x)_{j_0} := \sum_{i_0=1}^d \C(x)_{j_0,i_0} \H(y)_{i_0}.
\end{align}

Also, we defined $\P(x)_{j_0} \in \R^{n^2}$ in Definition~\ref{def:p}, 
\begin{align}\label{eq:p_xy_j0}
\P(x)_{j_0} := ( \diag( \F(x)_{j_0} ) - \F(x)_{j_0} \F(x)_{j_0}^\top) \Q(x)_{j_0}.
\end{align}

We can show
\begin{align*}
    & ~ \frac{\d \Loss(x)}{\d x} \\
    = & ~ \sum_{j_0=1}^n \sum_{i_0=1}^d \frac{\d \Loss(x)_{j_0,i_0} }{ \d x } \\
    = & ~ \sum_{j_0=1}^n \sum_{i_0=1}^d \underbrace{ \C(x)_{j_0,i_0} }_{ \mathrm{scalar} } \cdot \underbrace{ \A_{j_0}^\top }_{d^3 \times n^2} \underbrace{ ( \diag( \F(x)_{j_0} ) - \F(x)_{j_0} \F(x)_{j_0}^\top ) }_{n^2 \times n^2} \underbrace{ \H(y)_{i_0} }_{n^2 \times 1}  \\
    = & ~ \sum_{j_0=1}^n \A_{j_0}^\top ( \diag( \F(x)_{j_0} ) - \F(x)_{j_0} \F(x)_{j_0}^\top) \Q(x)_{j_0} \\
    = & ~ \sum_{j_0=1}^n \A_{j_0}^\top \P(x)_{j_0} \\
    = & ~ \A^\top \vect(\P(x))\\
    = & ~ \vect( A_1^\top \P(x) (A_2 \otimes A_3) ) \in \R^{d^3}
\end{align*}
where the first step comes from Definition~\ref{def:attention_optimization_loss}, the second step is due to Eq.~\eqref{eq:rewrite_single_loss_gradient}, the third step is because of Eq.~\eqref{eq:q_xy_j0}, the fourth step is due to Eq.~\eqref{eq:p_xy_j0}, the fifth step utilize the notation of $\vect(\cdot)$, and the last step follows from Fact~\ref{fac:tensor_trick}.

 \end{proof}

\subsection{Putting all together}\label{sec:compute_together}

We now show the overall running time of computing the gradient.
\begin{theorem}[Tensor attention gradient computation, formal version of Theorem~\ref{thm:gradient:informal}
]\label{thm:gradient:formal}

If we have the following conditions
\begin{itemize}
    \item Suppose that we have input fixed matrices $A_1, A_2, A_3, A_4, A_5,  E \in \R^{n \times d}$.
    \item We denote $X \in \R^{d \times d^2}$ and $Y \in \R^{d^2 \times d}$ as matrix variables (gradient is computed w.r.t. $X$ )
    \begin{itemize}
        \item For simplicity of calculation, we utilize vector variables $x \in \R^{d^3 \times 1}$ and $y \in \R^{d^3 \times 1}$, i.e., $\vect(X) = x$.
        \item For simplicity of calculation, we use tensor variables $\X \in \R^{d \times d \times d}$ and $\Y \in \R^{d \times d \times d}$
    \end{itemize}
    \item Let $g = \frac{\d \Loss(X)}{\d X} \in \R^{d \times d^2}$ (see $\Loss(X)$ in Definition~\ref{def:attention_optimization_loss})
\end{itemize}
Then it's plain to see that  we can compute gradient $g \in \R^{d \times d^2}$ in $\Tmat(n,d^2,n^2) $ time.
\end{theorem}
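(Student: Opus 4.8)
The plan is to reduce everything to the closed-form gradient of Lemma~\ref{lem:compute_gradient} and then add up the running times of the intermediate quantities. First I would invoke Lemma~\ref{lem:compute_gradient} (whose proof only uses the per-coordinate derivatives already computed in Lemma~\ref{lem:gradient_x}) to write
\begin{align*}
g = \frac{\d \Loss(x)}{\d x} = \vect\left( A_1^\top \P(x) (A_2 \otimes A_3) \right) \in \R^{d^3},
\end{align*}
so that computing $g$ amounts to (i) assembling the matrix $\P(x) \in \R^{n \times n^2}$ from its defining ingredients $\F(x)$, $\H(y)$, $\C(x)$, $\Q(x)$, and then (ii) carrying out the single matrix-product chain $A_1^\top \P(x) (A_2 \otimes A_3)$ followed by a reshaping into $\R^{d \times d^2}$.

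For step (i), I would simply quote the bounds already established: by Lemma~\ref{lem:compute_f_h}, forming $\F(x)$ costs $\Tmat(n, d^2, n^2) + \Tmat(n, d, d^2)$ and forming $\H(y)$ costs $\Tmat(n^2, d^2, d)$; by Lemma~\ref{lem:compute_c}, $\C(x)$ costs $O(\Tmat(n, n^2, d))$; by Lemma~\ref{lem:compute_q}, $\Q(x)$ costs $O(\Tmat(n, d, n^2))$; and by Lemma~\ref{lem:compute_p}, $\P(x)$ costs $O(n^3)$ once $\F(x)$ and $\Q(x)$ are in hand. For step (ii), $A_1^\top \P(x)$ multiplies a $d \times n$ matrix by an $n \times n^2$ matrix at cost $\Tmat(d, n, n^2)$; explicitly forming $A_2 \otimes A_3 \in \R^{n^2 \times d^2}$ costs $O(n^2 d^2)$; and multiplying the resulting $d \times n^2$ matrix by $A_2 \otimes A_3$ costs $\Tmat(d, n^2, d^2)$, after which the reshaping is free.

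Finally I would argue that the sum of all these terms is $O(\Tmat(n, d^2, n^2))$. This rests on two elementary facts about $\Tmat$: monotonicity in each argument together with the permutation-invariance $\Tmat(a,b,c) = \Theta(\Tmat(\sigma(a),\sigma(b),\sigma(c)))$ over reorderings, and the inequality $\Tmat(n, d^2, n^2) \ge \Omega(n^3)$. Since $d \le n$ (indeed $d = O(\log n)$) we have $d^2 \le n^2$, so each of $\Tmat(n, d, d^2)$, $\Tmat(n^2, d^2, d)$, $\Tmat(n, n^2, d)$, $\Tmat(n, d, n^2)$, $\Tmat(d, n, n^2)$, and $\Tmat(d, n^2, d^2)$ is bounded by $\Tmat(n, d^2, n^2)$ up to constants; and the $O(n^3)$ and $O(n^2 d^2)$ terms are subsumed by $\Omega(n^3) \le \Tmat(n, d^2, n^2)$. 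Adding up, the total running time is $\Tmat(n, d^2, n^2)$, as claimed.

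There is no genuine obstacle here: the statement is a bookkeeping corollary of Lemma~\ref{lem:compute_gradient} together with the component-wise time bounds. The only point that needs a moment of care is the last step, where one must verify that the naive $O(n^3)$ cubic bounds used for $D$, $D^{-1}$, and $\P(x)$ do not dominate — which is exactly what $\Tmat(n, d^2, n^2) \ge \Omega(n^3)$ guarantees — and that all the off-order matrix multiplications collapse to $\Tmat(n, d^2, n^2)$ via monotonicity and permutation-invariance.
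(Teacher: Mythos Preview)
Your proposal is correct and follows essentially the same approach as the paper: invoke Lemma~\ref{lem:compute_gradient} for the closed form, then cite Lemmas~\ref{lem:compute_f_h}, \ref{lem:compute_c}, \ref{lem:compute_q}, \ref{lem:compute_p} for the component costs, evaluate the final product chain $A_1^\top \P(x)(A_2\otimes A_3)$, and aggregate. The only cosmetic differences are that the paper multiplies in the other order (first $\P(x)(A_2\otimes A_3)$, then $A_1^\top$) and states the final bound as $O(\Tmat(n,d^2,n^2)+\Tmat(n,d,d^2))$ without explicitly absorbing the second term; your monotonicity/permutation argument is a slightly more careful version of the same bookkeeping.
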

\begin{proof}

Step 1. We compute $\F(x)$ and $\H(y)$. According to Lemma~\ref{lem:compute_f_h}, this takes $O(\Tmat(n,d^2, n^2) + \Tmat(n,d,d^2))$ time.

Step 2. We compute $\C(x)$. According to Lemma~\ref{lem:compute_c}, this takes $O(\Tmat(n,n^2,d))$ time.

Step 3. We compute $\Q(x)$. According to Lemma~\ref{lem:compute_q}, this takes $O(\Tmat(n,d,n^2))$ time.

Step 4. We compute $\P(x)$. According to Lemma~\ref{lem:compute_p}, this takes $O(n^3)$ time.

Step 5. From Lemma~\ref{lem:compute_gradient}, the gradient is give by $\vect(A_1^\top \P(x) ( A_2 \otimes A_3 ))$. We know that $A_1^\top \in \mathbb{R}^{d \times n}$, $\P(x) \in \mathbb{R}^{n \times n^2}$, and $A_2 \otimes A_3 \in \mathbb{R}^{n^2 \times d^2}$, it can be calculated in $O(\Tmat(d,n,d^2) + \Tmat(n,n^2,d^2))$ time.

Thus, the overall running time complexity for computing the gradient is $O(\Tmat(n,d^2,n^2) + \Tmat(n,d,d^2))$.
\end{proof}
\section{Running Acceleration via Polynomial Method}\label{app:app_fast_time}

Remember that in the preceding section, for simplicity in the computations of the gradient, we didn't consider the $d$ factor in $\F$. This factor does not affect the time complexity in our algorithms as it merely acts as a rescaling factor. We will now retake the $1/d$ in $\F$ factor into consideration to utilize the tools from previous work \cite{as23}. 

In Section~\ref{sec:low_rank_f},  we demonstrate how to create a low-rank representation for $\F$ efficiently and explicitly. 
In Section~\ref{sec:low_rank_c}, we show how to make a low-rank construction for $\C(x)$. 
In Sections \ref{sec:low_rank_q}, \ref{sec:low_rank_p1}, and \ref{sec:low_rank_p2}, we present low-rank representations for $\Q(x)$, $\P_a(x)$, and $\P_b(x)$, respectively. 
Finally, in Section~\ref{sec:low_rank_final}, we will consolidate all these elements to prove our final algorithmic result.

\subsection{Fast computation of \texorpdfstring{$\F$}{} }\label{sec:low_rank_f}

Using the polynomial method results in~\cite{as23,as24_iclr}, we have the following low-rank representation results.

\begin{lemma}
\label{lem:low_rank_f}
For any $B = o(\sqrt[3]{\log n})$, we have $k_1 = n^{o(1)}$ such that: 
Let $A_1, A_2, A_3 \in \R^{n \times d}$, $X_1, X_2, X_3 \in \R^{d\times d}$ and $X = X_1 \cdot (X_2^\top \ominus X_3^\top) \in \R^{d \times d^2}$. Assume that each number in $\F (x)$ can be written using $O(\log n)$ bits.
It holds that $\max \{ \| A_1 X_1 \|_{\infty}, \| A_2 X_2 \|_{\infty},  \| A_3 X_3 \|_{\infty} \} \leq B$, then there are three matrices $U_1, V_1,  W_1\in \R^{n \times k_1}$ such that $\| U_1 ( V_1 \oslash W_1)^\top - \F(x) \|_{\infty} \leq \epsilon/\poly(n)$. Here $\F(x) = D^{-1} \exp(A_1 X ( A_2 \otimes A_3 )^\top /d) \in \R^{n \times n^2}$ and we define $D = \diag( \exp(A_1 X (A_2 \otimes A_3)^\top /d) {\bf 1}_{n^2} )$.  
Moreover, these matrices $U_1, V_1, W_1$ can be created explicitly in $n^{1+o(1)}$ time.
\end{lemma}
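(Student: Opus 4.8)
The plan is to reduce $\F(x)$ to the forward tensor attention form of Definition~\ref{def:tensor_att} and then apply the polynomial approximation machinery of \cite{as23} (as already adapted to the tensor setting in \cite{as24_iclr}). The first step is a purely algebraic identity. Set $Q:=A_1 X_1$, $K_1:=A_2 X_2$ and $K_2:=A_3 X_3 \in\R^{n\times d}$; I claim
\begin{align*}
A_1 X (A_2\otimes A_3)^\top = Q (K_1\oslash K_2)^\top .
\end{align*}
To see this, expand $X = X_1 (X_2^\top\ominus X_3^\top)$, use the transpose rule $(A_2\otimes A_3)^\top = A_2^\top\otimes A_3^\top$ from Fact~\ref{fac:tensor_oslash_ominus_transpose}, and observe that the transpose of Fact~\ref{fac:cdot_oslash_swap} (again via the transpose rules) gives $(X_2^\top\ominus X_3^\top)(A_2^\top\otimes A_3^\top) = \big((A_2 X_2)\oslash(A_3 X_3)\big)^\top$; alternatively one checks it directly by expanding both sides entrywise, obtaining $\sum_{r\in[d]}(A_1 X_1)_{i,r}(A_2 X_2)_{a,r}(A_3 X_3)_{b,r}$ for the $(i,\,a+(b-1)n)$ entry of either side. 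Hence $\F(x) = D^{-1}\exp(Q(K_1\oslash K_2)^\top/d)$ with $D=\diag(\exp(Q(K_1\oslash K_2)^\top/d){\bf 1}_{n^2})$, exactly the matrix of Definition~\ref{def:tensor_att}, and the hypothesis $\max\{\|A_1X_1\|_\infty,\|A_2X_2\|_\infty,\|A_3X_3\|_\infty\}\le B$ says precisely $\|Q\|_\infty,\|K_1\|_\infty,\|K_2\|_\infty\le B$.

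Second, I bound the argument: each entry of $Q(K_1\oslash K_2)^\top/d$ equals $\tfrac1d\sum_{r=1}^d Q_{i,r}(K_1)_{a,r}(K_2)_{b,r}$, of absolute value at most $B^3 = o(\log n)$ since $B=o(\sqrt[3]{\log n})$. By \cite{as23} there is a polynomial $p$ of degree $g = o(\log n)$ with $|p(t)-\exp(t)|\le \epsilon/\poly(n)$ for all $|t|\le B^3$ (the $o(\log n)$ entry bound, which is exactly why the cube-root rather than square-root threshold appears, is what forces $g=o(\log n)$). Applying $p$ entrywise to $Q(K_1\oslash K_2)^\top/d$ and expanding each power $\big(\sum_r Q_{i,r}(K_1)_{a,r}(K_2)_{b,r}\big)^t$ over its monomials factors the result: the monomial indexed by a degree-$\le g$ multiset of $[d]$ contributes a product of a function of $i$ (from the $Q$ factors), of $a$ (from $K_1$) and of $b$ (from $K_2$), so the entrywise polynomial equals $\wt U_1 (\wt V_1\oslash \wt W_1)^\top$ with $\wt U_1,\wt V_1,\wt W_1\in\R^{n\times k_1}$ and $k_1 = \binom{d+g}{d} = n^{o(1)}$. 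Each of these three matrices is written down explicitly in $n\cdot k_1\cdot n^{o(1)} = n^{1+o(1)}$ time, and the resulting $\ell_\infty$ error against $\exp(Q(K_1\oslash K_2)^\top/d)$ is $\epsilon/\poly(n)$.

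Finally I fold in the normalization. Setting $v\in\R^{k_1}$ by $v_j := (\wt V_1^\top {\bf 1}_n)_j\cdot(\wt W_1^\top {\bf 1}_n)_j$, one has $(\wt V_1\oslash \wt W_1)^\top {\bf 1}_{n^2} = v$, so $\wt U_1 (\wt V_1\oslash \wt W_1)^\top {\bf 1}_{n^2} = \wt U_1 v$ approximates $\exp(Q(K_1\oslash K_2)^\top/d){\bf 1}_{n^2}$; put $\wt D := \diag(\wt U_1 v)$, computable in $O(nk_1)$ time. Since every row sum of the true $\exp$ matrix is at least $n^2 e^{-B^3}$, $\wt D$ is invertible and well conditioned, so replacing $D^{-1}$ by $\wt D^{-1}$ and absorbing $\wt D^{-1}$ into $\wt U_1$ — using that the entries of $\F(x)$ fit in $O(\log n)$ bits to keep all magnitudes polynomially bounded — yields the claimed $U_1(V_1\oslash W_1)^\top$ with $U_1,V_1,W_1\in\R^{n\times k_1}$. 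The step I expect to require the most care is not any single manipulation but the global error accounting: one must choose the internal accuracy (the denominator hidden in $\epsilon/\poly(n)$) large enough that, after multiplying the $\exp$-approximation by $\wt D^{-1}$ and comparing with $D^{-1}\exp(\cdot)$, the final $\ell_\infty$ error is still $\le \epsilon/\poly(n)$, while simultaneously keeping $g$, hence $k_1=\binom{d+g}{d}$, at $n^{o(1)}$ — and $B=o(\sqrt[3]{\log n})$ is precisely the condition making both demands compatible.
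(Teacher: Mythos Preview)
Your proposal is correct and follows essentially the same approach as the paper: both reduce $A_1 X (A_2\otimes A_3)^\top$ to $Q(K_1\oslash K_2)^\top$ via the transpose and swap rules (Facts~\ref{fac:tensor_oslash_ominus_transpose} and~\ref{fac:cdot_oslash_swap}), then invoke the polynomial approximation machinery of \cite{as23,as24_iclr}. The only difference is that the paper cites Lemma~\ref{lem:forward_approx} as a black box for the second step, whereas you unpack its construction (the degree-$g$ polynomial, the $\binom{d+g}{d}$-dimensional factorization, and the normalization via $\wt D$) explicitly --- this makes your write-up more self-contained but is not a different route.
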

\begin{proof}

We have
\begin{align*}
    (X_2^\top \ominus X_3^\top) \cdot ( A_2 \otimes A_3 )^\top
    = & ~ (( A_2 \otimes A_3 )\cdot (X_2^\top \ominus X_3^\top)^\top)^\top \\
    = & ~ (( A_2 \otimes A_3 ) \cdot (X_2 \oslash X_3 ))^\top \\
    = & ~ (( A_2 \cdot X_2 ) \oslash ( A_3 \cdot X_3 ))^\top,
\end{align*}
where the first step is due to simple algebra, the second step comes from Fact~\ref{fac:tensor_oslash_ominus_transpose}, and the last step follows Fact~\ref{fac:cdot_oslash_swap}. 

Thus, we can rewrite $\F(x) = D^{-1} \exp(Q (K_1 \oslash K_2 )^\top /d) \in \R^{n \times n^2}$ and we define $D = \diag( \exp(Q (K_1 \oslash K_2 )^\top /d) {\bf 1}_{n^2} )$, where $Q = A_1 X_1, K_1 = A_2 X_2, K_2 = A_3 X_3$. 

More explicitly, we have
\begin{align*}
    Q (K_1 \oslash K_2 )^\top = & ~ A_1 X_1 (A_2 X_2 \oslash A_3 X_3 )^\top\\
    = & ~ A_1 X_1 (X_2^\top \ominus X_3^\top) \cdot ( A_2 \otimes A_3 )^\top \\
    = & ~ A_1 X ( A_2 \otimes A_3 )^\top,
\end{align*}
where the 1st step is due to $Q = A_1 X_1, K_1 = A_2 X_2, K_2 = A_3 X_3$, the 2nd step is because of the identity in the beginning of the proof, and the 3rd step follows from $X = X_1 (X_2^\top \ominus X_3^\top)$.

Thus, we finish the proof by applying Lemma~\ref{lem:forward_approx}. 
\end{proof}

\subsection{Fast computation of \texorpdfstring{$\C$}{}}\label{sec:low_rank_c}
We will explain how to obtain the low rank representation of $\C(x)$.
\begin{lemma}\label{lem:low_rank_c}
We assume conditions the same as Lemma~\ref{lem:low_rank_f}. Let $d = O(\log n)$ and $k_1 = n^{o(1)}$. We also assume that we can write each number in $E\in \R^{n \times d}$ and $\H(y) \in \R^{n^2 \times d}$ using $O(\log n)$ bits. Let $\C(x) \in \R^{n \times d}$ (see Definition~\ref{def:c}). Then, there are three matrices $U_1, V_1,  W_1 \in \R^{n \times k_1}$ we have $\| U_1 ( V_1 \oslash W_1) ^\top \H(y) - E - \C(x) \|_{\infty} \leq \epsilon / \poly(n)$, where $V_1 \oslash W_1 \in \R^{n^2 \times k_1}$. 
Moreover, we can construct these matrices $U_1, V_1, W_1$ in $n^{1+o(1)}$ time.
\end{lemma}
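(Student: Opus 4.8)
The plan is to obtain this as a direct corollary of Lemma~\ref{lem:low_rank_f}, using only the identity $\C(x) = \F(x)\H(y) - E$ from Definition~\ref{def:c}. First I would invoke Lemma~\ref{lem:low_rank_f}: under the stated hypotheses there exist $U_1, V_1, W_1 \in \R^{n\times k_1}$ with $k_1 = n^{o(1)}$ and $\|U_1(V_1\oslash W_1)^\top - \F(x)\|_{\infty} \le \epsilon/\poly(n)$, and these three matrices are constructed explicitly in $n^{1+o(1)}$ time. Substituting $\C(x) = \F(x)\H(y) - E$ shows
\begin{align*}
U_1(V_1\oslash W_1)^\top \H(y) - E - \C(x) = \big(U_1(V_1\oslash W_1)^\top - \F(x)\big)\H(y),
\end{align*}
so the task reduces to controlling how the $\ell_\infty$-close low-rank factorization of $\F(x)$ propagates through right-multiplication by $\H(y)$.

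Next I would bound this error entrywise. Writing $\Delta := U_1(V_1\oslash W_1)^\top - \F(x) \in \R^{n\times n^2}$, for every $j_0\in[n]$ and $i_0\in[d]$ we have $(\Delta\H(y))_{j_0,i_0} = \sum_{\ell=1}^{n^2}\Delta_{j_0,\ell}\,\H(y)_{\ell,i_0}$, hence $|(\Delta\H(y))_{j_0,i_0}| \le n^2\,\|\Delta\|_{\infty}\,\|\H(y)\|_{\infty}$. By hypothesis every entry of $\H(y)$ is stored in $O(\log n)$ bits, so $\|\H(y)\|_{\infty} \le \poly(n)$; combined with $\|\Delta\|_{\infty} \le \epsilon/\poly(n)$ this gives $\|\Delta\H(y)\|_{\infty} \le \epsilon\cdot n^2\cdot\poly(n)/\poly(n)$. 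Since Lemma~\ref{lem:low_rank_f} furnishes the factorization for any target accuracy of the form $\epsilon/\poly(n)$, I would choose the polynomial in its denominator large enough to dominate the extra $n^2\cdot\poly(n)$ factor, yielding $\|U_1(V_1\oslash W_1)^\top \H(y) - E - \C(x)\|_{\infty} \le \epsilon/\poly(n)$ as required.

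Finally, for the running time: the lemma only asserts the existence and explicit constructibility of $U_1, V_1, W_1$ (not of the full product $U_1(V_1\oslash W_1)^\top\H(y)$, which would cost $\Omega(n^3)$), and these are exactly the matrices produced by Lemma~\ref{lem:low_rank_f} in $n^{1+o(1)}$ time, so no further computation is needed. I do not anticipate a genuine obstacle here — the only point requiring care is the bookkeeping in the previous paragraph, namely checking that the width-$n^2$ summation and the $\poly(n)$ magnitude of $\H(y)$ can both be absorbed into the adjustable $1/\poly(n)$ accuracy of Lemma~\ref{lem:low_rank_f}; everything else is immediate from $\C(x) = \F(x)\H(y) - E$ and the triangle inequality.
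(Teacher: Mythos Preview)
Your proposal is correct and follows essentially the same approach as the paper: invoke Lemma~\ref{lem:low_rank_f} for $U_1,V_1,W_1$, substitute $\C(x)=\F(x)\H(y)-E$ to reduce the error to $\|(U_1(V_1\oslash W_1)^\top-\F(x))\H(y)\|_\infty$, and absorb the $n^2\cdot\|\H(y)\|_\infty$ blowup into the $1/\poly(n)$ slack. If anything, you are slightly more explicit than the paper about the width-$n^2$ summation and the adjustability of the polynomial in the denominator.
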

\begin{proof}

Let $U_1, V_1, W_1$ be the matrices in Lemma~\ref{lem:low_rank_f}. We can show that
\begin{align*}
    \| U_1 (V_1 \oslash W_1) ^\top \H(y) - E - \C(x) \|_{\infty}
    = & ~ \| U_1 ( V_1 \oslash W_1) ^\top \H(y) - E - \F(x) \H(y) + E \|_{\infty} \\
    = & ~ \| ( U_1 ( V_1 \oslash W_1 )^\top -\F(x) ) \H(y) \|_{\infty} \\
    \leq & ~ \epsilon /\poly(n)
\end{align*}
where the 1st step is due to $\C(x) = \F(x) \H(y) - E$, the 2nd step comes from basic algebra, and 3rd step is due to Lemma~\ref{lem:low_rank_f} and each number in $\H(y) \in \R^{n^2 \times d}$ can be written using $O(\log n)$.

\end{proof}

\subsection{Fast computation of \texorpdfstring{$\Q$}{}}\label{sec:low_rank_q}
We will explain how to obtain the low rank representation of $\Q(x)$.
\begin{lemma}\label{lem:low_rank_q}
Assume the same condition as Lemma~\ref{lem:low_rank_c}. Let $k_2 = n^{o(1)}$.
We define $\C(x) \in \R^{n \times d}$ (see Definition~\ref{def:c}). We define $\H(y) \in \R^{n^2 \times d}$ (see Definition~\ref{def:h}). 
Let $\Q(x) := \C(x) \H(y)^\top \in \R^{n \times n^2}$ be defined in Definition~\ref{def:q}. There are three matrices $U_2, V_2, W_2 \in \R^{n \times k_2}$ such that $\| U_2 ( V_2 \oslash W_2)^\top - \Q(x) \|_{\infty} \leq \epsilon / \poly(n)$. We can construct the matrices $U_2, V_2, W_2$ in $n^{1+o(1)}$ time.
\end{lemma}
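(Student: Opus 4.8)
The plan is to assemble the low-rank factors of $\Q(x)$ directly from the low-rank factors of $\F(x)$ supplied by Lemma~\ref{lem:low_rank_c}, combined with the naturally factored form of $\H(y)$. First I would rewrite $\H(y) = (A_4\otimes A_5)(Y_1\oslash Y_2)$ and apply the swap rule Fact~\ref{fac:cdot_oslash_swap} to obtain $\H(y) = (A_4 Y_1)\oslash(A_5 Y_2)$. Setting $V_2 := A_4 Y_1$ and $W_2 := A_5 Y_2$, both in $\R^{n\times d}$ and computable in $O(nd^2)$ time, we have $\H(y) = V_2\oslash W_2$.

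Next, let $U_1, V_1, W_1\in\R^{n\times k_1}$ with $k_1 = n^{o(1)}$ be the matrices from Lemma~\ref{lem:low_rank_c}, so that $U_2 := U_1(V_1\oslash W_1)^\top \H(y) - E \in \R^{n\times d}$ satisfies $\|U_2 - \C(x)\|_\infty \le \epsilon/\poly(n)$. The point requiring care is computing $U_2$ in almost linear time: forming $(V_1\oslash W_1)^\top\in\R^{k_1\times n^2}$ and multiplying by $\H(y)\in\R^{n^2\times d}$ naively costs $\Tmat(k_1,n^2,d) = n^{2+o(1)}$. Instead, since both $V_1\oslash W_1$ and $\H(y) = V_2\oslash W_2$ are column-wise Kronecker products, Lemma~\ref{lem:fast_tensor_oslash_computation} gives $(V_1\oslash W_1)^\top(V_2\oslash W_2) = (V_1^\top V_2)\circ(W_1^\top W_2)\in\R^{k_1\times d}$ in $\Tmat(k_1,n,d) = n^{1+o(1)}$ time; then left-multiplying by $U_1$ costs another $\Tmat(n,k_1,d) = n^{1+o(1)}$, and subtracting $E$ is $O(nd)$. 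So $U_2$ is obtained in $n^{1+o(1)}$ time, and it is exactly the $\C(x)$-approximation used as the first factor in Algorithm~\ref{alg:main}.

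Finally I would set $k_2 := d = O(\log n) = n^{o(1)}$ and claim $\Q(x)\approx U_2(V_2\oslash W_2)^\top$. Indeed $U_2(V_2\oslash W_2)^\top = U_2\H(y)^\top$ while $\Q(x) = \C(x)\H(y)^\top$, so the entrywise error equals $\|(U_2-\C(x))\H(y)^\top\|_\infty$. Each entry of this product is an inner product of two length-$d$ vectors, one with entries bounded by $\epsilon/\poly(n)$ (from Lemma~\ref{lem:low_rank_c}) and one with entries bounded by $\poly(n)$ (using the $O(\log n)$-bit assumption on $\H(y)$), so each entry is at most $d\cdot\epsilon/\poly(n) = \epsilon/\poly(n)$, absorbing the factor $d$. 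The dominant obstacle is precisely the almost-linear-time evaluation of $U_2$, which hinges on recognizing both tensor factors as column-wise Kronecker products so that Lemma~\ref{lem:fast_tensor_oslash_computation} collapses the $n^2$ contraction to an $n$ contraction; once that is in place the accuracy bound is a routine error propagation, and the running time follows by summing the costs of the steps above.
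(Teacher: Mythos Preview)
Your proposal is correct and follows essentially the same approach as the paper: you set $V_2 = A_4 Y_1$, $W_2 = A_5 Y_2$ via Fact~\ref{fac:cdot_oslash_swap}, take $U_2 = U_1(V_1\oslash W_1)^\top\H(y) - E$ from Lemma~\ref{lem:low_rank_c}, and invoke Lemma~\ref{lem:fast_tensor_oslash_computation} to compute $U_2$ in $n^{1+o(1)}$ time before propagating the error through $\H(y)^\top$. In fact you spell out the fast computation of $U_2$ more explicitly than the paper, which only remarks that Lemma~\ref{lem:fast_tensor_oslash_computation} avoids the naive $\Omega(n^2)$ cost.
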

\begin{proof}

For $\Q(x)$, we define its approximation as $\wt{\Q}(x)$.

According to Lemma~\ref{lem:low_rank_c}, we find a good approximation $U_1 (V_1 \oslash W_1) ^\top \H(y) - E$ of $\C(x)$, where $k_1 = n^{o(1)}$ and $U_1,V_1,W_1 \in \R^{n\times k_1}$.

Now we turn $\wt{\Q}(x)$ into low-rank representation 
\begin{align*}
    \wt{\Q}(x) = & ~ \underbrace{ ( U_1 (V_1 \oslash W_1)^\top \H(y) - E  ) }_{n \times d} \underbrace{ \H(y)^\top }_{d \times n^2} \\
    = & ~ \underbrace{ ( U_1 (V_1 \oslash W_1)^\top \H(y) - E  ) }_{n \times d} \underbrace{ ((A_4 \otimes A_5) \cdot (Y_1 \oslash Y_2))^\top }_{d \times n^2}
    \\
    = & ~ \underbrace{ ( U_1 (V_1 \oslash W_1)^\top \H(y) - E  ) }_{n \times d} ( \underbrace{ (A_4 \cdot Y_1  )}_{n \times d} \oslash \underbrace{(A_5 \cdot Y_2) }_{n \times d})^\top,
\end{align*}
where the 1st step is because that $U_1 (V_1 \oslash W_1) ^\top \H(y) - E$ is a good approximation to $\C(x)$, the 2nd step comes from definition of $\H(y)$ (see Definition~\ref{def:h}), the last step is due to Fact~\ref{fac:cdot_oslash_swap}.

Thus, we let $U_2 =  U_1 (V_1 \oslash W_1)^\top \H(y) - E  $, $V_2 = A_4 \cdot Y_1 $ and $W_2 = A_5 \cdot Y_2 $, which only takes $n^{1+o(1)}$ time. (We remark that, if we use naive way to compute $U_2$ that it takes $\Omega(n^2)$, however using Lemma~\ref{lem:fast_tensor_oslash_computation} can beat $O(n^2)$ time.)
We can explicitly construct $U_2 , V_2, W_2 \in \R^{n \times k_2}$ where $k_2 \le \max\{d,k_1\} + d  = n^{o(1)}$. (Here the reason is $k_1 =n^{o(1)}$ and $d = n^{o(1)}$)

For controlling the error, we can show
\begin{align*}
   \|  \wt{\Q}(x) - \Q(x) \|_{\infty} 
   = & ~ \| (U_1 (V_1 \oslash W_1)^\top \H(y)  - E)\H(y)^\top - \C(x)\H(y)^\top \|_{\infty} \\
   \leq & ~ d \cdot \| \H(y) \|_{\infty} \cdot \| U_1 (V_1 \oslash W_1)^\top \H(y) - E - \C(x) \|_{\infty} \\
   \leq & ~ \epsilon / \poly(n),
\end{align*}
where the first step follows from the definition of $\Tilde{\Q}(x),\Q(x)$, the second step follows from $\| a b^\top \|_{\infty} \leq d \cdot \|a \|_{\infty}\cdot \|b \|_{\infty}$  for length $d$ vectors $a,b$, and the last step follows Lemma~\ref{lem:low_rank_c}.

Thus, we complete the proof.
\end{proof}

\subsection{Fast computation of \texorpdfstring{$\P_a$: key step}{}}\label{sec:low_rank_p1}

\begin{definition}\label{def:p_1}
Let $\F(x) \in \R^{n \times n^2}$ (see Definition~\ref{def:f}). 
Let $\Q(x) \in \R^{n \times n^2}$ (see Definition~\ref{def:q}).
Then, we define 
\begin{align*}
    \P_a(x) := \F(x) \circ \Q(x) \in \R^{n \times n^2}.
\end{align*}
\end{definition}

We will explain how to obtain the low-rank representation of $\P_a(x)$.

\begin{lemma}\label{lem:low_rank_p1}
Let $k_1 = n^{o(1)}$, $k_2 = n^{o(1)}$, $k_3 = n^{o(1)}$.
We assume $U_1, V_1, W_1 \in \R^{n \times k_1}$ approximates the $\F(x) \in \R^{n \times n^2}$ satisfying $\| U_1 ( V_1 \oslash W_1) ^\top -\F (x) \|_{\infty} \leq \epsilon/\poly(n)$. 
Let us assume that $U_2, V_2, W_2 \in \R^{n \times k_2}$ approximates the $\Q(x) \in \R^{n \times n^2}$ satisfying $\| U_2 (V_2 \oslash W_2 )^\top -\Q(x) \|_{\infty} \leq \epsilon/\poly(n)$. 
We assume that each number in $\F (x)$ and $\Q(x)$ can be written using $O(\log n)$ bits.
Let $\P_a(x) := \F(x) \circ \Q(x) \in \R^{n \times n^2}$ be defined in Definition~\ref{def:p_1}. 
Then there are matrices $U_3, V_3, W_3 \in \R^{n \times k_3}$ such that $\| U_3 (V_3 \oslash W_3)^\top - \P_a(x) \|_{\infty} \leq \epsilon / \poly(n)$. We can construct the matrices $U_3, V_3, W_3$ in $n^{1+o(1)}$ time.
\end{lemma}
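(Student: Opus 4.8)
The plan is to read off low-rank factors of $\P_a(x)$ directly from the given factors of $\F(x)$ and $\Q(x)$ by invoking the distribution rule for tensor and matrix products. Write $\wh\F := U_1(V_1 \oslash W_1)^\top$ and $\wh\Q := U_2(V_2 \oslash W_2)^\top$ for the two given approximations, so by hypothesis $\|\wh\F - \F(x)\|_\infty \le \epsilon/\poly(n)$ and $\|\wh\Q - \Q(x)\|_\infty \le \epsilon/\poly(n)$. I would set
\[
U_3 := U_1 \ominus U_2,\qquad V_3 := V_1 \ominus V_2,\qquad W_3 := W_1 \ominus W_2,
\]
all lying in $\R^{n \times k_1 k_2}$, and take $k_3 := k_1 k_2 = n^{o(1)} \cdot n^{o(1)} = n^{o(1)}$. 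Applying Fact~\ref{fac:ominus_oslash_circ} with the two triples $(U_1,V_1,W_1)$ and $(U_2,V_2,W_2)$ and $n_1 = n_2 = n_3 = n$ gives exactly $U_3(V_3 \oslash W_3)^\top = \wh\F \circ \wh\Q$, which is the candidate approximation of $\P_a(x) = \F(x)\circ\Q(x)$.

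Next I would control the error. Adding and subtracting $\wh\F\circ\Q(x)$ gives $\wh\F\circ\wh\Q - \F(x)\circ\Q(x) = \wh\F\circ(\wh\Q - \Q(x)) + (\wh\F - \F(x))\circ\Q(x)$, so taking entrywise absolute values,
\[
\|U_3(V_3\oslash W_3)^\top - \P_a(x)\|_\infty \le \|\wh\F\|_\infty\,\|\wh\Q - \Q(x)\|_\infty + \|\wh\F - \F(x)\|_\infty\,\|\Q(x)\|_\infty.
\]
Since $\F(x) = D^{-1}\exp(\cdot)$ is row-stochastic, $\|\F(x)\|_\infty \le 1$, hence $\|\wh\F\|_\infty \le 1 + \epsilon/\poly(n) \le 2$; and since every entry of $\Q(x)$ is representable in $O(\log n)$ bits, $\|\Q(x)\|_\infty \le \poly(n)$. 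Thus the error is at most $2\epsilon/\poly(n) + \epsilon\cdot\poly(n)/\poly(n) \le \epsilon/\poly(n)$, provided the polynomials hidden in the input guarantees (i.e.\ in Lemmas~\ref{lem:low_rank_f} and~\ref{lem:low_rank_q}) are taken large enough; this only inflates $k_1,k_2$ by $n^{o(1)}$ factors, so $k_3$ remains $n^{o(1)}$.

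For the running time, forming $U_3 = U_1\ominus U_2$ amounts to computing, for each of the $n$ rows, the row-wise Kronecker product of a length-$k_1$ vector with a length-$k_2$ vector, i.e.\ $O(n k_1 k_2) = n^{1+o(1)}$ time, and likewise for $V_3$ and $W_3$; so the whole construction runs in $n^{1+o(1)}$ time.

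I expect the only real subtlety to be the error-propagation step: the Hadamard product can multiply the tiny approximation error of $\F(x)$ by the possibly $\poly(n)$-large magnitude of $\Q(x)$, so one must use the freedom to run Lemmas~\ref{lem:low_rank_f} and~\ref{lem:low_rank_q} to accuracy $\epsilon/\poly(n)$ with a sufficiently large polynomial and check the resulting rank is still $n^{o(1)}$. Everything else is a direct application of Fact~\ref{fac:ominus_oslash_circ} together with rank- and time-bookkeeping.
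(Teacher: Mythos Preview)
Your proposal is correct and follows essentially the same approach as the paper: set $U_3 = U_1 \ominus U_2$, $V_3 = V_1 \ominus V_2$, $W_3 = W_1 \ominus W_2$, invoke Fact~\ref{fac:ominus_oslash_circ} to identify $U_3(V_3\oslash W_3)^\top$ with $\wh\F\circ\wh\Q$, then control the error by adding and subtracting the cross term and using the bounded-entries assumption. Your explicit use of the row-stochasticity of $\F(x)$ to bound $\|\wh\F\|_\infty$ is a slight refinement over the paper, which simply cites the $O(\log n)$-bit assumption for both factors, but the argument is otherwise identical.
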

\begin{proof}

If we choose $U_3 = U_1 \ominus U_2 \in \R^{n \times k_1 k_2}$ and $V_3 = V_1 \ominus V_2 \in \R^{n \times k_1 k_2}$, $W_3 = W_1 \ominus W_2 \in \R^{n \times k_1 k_2}$, this need $n^{1+o(1)}$ time to compute.

For further simplicity of proofs, we call $\wt{\F}(x) = U_1 ( V_1 \oslash W_1 )^\top$ and $\wt{\Q}(x) = U_2 ( V_2 \oslash W_2 )^\top$.

According to Lemma~\ref{lem:fast_tensor_oslash_computation}, we can show
\begin{align*}
\| U_3 (V_3 \oslash W_3)^\top - \P_a(x) \|_{\infty}
= & ~ \| U_3 ( V_3 \oslash W_3)^\top - \F(x) \circ \Q(x) \|_{\infty} \\
= & ~ \| (U_1 \ominus U_2)  ( (V_1 \ominus V_2 ) \oslash (W_1 \ominus W_2) )^\top - \F(x) \circ \Q(x) \|_{\infty} \\
= & ~ \| (U_1 ( V_1 \oslash W_1)^\top) \circ  ( U_2 (V_2 \oslash W_2)^\top ) -  \F(x) \circ \Q(x) \|_{\infty} \\
= & ~ \| \wt{\F}(x) \circ \wt{\Q}(x) - \F(x) \circ \Q(x) \|_{\infty} \\
= & ~ \| \wt{\F}(x) \circ \wt{\Q}(x) - \wt{\F}(x) \circ \Q(x)  + \wt{\F}(x) \circ \Q(x) - \F(x) \circ \Q(x) \|_{\infty} \\
\leq & ~ \| \wt{\F}(x) \circ \wt{\Q}(x) - \wt{\F}(x) \circ \Q(x)  \|_{\infty} + \| \wt{\F}(x) \circ \Q(x) - \F(x) \circ \Q(x) \|_{\infty} \\
\leq & ~ \epsilon/ \poly(n)
\end{align*}
where the first step is due to the definition of $\P_a(x)$, the second step is because of the definition of $U_3, V_3, W_3$, the third step is due to Fact~\ref{fac:ominus_oslash_circ}, the fourth step follows from the definition of $\wt{\F}(x)$ and $\wt{\Q}(x)$, the fifth step is because of basic algebra, the sixth step comes from triangle inequality, and the last step is because bounded entries (we can write each number in $\F (x)$ and $\Q(x)$ using $O(\log n)$ bits) and Lemma assumptions that $\| \wt{\F}(x) - \F(x) \|_{\infty} \leq \epsilon / \poly(n)$ and  $\| \wt{\Q}(x) - \Q(x) \|_{\infty} \leq \epsilon / \poly(n)$

\end{proof}

\subsection{Fast computation of \texorpdfstring{$\P_b$: key step}{}}\label{sec:low_rank_p2}

\begin{definition}\label{def:p_2}
Let $\F(x) \in \R^{n \times n^2}$ (see Definition~\ref{def:f}). 
Let $\Q(x) \in \R^{n \times n^2}$ (see Definition~\ref{def:q}).
Then, we define $\P_b(x) \in \R^{n \times n^2}$  whose $j_0$-th column
\begin{align*}
    \P_b(x)_{j_0} = \F(x)_{j_0} \F(x)_{j_0}^\top \Q(x)_{j_0},
\end{align*}
for each $j_0 \in [n]$. 
\end{definition}

We will explain how to obtain the low rank representation of $\P_b(x)$.

\begin{lemma}\label{lem:low_rank_p2}
Let $k_1 = n^{o(1)}$, $k_2 = n^{o(1)}$, $k_4 = n^{o(1)}$.
Let us assume that $U_1, V_1, W_1 \in \R^{n \times k_1}$ approximates the $\F(x)\in \R^{n \times n^2}$ satisfying $\| U_1 ( V_1 \oslash W_1)^\top -\F (x) \|_{\infty} \leq \epsilon/\poly(n)$. 
We assume $U_2, V_2, W_2 \in \R^{n \times k_2}$ approximates the $\Q(x) \in \R^{n \times n^2}$ satisfying $\| U_2 (V_2 \oslash W_2)^\top -\Q(x) \|_{\infty} \leq \epsilon/\poly(n)$.
Assume that we can write each number in $\F (x)$ and $\Q(x)$ using $O(\log n)$ bits.
Let us assume that $\P_b(x) \in \R^{n \times n^2}$  whose $j_0$-th column $\P_b(x)_{j_0} = \F(x)_{j_0} \F(x)_{j_0}^\top \Q(x)_{j_0}$ for each $j_0 \in [n]$ (see Definition~\ref{def:p_2}). 
Then there are matrices $U_4, V_4, W_4 \in \R^{n \times k_4}$ such that $\| U_4 (V_4 \oslash W_4) ^\top - \P_b(x) \|_{\infty} \leq \epsilon / \poly(n)$. We can construct the matrices $U_4, V_4, W_4$ in $n^{1+o(1)}$ time.
\end{lemma}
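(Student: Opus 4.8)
The plan is to exploit the fact that the rank-one factor $\F(x)_{j_0}\F(x)_{j_0}^\top$ in $\P_b(x)_{j_0}$, when applied to $\Q(x)_{j_0}$, collapses to a scalar. Concretely, since $\F(x)_{j_0},\Q(x)_{j_0}\in\R^{n^2}$, the quantity $\mathsf{R}(x)_{j_0}:=\F(x)_{j_0}^\top\Q(x)_{j_0}=\langle\F(x)_{j_0},\Q(x)_{j_0}\rangle$ is a scalar, so $\P_b(x)_{j_0}=\mathsf{R}(x)_{j_0}\cdot\F(x)_{j_0}$; equivalently, writing $\mathsf{R}(x)=(\mathsf{R}(x)_1,\dots,\mathsf{R}(x)_n)^\top\in\R^n$, we have $\P_b(x)=\diag(\mathsf{R}(x))\,\F(x)$. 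Therefore, once we have an approximation $\wt{\mathsf{R}}(x)$ of $\mathsf{R}(x)$, we will set $U_4:=\diag(\wt{\mathsf{R}}(x))\,U_1\in\R^{n\times k_1}$, $V_4:=V_1$, and $W_4:=W_1$, so that $U_4(V_4\oslash W_4)^\top=\diag(\wt{\mathsf{R}}(x))\,U_1(V_1\oslash W_1)^\top$ reuses the existing low-rank factors of $\F(x)$ from Lemma~\ref{lem:low_rank_f}, and $k_4=k_1=n^{o(1)}$ automatically.

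The main work is computing $\wt{\mathsf{R}}(x)$ in $n^{1+o(1)}$ time. Let $\wt{\F}(x):=U_1(V_1\oslash W_1)^\top$ and $\wt{\Q}(x):=U_2(V_2\oslash W_2)^\top$ be the given approximations, and define $\wt{\mathsf{R}}(x)_{j_0}:=\langle\wt{\F}(x)_{j_0},\wt{\Q}(x)_{j_0}\rangle=(U_1)_{j_0,*}\,(V_1\oslash W_1)^\top(V_2\oslash W_2)\,((U_2)_{j_0,*})^\top$. The dangerous term is the inner contraction $(V_1\oslash W_1)^\top(V_2\oslash W_2)\in\R^{k_1\times k_2}$: done naively this costs $\Tmat(k_1,n^2,k_2)\ge\Omega(n^2)$. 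But by Lemma~\ref{lem:fast_tensor_oslash_computation}, $(V_1\oslash W_1)^\top(V_2\oslash W_2)=(V_1^\top V_2)\circ(W_1^\top W_2)$, which we precompute in $\Tmat(k_1,n,k_2)+O(k_1k_2)=n^{1+o(1)}$ time. With this $k_1\times k_2$ matrix in hand, each $\wt{\mathsf{R}}(x)_{j_0}$ costs $O(k_1k_2)=n^{o(1)}$, so the whole vector costs $n^{1+o(1)}$, and forming $U_4=\diag(\wt{\mathsf{R}}(x))U_1$ costs a further $O(nk_1)=n^{1+o(1)}$; setting $V_4=V_1$, $W_4=W_1$ is free. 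This is exactly the inner loop of Algorithm~\ref{alg:main}.

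For the accuracy I would use a two-term triangle inequality:
\begin{align*}
\|U_4(V_4\oslash W_4)^\top-\P_b(x)\|_\infty
& = \|\diag(\wt{\mathsf{R}}(x))\wt{\F}(x)-\diag(\mathsf{R}(x))\F(x)\|_\infty\\
& \le \|\wt{\mathsf{R}}(x)-\mathsf{R}(x)\|_\infty\,\|\wt{\F}(x)\|_\infty+\|\mathsf{R}(x)\|_\infty\,\|\wt{\F}(x)-\F(x)\|_\infty,
\end{align*}
using $\|\diag(v)M\|_\infty\le\|v\|_\infty\|M\|_\infty$. Here $\|\wt{\F}(x)-\F(x)\|_\infty\le\epsilon/\poly(n)$ by hypothesis, and $\|\wt{\F}(x)\|_\infty$, $\|\mathsf{R}(x)\|_\infty$ are both $\poly(n)$ since every entry of $\F(x)$ and $\Q(x)$ is stored in $O(\log n)$ bits (hence bounded by $\poly(n)$) and $|\mathsf{R}(x)_{j_0}|\le n^2\|\F(x)_{j_0}\|_\infty\|\Q(x)_{j_0}\|_\infty$. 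Finally $\|\wt{\mathsf{R}}(x)-\mathsf{R}(x)\|_\infty\le\epsilon/\poly(n)$ follows by writing $\langle\wt{\F}(x)_{j_0},\wt{\Q}(x)_{j_0}\rangle-\langle\F(x)_{j_0},\Q(x)_{j_0}\rangle=\langle\wt{\F}(x)_{j_0}-\F(x)_{j_0},\wt{\Q}(x)_{j_0}\rangle+\langle\F(x)_{j_0},\wt{\Q}(x)_{j_0}-\Q(x)_{j_0}\rangle$ and bounding each inner product by $n^2$ times an $\epsilon/\poly(n)$ factor times a $\poly(n)$ factor. Absorbing the polynomial factors into the degree of the polynomial approximation underlying $U_1,V_1,W_1,U_2,V_2,W_2$ then yields $\|U_4(V_4\oslash W_4)^\top-\P_b(x)\|_\infty\le\epsilon/\poly(n)$.

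The one genuinely delicate step is the $n^{1+o(1)}$ computation of $\wt{\mathsf{R}}(x)$: one has to carry the low-rank structure through the rank-one outer product $\F(x)_{j_0}\F(x)_{j_0}^\top$ without ever materializing anything of size $n^2$. The resolution has two parts — first observing $\F(x)_{j_0}^\top\Q(x)_{j_0}$ is a scalar (so $\P_b(x)=\diag(\mathsf{R}(x))\F(x)$ inherits the factors of $\F(x)$), and then evaluating all $n$ of those scalars jointly via Lemma~\ref{lem:fast_tensor_oslash_computation}, which replaces the $n^2$-dimensional contraction $(V_1\oslash W_1)^\top(V_2\oslash W_2)$ by the $k_1\times k_2$ Hadamard product $(V_1^\top V_2)\circ(W_1^\top W_2)$. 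Everything else is routine norm bookkeeping analogous to Lemma~\ref{lem:low_rank_p1}.
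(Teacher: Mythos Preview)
Your proposal is correct and follows essentially the same approach as the paper: the scalar-collapse observation $\P_b(x)=\diag(\mathsf{R}(x))\F(x)$, the choice $U_4=\diag(\wt{\mathsf{R}}(x))U_1$, $V_4=V_1$, $W_4=W_1$, the fast evaluation of $\wt{\mathsf{R}}(x)$ via the identity $(V_1\oslash W_1)^\top(V_2\oslash W_2)=(V_1^\top V_2)\circ(W_1^\top W_2)$ from Lemma~\ref{lem:fast_tensor_oslash_computation}, and the two-term triangle-inequality error bound all match the paper's proof.
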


\begin{proof}

For further simplicity of proofs, we define $\mathsf{R}(x) \in \R^{n}$ to be a local vector function where $\mathsf{R}(x)_{j_0}$ is $\langle \F(x)_{j_0}, \Q(x)_{j_0} \rangle$. We denote the approximation of $\mathsf{R}(x)$ to be $\wt{\mathsf{R}}(x)$.

It is noted that a good approximation of $\F(x)_{j_0}$ is $(U_1 (V_1\oslash W_1)^\top )_{j_0,*}^\top$. We denote the approximation of $\F(x)$ to be $\wt{\F}(x) = U_1 (V_1\oslash W_1)^\top$.

It is noted that a good approximation of $\Q(x)_{j_0}$ is $(U_2 (V_2\oslash W_2)^\top )_{j_0,*}^\top$.
Let denote the approximation of $\Q(x)$ to be $\wt{\Q}(x) = U_2 (V_2\oslash W_2)^\top$. 

Suppose that $\wt{\mathsf{R}}(x)_{j_0 }: = \langle \wt{\F}(x)_{j_0} , \wt{\Q}(x)_{j_0} \rangle = (U_1 (V_1 \oslash W_1 )^\top )_{j_0,*} \cdot (U_2 (V_2 \oslash W_2)^\top )_{j_0,*}^\top$.

For the side of computation time, we compute $V_1^\top V_2$ first and this takes $n^{1+o(1)}$ time.
Then, we compute $W_1^\top W_2$ and this also takes $n^{1+o(1)}$ time.

Next, we have
\begin{align*}
\wt{\mathsf{R}}(x)_{j_0} 
= & ~ (U_1 (V_1 \oslash W_1 )^\top )_{j_0,*} \cdot (U_2 (V_2 \oslash W_2)^\top )_{j_0,*}^\top \\
= & ~ \underbrace{ (U_1)_{j_0,*} }_{1 \times k_1} \underbrace{ (V_1 \oslash W_1)^\top }_{k_1 \times n^2} \underbrace{( V_2 \oslash W_2) }_{n^2 \times k_2}   \underbrace{ ( (U_2)_{j_0,*} )^\top }_{k_2 \times 1} \\
= & ~ \underbrace{ (U_1)_{j_0,*} }_{1 \times k_1} ( \underbrace{ (V_1^\top V_2) }_{k_1 \times k_2} \circ \underbrace{ (W_1^\top W_2) }_{k_1 \times k_2} )  \underbrace{ ( (U_2)_{j_0,*} )^\top }_{k_2 \times 1} 
\end{align*}
where the first step follows from the definition of $\mathsf{R}(x)$, the second step follows from $(AB)_{j_0,*} = e_{j_0} (AB) = (e_{j_0} A ) B =  A_{j_0,*} B$ for any matrices $A$ and $B$, and the third step is due to Lemma~\ref{lem:fast_tensor_oslash_computation}. 

Once we have pre-computed $V_1^\top V_2 \in \R^{k_1 \times k_2}$ and $W_1^\top W_2 \in \R^{k_1 \times k_2}$, the above step only takes $O(k_1k_2)$ time. Since there $n$ coordinates, so the overall time complexity is still $O(n k_1 k_2) = n^{1+o(1)}$.

We can use $\wt{\F}(x)$ and $\wt{\mathsf{R}}(x)$ to approximate $\P_b(x)$. Let $\wt{\P}_b(x) = \underbrace{ \diag(\wt{\mathsf{R}}(x)) }_{n \times n} \underbrace{ \wt{\F}(x) }_{n \times n^2}$. Because $\diag(\wt{\mathsf{R}}(x))$ is a diagonal matrix and $\wt{\F}(x)$ has low-rank representation, then obviously we know how to construct $U_4, V_4, W_4$. Basically $U_4 =  \diag(\wt{\mathsf{R}}(x) )  U_1$ and $V_4  =V_1$, $W_4 = W_1$.

Now, we need to control the error, and we have
\begin{align*}
    & ~ \| U_4 (V_4 \oslash W_4 )^\top  - \P_b(x) \|_{\infty} \\
    = & ~ \| \wt{\P}_b(x) - \P_b(x) \|_{\infty} \\
    = & ~ \max_{j_0 \in [n] } \| \wt{\F}(x)_{j_0} \wt{\mathsf{R}}(x)_{j_0} - \F(x)_{j_0} \mathsf{R}(x)_{j_0} \|_{\infty} \\
    = & ~ \max_{j_0 \in [n] } \| \wt{\F}(x)_{j_0} \wt{\mathsf{R}}(x)_{j_0} - \wt{\F}(x)_{j_0} \mathsf{R}(x)_{j_0} + \wt{\F}(x)_{j_0} \mathsf{R}(x)_{j_0}   - \F(x)_{j_0} \mathsf{R}(x)_{j_0} \|_{\infty} \\
    \leq & ~ \max_{j_0 \in [n] } \| \wt{\F}(x)_{j_0} \wt{\mathsf{R}}(x)_{j_0} - \wt{\F}(x)_{j_0} \mathsf{R}(x)_{j_0} \|_{\infty} + \| \wt{\F}(x)_{j_0} \mathsf{R}(x)_{j_0}   - \F(x)_{j_0} \mathsf{R}(x)_{j_0} \|_{\infty} 
\end{align*}
where the first step is due to the definition of $\wt{\P}_b(x)$, the second step follows from the definition of $\P_b(x)$ and $\wt{\P}_b(x)$, the third step follows from simple algebra, and the last step follows from triangle inequality.

For the 1st term, we have
\begin{align*}
    \max_{j_0 \in [n] } \| \wt{\F}(x)_{j_0} \wt{\mathsf{R}}(x)_{j_0} - \wt{\F}(x)_{j_0} \mathsf{R}(x)_{j_0} \|_{\infty}
    \leq & ~ \max_{j_0 \in [n] }  \| \wt{\F}(x)_{j_0} \|_{\infty} \cdot | \wt{\mathsf{R}}(x)_{j_0} - \mathsf{R}(x)_{j_0}| \\
    \leq & ~ \epsilon / \poly(n)
\end{align*}
For the 2nd term, we have
\begin{align*}
    \max_{j_0 \in [n] } \| \wt{\F}(x)_{j_0} \mathsf{R}(x)_{j_0}   - \F(x)_{j_0} \mathsf{R}(x)_{j_0} \|_{\infty} \leq & ~ \max_{j_0 \in [n]} \| \wt{\F}(x)_{j_0} - \F(x)_{j_0} \|_{\infty} \cdot | \mathsf{R}(x)_{j_0}| \\
    \leq & ~ \epsilon/ \poly(n)
\end{align*}

We complete the proof, by using all three equations we derived above. 
\end{proof}

\subsection{Gradient computation in almost linear time by low rank tensor approximation}\label{sec:low_rank_final}
We now present our main result regarding the time complexity of our algorithm.

\begin{theorem}[Main result for fast gradient computation, formal version of Theorem~\ref{thm:mainalg:informal}]\label{thm:mainalg:formal}
Assuming the entries of $A_1, A_2, $ $A_3, $ $A_4, A_5, E \in \R^{n \times d}$ and $ X_1, X_2, X_3, Y_1, Y_2 \in \R^{d \times d}$ are represented using $O(\log n)$ bits. 
Then, there exist an algorithm that runs in $n^{1+o(1)}$ time to solve $\mathsf{ATAttLGC}(n, d = O(\log n), B = o(\sqrt[3]{\log n} ), \epsilon = 1/ \poly(n))$ (see Definition~\ref{def:ATAttLGC}), i.e., our algorithm computes a gradient matrix $\wt{g} \in \R^{d \times d^2}$ satisfying $\| \frac{\d \Loss(X)}{\d X} - \wt{g} \|_{\infty} \leq 1/ \poly(n)$.
\end{theorem}
\begin{proof}[Proof of Theorem~\ref{thm:mainalg:informal}]
Given size $n \times n^2$  matrices $\P(x)$ (see Definition~\ref{def:p}), $\P_a(x)$ (see Lemma~\ref{lem:low_rank_p2}) and $\P_b(x)$ (see Lemma~\ref{lem:low_rank_p1}), obviously we know
\begin{align*}
    \P(x) = \P_a(x) - \P_b(x).
\end{align*}

By applying Lemma~\ref{lem:low_rank_f}, Lemma~\ref{lem:low_rank_c}, and Lemma~\ref{lem:low_rank_q}, we confirm that the assumptions in Lemma~\ref{lem:low_rank_p1} and Lemma~\ref{lem:low_rank_p2} hold true. Therefore, we can utilize Lemma~\ref{lem:low_rank_p1} and Lemma~\ref{lem:low_rank_p2} to conclude that
\begin{itemize}
     \item Let $k_3 = n^{o(1)}$. We know that $\P_a(x)$ has approximate low rank representation $U_3, V_3, W_3 \in \R^{n \times k_3}$, let $\wt{\P}_a(x)$ denote $U_3 (V_3\oslash W_3)^\top$.
    \item Let $k_4 = n^{o(1)}$. We know that $\P_b(x)$ has approximate low rank representation $U_4, V_4, W_4 \in \R^{n \times k_4}$, let $\wt{\P}_b(x)$ denote $U_4 (V_4 \oslash W_4)^\top$.
    \item Let $U_5,V_5, W_5 \in \R^{n \times k_5}$ denote the approximate low rank representation for $\P(x)$, call it $\wt{\P}(x) = U_5 (V_5 \oslash W_5)^\top$. We have $k_5 \le k_3 + k_4 = n^{o(1)}$.
\end{itemize}
Thus, Lemmas~\ref{lem:low_rank_f}, \ref{lem:low_rank_c}, \ref{lem:low_rank_q}, \ref{lem:low_rank_p1} and \ref{lem:low_rank_p2} all are taking $n^{1+o(1)}$ time to compute.

From the Lemma~\ref{lem:compute_gradient}, we know that
\begin{align*}
    \frac{\d \Loss(x)}{\d x} = \vect( A_1^\top \P(x) (A_2 \otimes A_3) )
\end{align*}

We use $ \vect( A_1^\top \wt{\P}(x) (A_2 \otimes A_3) )$ to do approximation, then
\begin{align*}
\vect( \underbrace{A_1^\top}_{{d\times n}} \underbrace{\wt{\P}(x)}_{{n \times n^2}} \underbrace{(A_2 \otimes A_3)}_{{n^2 \times d^2}} ) 
= & ~ \vect( \underbrace{A_1^\top}_{{d\times n}} \underbrace{\wt{\P}(x)}_{{n \times n^2}} \underbrace{(A_2^\top \otimes A_3^\top)^\top}_{{n^2 \times d^2}} ) \\
= & ~ \vect( \underbrace{[ U_5 \odot V_5 \odot W_5 ]}_{{n \times n \times n}} ( A_1^\top, A_2^\top, A_3^\top ) ) \\
= & ~ \vect ( ( (A_1^\top U_5) \odot ( A_2^\top V_5) \odot ( A_3^\top W_5)  ) ),
\end{align*}
where the first step is due to Fact~\ref{fac:tensor_oslash_ominus_transpose}, the second step is because of Claim~\ref{cla:tensor_mapping} and Fact~\ref{fac:oslash_to_odot}, and the last step follows Fact~\ref{fac:tensor_distribute}.

The above computation takes $n^{1+o(1)} d +  d^3 n^{o(1)}$ time.
So, overall time complexity is still $n^{1+o(1)}$.

Recall that $\wt g \in \R^{d\times d^2}$ and $\frac{\d \Loss(X)}{\d X} \in \R^{d\times d^2}$ .

We have 
\begin{align*}
\| \frac{\d \Loss(X)}{\d X} - \wt{g} \|_{\infty}
= & ~ \| \vect(A_1^\top \P(x) (A_2 \otimes A_3) ) -  \vect(A_1^\top \wt{\P}(x) (A_2 \otimes A_3) ) \|_{\infty}\\
= & ~ \| A_1^\top \P(x) (A_2 \otimes A_3) -  A_1^\top \wt{\P}(x) (A_2 \otimes A_3) \|_{\infty}\\
= & ~ \| A_1^\top (\P_a(x) - \P_b(x)) (A_2 \otimes A_3) -  A_1^\top ( \wt{\P}_a(x) - \wt{\P}_b(x) ) (A_2 \otimes A_3) \|_{\infty}\\
\leq & ~ \| A_1^\top ( \P_a(x) - \wt{\P}_a(x) ) (A_2 \otimes A_3) \|_{\infty} + \| A_1^\top ( \P_b(x) - \wt{\P}_b(x) ) (A_2 \otimes A_3) \|_{\infty} \\
\leq & ~ \| A_1 \|_{\infty} \| A_2 \|_{\infty} \| A_3 \|_{\infty} \cdot n^3 \cdot (  \| \P_a(x) - \wt{\P}_a(x) \|_{\infty} + \| \P_b(x) - \wt{\P}_b(x) \|_{\infty} ) \\
\leq & ~ \epsilon / \poly(n)
\end{align*}
where the 1st step is due to definition of $\frac{\d \Loss(X)}{\d X}$ in the above, the 2nd step follows from the definition of $\vect(\cdot)$, the 3rd step follows from simple algebra, the 4th step follows from triangle inequality, the 5th step follows from $\| \mathsf{T} (A_1,A_2,A_3) \|_{\infty} \leq n^3 \cdot \| \mathsf{T} \|_{\infty} \cdot \| A_1 \|_{\infty} \cdot \| A_2 \|_{\infty} \cdot \| A_3 \|_{\infty}$, where $\mathsf{T}$ is a tensor, and the last step follows from entries in $A_1,A_2,A_3$ are bounded, and $\| \P_a(x) - \wt{\P}_a(x) \|_{\infty} \leq \epsilon/\poly(n)$, $\| \P_b(x) - \wt{\P}_b(x) \|_{\infty} \leq \epsilon/\poly(n)$.

By picking $\epsilon = 1/\poly(n)$, we complete the proof.
\end{proof}

\section{Hardness}\label{sec:hardness}

In this section, we will show the hardness of our algorithm. In Section~\ref{sec:hardness:tools}, we provide some useful tools for our results. In Section~\ref{sec:hardness:main_result},we present our main hardness results.

\subsection{Tools for backward complexity }\label{sec:hardness:tools}

Next, we demonstrate that the tensor attention optimization problem (see Definition~\ref{def:attention_optimization_loss}) exhibits favorable behavior when applied to matrices constrained as described in Lemma~\ref{lem:prevlemma}:

\begin{lemma} \label{lem:boundedderivs}
    Suppose that a fixed matrix $H \in \R^{n \times n^2}$ with entries in the interval $[1, B_a]$ satisfying that more than half entries of $H$ in each row are equal to $B_a$. Let a matrix $V \in \R^{n^2 \times d}$ with entries in $\{0,1\}$. For $\lambda \in \R$, let us define $M_\lambda := \exp(\lambda H) \in \R^{n \times n^2}$. We denote the function $f : \R \to \R$ as
    \begin{align*}
        f(\lambda) := \| \underbrace{\diag(M_\lambda {\bf 1}_{n^2})^{-1}}_{n \times n} \underbrace{M_\lambda}_{n \times n^2} \underbrace{V}_{n^2 \times d} \|_F^2,
    \end{align*}
    Then, for every $\lambda \in \R$ we get
    \begin{itemize}
        \item $|f'(\lambda)| \leq O(B_a n {{}{}{} d})$, 
        \item $|f''(\lambda)| \leq O(B_a^2 n {{}{}{} d})$.
    \end{itemize}
\end{lemma}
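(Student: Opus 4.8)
The plan is to bound $f'(\lambda)$ and $f''(\lambda)$ by differentiating the explicit softmax expression entrywise and controlling each factor using the structural constraints on $H$ and $V$. Write $M_\lambda = \exp(\lambda H)$, $\alpha(\lambda)_i := (M_\lambda {\bf 1}_{n^2})_i = \sum_{k} \exp(\lambda H_{i,k})$, and let $S(\lambda) := \diag(\alpha(\lambda))^{-1} M_\lambda \in \R^{n \times n^2}$ be the row-stochastic softmax matrix, so that $f(\lambda) = \|S(\lambda) V\|_F^2 = \sum_{i \in [n]} \sum_{j \in [d]} (S(\lambda)_{i,*} V_{*,j})^2$. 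First I would record the basic derivative facts: $\frac{\d}{\d\lambda} \exp(\lambda H_{i,k}) = H_{i,k}\exp(\lambda H_{i,k})$, hence $\frac{\d}{\d\lambda}\alpha(\lambda)_i = \sum_k H_{i,k}\exp(\lambda H_{i,k})$, and the $i$-th row $S(\lambda)_{i,*}$ has derivative $\frac{\d}{\d\lambda}S(\lambda)_{i,k} = (H_{i,k} - \bar H_i(\lambda))\, S(\lambda)_{i,k}$, where $\bar H_i(\lambda) := \sum_{k} H_{i,k} S(\lambda)_{i,k} \in [1, B_a]$ is the softmax-weighted row average. The key point is that $|H_{i,k} - \bar H_i(\lambda)| \le B_a$ since all entries of $H$ lie in $[1, B_a]$; this is exactly the role of the bounded-entry hypothesis (the ``more than half entries equal $B_a$'' condition is not needed for the upper bound, only for the matching lower-bound construction elsewhere).

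Next I would assemble these pieces. Since $V \in \{0,1\}^{n^2 \times d}$, each column sum $S(\lambda)_{i,*} V_{*,j} = \sum_{k : V_{k,j}=1} S(\lambda)_{i,k} \in [0,1]$ because $S(\lambda)_{i,*}$ is a probability vector. Let $g_{i,j}(\lambda) := S(\lambda)_{i,*}V_{*,j}$, so $f(\lambda) = \sum_{i,j} g_{i,j}(\lambda)^2$ with $0 \le g_{i,j} \le 1$. Differentiating, $g_{i,j}'(\lambda) = \sum_k (H_{i,k} - \bar H_i(\lambda)) S(\lambda)_{i,k} V_{k,j}$, so $|g_{i,j}'(\lambda)| \le \max_k |H_{i,k} - \bar H_i(\lambda)| \cdot \sum_k S(\lambda)_{i,k}|V_{k,j}| \le B_a \cdot 1 = B_a$. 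Then $f'(\lambda) = 2\sum_{i,j} g_{i,j}(\lambda) g_{i,j}'(\lambda)$ gives $|f'(\lambda)| \le 2 \sum_{i,j} |g_{i,j}(\lambda)|\,|g_{i,j}'(\lambda)| \le 2 \cdot nd \cdot 1 \cdot B_a = O(B_a n d)$, which is the first claimed bound.

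For the second derivative, $f''(\lambda) = 2\sum_{i,j}\big( g_{i,j}'(\lambda)^2 + g_{i,j}(\lambda) g_{i,j}''(\lambda)\big)$. The first term contributes $\le 2\,nd\,B_a^2$. For $g_{i,j}''$, differentiate $g_{i,j}'(\lambda) = \sum_k (H_{i,k} - \bar H_i(\lambda)) S(\lambda)_{i,k} V_{k,j}$ once more; using $\frac{\d}{\d\lambda}\bar H_i(\lambda) = \sum_k (H_{i,k} - \bar H_i(\lambda)) H_{i,k} S(\lambda)_{i,k}$ (the softmax ``variance'' identity, bounded by $B_a^2$ in absolute value), and $\frac{\d}{\d\lambda}S(\lambda)_{i,k} = (H_{i,k}-\bar H_i(\lambda))S(\lambda)_{i,k}$, one gets $g_{i,j}''(\lambda)$ as a sum of terms each of the form (bounded scalar of size $O(B_a^2)$) times a subprobability mass $\le 1$; hence $|g_{i,j}''(\lambda)| \le O(B_a^2)$. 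Therefore $|f''(\lambda)| \le 2\,nd\,B_a^2 + 2\,nd\cdot O(B_a^2) = O(B_a^2 n d)$, the second claimed bound.

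\textbf{Main obstacle.} The only mildly delicate step is the bookkeeping for $g_{i,j}''(\lambda)$: one must carefully expand the derivative of a product of three $\lambda$-dependent factors ($H_{i,k}-\bar H_i$, $S_{i,k}$, and the implicit normalization hidden in $S_{i,k}$) and check that every resulting term is a constant (at most $O(B_a^2)$) times a quantity of the form $\sum_k S(\lambda)_{i,k} V_{k,j}$ or $\sum_k S(\lambda)_{i,k} H_{i,k}^a V_{k,j}$ with $a\le 2$, each of which is $\le B_a^2$ because $S(\lambda)_{i,*}$ is a probability distribution and $V$ is $0/1$. This is purely routine but is where a sign or missing-term error would most easily creep in; keeping everything in terms of the softmax-expectation notation $\mathbb{E}_{k \sim S(\lambda)_{i,*}}[\,\cdot\,]$ makes the cancellations transparent and the bounds immediate.
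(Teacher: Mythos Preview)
Your proof is correct and takes a genuinely different route from the paper's own argument. The paper writes $f(\lambda) = \sum_{i} g(\lambda,i)/h(\lambda,i)$ with $h(\lambda,i) = \big(\sum_k e^{\lambda H_{i,k}}\big)^2$ and $g(\lambda,i) = \sum_{\ell} \big(\sum_{j \in S_\ell} e^{\lambda H_{i,j}}\big)^2$, bounds the numerator, denominator, and their derivatives separately, and then applies the quotient rule. In that approach the hypothesis ``more than half of the entries in each row equal $B_a$'' is actually used: it supplies the lower bound $h(\lambda,i) \ge (n^2/2)^2 e^{2B_a\lambda}$ needed to control the ratios $g'/h$ and $g h'/h^2$.

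Your approach instead keeps the normalization inside the expression by working directly with the row-stochastic matrix $S(\lambda)$, exploits the softmax derivative identity $S_{i,k}' = (H_{i,k}-\bar H_i)S_{i,k}$, and uses only that $S(\lambda)_{i,*}$ is a probability vector and that $|H_{i,k}-\bar H_i|\le B_a$. This is more elementary, avoids the separate numerator/denominator estimates, and, as you correctly point out, does not need the ``more than half equal $B_a$'' assumption at all. The paper's method buys nothing extra here; your probabilistic-expectation framing is strictly cleaner for this lemma. The only place to be careful is exactly where you flag it: the product-rule expansion of $g_{i,j}''$ yields the two terms $-\bar H_i'\, g_{i,j}$ and $\sum_k (H_{i,k}-\bar H_i)^2 S_{i,k} V_{k,j}$, each bounded by $B_a^2$, so the bookkeeping goes through as you describe.
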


\begin{proof}
Let $G$ denote the $n \times n^2$ matrix $G =  \diag(M_\lambda {\bf 1}_n)^{-1} M_\lambda$. For $i \in [n],j \in [n^2]$, we  calculate that ${M_{\lambda}}_{i,j} = e^{\lambda H_{i,j}}$ and so
\begin{align*}
G_{i,j} = \frac{e^{\lambda H_{i,j}}}{\sum_{k=1}^{n^2} e^{\lambda H_{i,k}}}.
\end{align*}

For $\ell \in [d]$, let $S_\ell \subseteq [n^2]$ represent the set of $1$s in column $\ell$ of $V$, defined as $S_\ell = \{ j \in [n^2] \mid V_{j,\ell} = 1\}$.
Therefore, for each $i \in [n], \ell \in [d]$, the $(i,\ell)$ entry of the matrix $\diag(M_\lambda {\bf 1}_n)^{-1} M_\lambda V$ can be shown that
\begin{align*}
    (\diag(M_\lambda {\bf 1}_n)^{-1} M_\lambda V )_{i,\ell} &= (GV)_{i,\ell} 
    \\ &= \sum_{j=1}^{n^2} G_{i,j} V_{j,\ell}
    \\ &= \sum_{j \in S_\ell} G_{i,j}
    \\ &= \frac{\sum_{j \in S_\ell} e^{\lambda H_{i,j}}}{\sum_{k=1}^{n^2} e^{\lambda H_{i,k}}}.
\end{align*}
where the 1st step comes from definition, the 2nd step is due to simple algebra, the 3rd step is because of definition of $S_{\ell}$, and the last step comes from definition of $G$.

Thus, we obtain:

\begin{align*}
f(\lambda) &= \sum_{i = 1}^n \frac{\sum_{\ell=1}^d \left(\sum_{j \in S_\ell} e^{\lambda H_{i,j}} \right)^2}{\left(\sum_{k=1}^{n^2} e^{\lambda H_{i,k}}\right)^2}
\\ &= \sum_{i = 1}^n \frac{\sum_{\ell=1}^d \sum_{j_1 \in S_\ell} \sum_{j_2 \in S_\ell} e^{\lambda (H_{i,j_1} + H_{i,j_2})}}{\sum_{k_1=1}^{n^2} \sum_{k_2=1}^{n^2} e^{\lambda (H_{i,k_1} + H_{i,k_2})}}.
\end{align*}

We define 
\begin{align*}
g(\lambda,i) := \sum_{\ell=1}^d \sum_{j_1 \in S_\ell} \sum_{j_2 \in S_\ell} e^{\lambda (H_{i,j_1} + H_{i,j_2})}.
\end{align*}
We also define
\begin{align*}
h(\lambda,i) := \sum_{k_1=1}^{n^2} \sum_{k_2=1}^{n^2} e^{\lambda (H_{i,k_1} + H_{i,k_2})}
\end{align*}
By the previous three equations, we have:
\begin{align*}
    f(\lambda) = \sum_{i=1}^n g(\lambda,i)/h(\lambda,i).
\end{align*}

As at least half of the entries in each row of $H$ are equal to $B_a$ and all entries lie within the interval $[1, B_a]$, we can bound:
\begin{align}\label{eq:bound_on_b_lambda_i}
\left( \frac{n^2}{2} \right)^2 \cdot e^{2B_a\lambda} \leq h(\lambda,i) \leq \left( n \right)^4 \cdot e^{2B_a\lambda}.
\end{align}

Furthermore, since the derivative of $e^{\lambda (H_{i,k_1} + H_{i,k_2})}$ with respect to $\lambda$ is $(H_{i,k_1} + H_{i,k_2}) \cdot e^{\lambda (H_{i,k_1} + H_{i,k_2})}$, we can bound
\begin{align}\label{eq:bound_on_d_b_lambda_i}
2 \cdot h(\lambda,i) \leq \frac{\d h(\lambda,i)}{\d \lambda} \leq 2B_a \cdot h(\lambda,i).
\end{align}

We may similarly bound
\begin{align}\label{eq:bound_on_a_lambda_i}
0 \leq g(\lambda,i) \leq {{}{}{} d \cdot } n^4 \cdot e^{2B_a\lambda},
\end{align}
and
\begin{align}\label{eq:bound_on_d_a_lambda_i}
2 \cdot g(\lambda,i) \leq \frac{\d g(\lambda,i)}{\d \lambda} \leq 2B_a \cdot g(\lambda,i).
\end{align}

The derivative of $f$ can be bounded by (where the $'$ notation denotes the derivative w.r.t. $\lambda$):
\begin{align*}
    f'(\lambda) &= \sum_{i=1}^n \frac{g'(\lambda,i) \cdot h(\lambda,i) - g(\lambda,i) \cdot h'(\lambda,i)}{(h(\lambda,i))^2}
    \\ &\leq \sum_{i=1}^n \frac{g'(\lambda,i) \cdot h(\lambda,i) }{(h(\lambda,i))^2}
    \\ &= \sum_{i=1}^n \frac{g'(\lambda,i)  }{h(\lambda,i)}
    \\ &\leq \sum_{i=1}^n \frac{2B_a {{}{}{} d} \cdot n^4 e^{2B_a\lambda}}{(n^2/2)^2 \cdot e^{2B_a\lambda}}
    \\ &= \sum_{i=1}^n 8B_a {{}{}{} d}
    \\ &= 8B_a \cdot n  {{}{}{} d}.
\end{align*}
where the first step is due to the calculation of derivative, the second step is due to basic algebra, the third step is because of cancelling $h(\lambda,i)$, the fourth step is by  Eq.~\eqref{eq:bound_on_b_lambda_i} ($h(\lambda,i)$ term) and Eq.~\eqref{eq:bound_on_d_a_lambda_i} ( $g'(\lambda,i)$ term), the fifth step is due to basic algebra, and the last step is due to basic algebra.

In a similar manner, a lower bound for $f'(\lambda)$ can be,
\begin{align*}
    f'(\lambda) &= \sum_{i=1}^n \frac{g'(\lambda,i) \cdot h(\lambda,i) - g(\lambda,i) \cdot h'(\lambda,i)}{(h(\lambda,i))^2}
    \\ &\geq - \sum_{i=1}^n \frac{g(\lambda,i) \cdot h'(\lambda,i)}{(h(\lambda,i))^2}
    \\ &\geq - \sum_{i=1}^n \frac{({{}{}{} d} n^4 \cdot e^{2B_a\lambda}) \cdot (2B_a \cdot h(\lambda,i))}{((n^2/2)^2 \cdot e^{2B_a\lambda}) \cdot (h(\lambda,i))}
    \\ &= - \sum_{i=1}^n 8B_a{{}{}{} d}
    \\ &= - 8B_a \cdot n{{}{}{} d}.
\end{align*}
where the first step is due to the definition, the second step is due to basic algebra, the third step comes from Eq.~\eqref{eq:bound_on_b_lambda_i} ($h(\lambda,i)$ term), Eq.~\eqref{eq:bound_on_d_b_lambda_i} ($h'(\lambda,i) $term), and Eq.~\eqref{eq:bound_on_a_lambda_i} ($g(\lambda,i)$ term), the fourth step is due to basic algebra, and the final step comes from basic algebra.

Finally, we let $f(\lambda,i) := \frac{g(\lambda,i)}{h(\lambda,i)}$, and we can have $f''(\lambda)$ is equal to the following using the quotient rule:
\begin{align*}
    \sum_{i=1}^n \frac{g''(\lambda,i)  - h''(\lambda,i) \cdot f(\lambda,i) - 2 \cdot h'(\lambda,i) \cdot f'(\lambda,i)}{h(\lambda,i)},
\end{align*}
which we can likewise bound in magnitude by $O(B_a^2 n{{}{}{} d})$.
\end{proof}

We have the following tool from previous work.
\begin{lemma}[Lemma 5.4 in~\cite{as24_arxiv}]\label{calculus}
    Suppose that $f : [0,1] \to \R$ is a twice-differentiable function that satisfy $|f''(\lambda)| \leq b$ for all $\lambda \in [0,1]$. And for any positive integer $t$, we define
    \begin{align*}
        s_t := \sum_{i=0}^{t-1} \frac{f'(i/t)}{t}
    \end{align*}
    Then, we have
    \begin{align*}
        |s_t - (f(1) - f(0))| \leq b/t.
    \end{align*}
\end{lemma}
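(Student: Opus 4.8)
The plan is to reduce the statement to a telescoping sum plus a controlled remainder coming from a second-order Taylor expansion of $f$ on each of the $t$ subintervals $[i/t,(i+1)/t]$.

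First I would apply Taylor's theorem with the Lagrange form of the remainder on each subinterval: since $f$ is twice-differentiable on $[0,1]$, for every $i \in \{0,1,\dots,t-1\}$ there is a point $\xi_i \in (i/t,(i+1)/t)$ with
\begin{align*}
f((i+1)/t) = f(i/t) + \frac{1}{t} f'(i/t) + \frac{1}{2t^2} f''(\xi_i).
\end{align*}
Rearranging this identity gives $\frac{1}{t} f'(i/t) = \big(f((i+1)/t) - f(i/t)\big) - \frac{1}{2t^2} f''(\xi_i)$, which isolates exactly the $i$-th term of $s_t$.

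Next I would sum over $i = 0,\dots,t-1$. The differences $f((i+1)/t)-f(i/t)$ telescope, so $\sum_{i=0}^{t-1}\big(f((i+1)/t)-f(i/t)\big) = f(1) - f(0)$, while the left-hand side is by definition $s_t$. Hence
\begin{align*}
s_t - \big(f(1)-f(0)\big) = - \frac{1}{2t^2}\sum_{i=0}^{t-1} f''(\xi_i).
\end{align*}
Finally, using the hypothesis $|f''(\lambda)| \le b$ for all $\lambda \in [0,1]$ together with the triangle inequality, the magnitude of the right-hand side is at most $\frac{1}{2t^2} \cdot t \cdot b = \frac{b}{2t} \le \frac{b}{t}$, which yields the claimed bound.

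There is essentially no hard step here; this is a standard quadrature error estimate, and the main obstacle is just careful bookkeeping — invoking Taylor's theorem so each $\xi_i$ lies in the open subinterval with remainder exactly $\frac{1}{2t^2}f''(\xi_i)$, and handling the telescoping sum. As an alternative route that avoids Taylor's theorem, one could instead write $f(1)-f(0) = \int_0^1 f'(\lambda)\,\d\lambda$ by the fundamental theorem of calculus and $s_t = \sum_{i=0}^{t-1}\int_{i/t}^{(i+1)/t} f'(i/t)\,\d\lambda$, then bound $|f'(\lambda)-f'(i/t)| \le b\,|\lambda - i/t| \le b/t$ on each subinterval via the mean value theorem applied to $f'$; integrating and summing again gives the bound $b/t$ (in fact $b/(2t)$).
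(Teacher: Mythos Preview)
Your proof is correct. The paper does not actually prove this lemma; it simply cites it as Lemma~5.4 in \cite{as24_arxiv} and uses it as a black box, so there is no in-paper argument to compare against. Your Taylor-expansion-plus-telescoping argument (and the alternative via the fundamental theorem of calculus) is the standard way to establish this left-endpoint quadrature error bound, and it even yields the slightly sharper constant $b/(2t)$.
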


\subsection{Main result for lower bound }\label{sec:hardness:main_result}

Finally, we are prepared to present our main result:

\begin{theorem}[Main result for hardness, formal of Theorem~\ref{thm:mainlb:informal}]\label{thm:mainlb:formal}
    Let $\gamma : \mathbb{N} \to \mathbb{N}$ be any function with $\gamma(n) = o(\log n)$ and $\gamma(n) = \omega(1)$.
    Assuming $\mathsf{SETH}$, for any constant $\delta>0$, it is impossible to solve $\mathsf{ATAttLGC}(n,d= \Theta(\log n), B= \Theta(\sqrt[3]{\gamma(n) \cdot \log n }), \epsilon = O(1/(\log n)^4))$ (Definition~\ref{def:ATAttLGC})  in time $O(n^{{{}{}{} 3} - \delta})$  
    when $E = 0$, $\mathsf{Y} =  \mathsf{I}_{d}$, $\mathsf{X} = \lambda \mathsf{I}_{d}$ for some scalar $\lambda \in [0,1]$.
\end{theorem}
\begin{proof}[Proof of Theorem~\ref{thm:mainlb:informal}]
    Let us assume that such an algorithm do exist. Then we can call it $O((\log n)^{11})$ times to refute Lemma~\ref{lem:prevlemma} using parameter $\gamma = \gamma(n)$, i.e., we can get $f(1)$ by solving $\mathsf{ATAttLGC}$ with $O((\log n)^{11})$ times.

    Suppose that $\mathsf{I}_{d} \in \R^{d \times d \times d}$ is an identity tensor. 
    Also suppose that the input matrices to Lemma~\ref{lem:prevlemma} are $Q,K_1,K_2,V_1,V_2$. And we set $A_1 = Q$, $A_2 = K_1$,$A_3 = K_2$, $A_4 = V_1$,$A_5 = V_2$, $Y= I$, and $X = \lambda \cdot \underbrace{ \mathrm{mat} ( \mathsf{I}_{d} ) }_{d \times d^2}$, with some $\lambda \in [0,1]$.  
    Let $f : [0,1] \to \R$ be defined in Lemma~\ref{lem:boundedderivs} where $H$ is the matrix $A_1 (A_2 \oslash A_3)^\top$, so that $M_\lambda$ is the matrix $\exp(A_1 X (A_2 \otimes A_3)^\top)$ by Fact~\ref{fac:tensor_otimes_to_oslash}. It follows from Lemma~\ref{lem:boundedderivs} and $d=\Theta(\log n)$ that 
    \begin{align*}
        |f''(\lambda)| \leq O(n {{}{}{} } \log^{{}{}{} 5} n \cdot (\gamma(n))^2),
    \end{align*}
    where $B_a = O(\gamma(n) \log^2 n )$ in Lemma~\ref{lem:boundedderivs} by the second bullet point of Lemma~\ref{lem:prevlemma}. 
    
    It is worth noting that $f(0)$ can be computed in $\tilde{O}(n)$ time because of the all-1s matrix $M_f$. Our final target is to calculate $f(1)$.
    
    From Lemma~\ref{calculus}, $f'(\lambda)$ can be computed on 
    $
        O(\log^{9} (n) (\gamma(n))^2) = O(\log^{11} n)
    $
    points up to error $O(1/(\log n)^4)$, and give back their average. 
    Because we have already chosen $X = \lambda I$, $f'(\lambda)$ can be calculated from the gradient $\frac{\d \Loss(X)}{\d X}$ in (see Definition~\ref{def:ATAttLGC}), by our assumed approximated algorithm.
\end{proof}

\end{document}